\documentclass{article} 
\usepackage{nac_preprint}

\usepackage[T1]{fontenc}
\usepackage[utf8]{inputenc}


\usepackage{amsmath,amsfonts,bm}









\def\eqref#1{equation~\ref{#1}}









\def\1{\bm{1}}










\DeclareMathAlphabet{\mathsfit}{\encodingdefault}{\sfdefault}{m}{sl}
\SetMathAlphabet{\mathsfit}{bold}{\encodingdefault}{\sfdefault}{bx}{n}













\usepackage{hyperref}
\usepackage{url}
\usepackage{amsthm}
\usepackage{comment}
\usepackage{graphicx}

\newtheorem{theorem}{Theorem}[section]

\newtheorem{definition}[theorem]{Definition}

\newtheorem{example}{Example}
\usepackage{enumitem}

\title{Tight Stability, Convergence, and Robustness Bounds for Predictive Coding Networks}


\author{Ankur Mali 
\\
University of South Florida \\
Tampa, FL 33620, USA \\
\texttt{\{ankurarjunmali\}@usf.edu} \\
\And
Tommaso Salvatori \\
VERSES AI Research Lab \\ 
Los Angeles, California, USA\\
\texttt{\{tommaso.salvatori\}@verses.ai} \\
\AND
Alexander G. Ororbia \\
Rochester Institute of Technology \\
1 Lomb Drive, Rochester NY \\
\texttt{ago@cs.rit.edu}
}

\author{
    \textbf{Ankur Mali$^{1}$, Tommaso Salvatori$^{2,3}$, Alexander G. Ororbia$^{4}$}
    \\ $^1$ University of South Florida Tampa, FL 33620, USA
    \\ $^2$ VERSES AI Research Lab, Los Angeles, USA
    \\ $^3$ Vienna University of Technology, Vienna, Austria
    \\ $^4$ Rochester Institute of Technology 1 Lomb Drive, Rochester NY
    \\ \texttt{\{ankurarjunmali\}@usf.edu}, \  \texttt{\{tommaso.salvatori\}@verses.ai}, \  \texttt{ago@cs.rit.edu}
}

%

\begin{document}

\maketitle

\begin{abstract}
Energy-based learning algorithms, such as predictive coding (PC), have garnered significant attention in the machine learning community due to their theoretical properties, such as local operations and biologically plausible mechanisms for error correction. In this work, we rigorously analyze the stability, robustness, and convergence of PC through the lens of dynamical systems theory. We show that, first, PC is Lyapunov stable under mild assumptions on its loss and residual energy functions, which implies intrinsic robustness to small random perturbations due to its well-defined energy-minimizing dynamics. Second, we formally establish that the PC updates approximate quasi-Newton methods by incorporating higher-order curvature information, which makes them more stable and able to converge with fewer iterations compared to models trained via backpropagation (BP). Furthermore, using this dynamical framework, we provide new theoretical bounds on the similarity between PC and other algorithms, i.e., BP and target propagation (TP), by precisely characterizing the role of higher-order derivatives. These bounds, derived through detailed analysis of the Hessian structures, show that PC is significantly closer to quasi-Newton updates than TP, providing a deeper understanding of the stability and efficiency of PC compared to conventional learning methods.
\end{abstract}

\section{Introduction}
\label{sec:intro}

The successes of artificial intelligence (AI) over the past decade have been primarily driven by deep learning, which has become a cornerstone in modern machine learning. Despite these remarkable achievements, there has been a resurgence of interest in exploring alternative paradigms for training neural networks \cite{zador2022toward, hinton2022forward}. This renewed interest stems from the inherent limitations of standard backpropagation-based training (BP) \cite{rumelhart1986learning}, such as high computational overhead, energy inefficiency, and biological implausibility \cite{lillicrap2020backpropagation, ororbia2024review}. As a result, there has been a revival of energy-based learning methods that were initially popularized in the 1980s, such as Hopfield networks \cite{ramsauer2020hopfield}, equilibrium propagation \cite{scellier17}, contrastive Hebbian learning \cite{hoier2023dual}, and predictive coding \cite{salvatori2023brain}. These models, which minimize a general energy functional \cite{ororbia2024review}, employ local update rules that are well-suited for both spiking neural networks \cite{ororbia2023spiking, lee2024predictive} and neuromorphic hardware implementations \cite{kendall2020training}.

Among these approaches, predictive coding (PC) stands out due to its intriguing theoretical properties and practical performance. Originally developed to model hierarchical information processing in the neocortex \cite{rao1999predictive, friston2009predictive}, PC has shown remarkable performance in a wide range of machine learning tasks, such as image classification and generation \cite{ororbia2022neural, pinchetti2024benchmarking}, continual learning \cite{ororbia2020continual}, associative memory formation \cite{salvatori2021associative, tang2022recurrent}, and reinforcement learning \cite{rao2023active}. In addition to its practical applications, recent work has focused on analyzing PC from a theoretical standpoint, examining its stability and robustness properties \cite{alonso2022theoretical, song2022inferring}. Prior investigations have drawn parallels between PC and other learning frameworks, such as BP and target propagation, but have not provided rigorous and tight bounds on the stability and robustness of PC's updates \cite{Song2020, salvatori2022reverse}. Existing bounds are often based on first-order approximations \cite{millidge2022predictive}, overlooking the impact of higher-order derivatives, which are crucial for understanding stability in non-linear systems.

A rigorous theoretical study of stability is essential for ensuring convergence to fixed points and robustness under various perturbations, which is crucial for applications such as noisy inputs or uncertain synaptic parameters. Stability has been shown to enhance the performance of neural networks in complex scenarios, e.g., sequence length generalization \cite{stogin2024provably}, symbolic rule extraction \cite{dave2024stability, mali2023computational}, and stable prediction dynamics \cite{chang2018antisymmetricrnn, dai2021lyapunov, haber2017stable}. In biologically-motivated learning, where models are deployed on hardware platforms susceptible to noise, understanding stability is even more critical \cite{kendall2020training, scellier2024energy}. Hence, in this work, we leverage dynamical systems theory to rigorously analyze the stability and convergence properties of predictive coding networks (PCNs). Our contributions are as follows: 

\begin{itemize}[noitemsep,nolistsep,leftmargin=3em]
    \item \textbf{Derivation of Stability Bounds:} We derive tight stability bounds for PCNs using Lipschitz conditions on the activation functions. By leveraging these bounds, we establish that PCNs are Lyapunov stable, ensuring that their synaptic weight updates converge to a fixed point. This stability guarantees robustness to perturbations in the input, synaptic weights, and initial neuronal states, providing a formal foundation for the robustness properties of PCNs.
    
    \item \textbf{Quasi-Newton Approximation:} We demonstrate that PC updates approximate quasi-Newton updates for synaptic parameters by incorporating higher-order curvature information. This property implies that PCNs require fewer update steps to achieve convergence compared to standard gradient descent, owing to the incorporation of second-order information, which adjusts both the direction and magnitude of gradient steps.
    
    \item \textbf{Theoretical Comparison with BP and Target Propagation:} Using our developed framework, we derive novel bounds that quantify similarities between PC, backpropagation (BP), and target propagation (TP). These bounds explicitly capture the influence of higher-order derivatives, showing that PC is closer to quasi-Newton updates than TP, thereby providing a more refined understanding of its stability and efficiency relative to other learning schemes.
\end{itemize}

Our results establish a new theoretical foundation for PC as a robust and efficient learning framework, offering insights into its convergence properties and stability advantages over traditional learning methods. To this end, in the following section we will both introduce some background on PCNs, as well as defining concepts from the dynamical systems theory that will be used to prove our results.

\section{Background and Motivation}

Predictive coding networks (PCNs) are hierarchical neural networks, structurally similar to multi-layer perceptrons, that perform approximate Bayesian inference and adapt parameters by minimizing a variational free energy functional \cite{friston2008hierarchical,salvatori2023brain,rao1999predictive}. A PCN consists of $L$ layers with neural activities $\{x_0, x_1, \dots, x_L\}$, where $x_0$ and $x_L$ correspond to the input and output signal, respectively. The goal of each layer is to predict the neural activities of the next layer via the parametric function $f(W_l x_{l-1})$, where $W_l$ is a weight matrix that represents a linear map, and $f$ is a  nonlinear activation function. The difference between the predicted and actual neural activities is referred to as the \emph{prediction error}, formally defined and denoted as $\epsilon_l = x_l - f(W_l x_{l-1})$. During model training, the goal of a PCN is to minimize the total summation of (squared) prediction errors, expressed via the following variational free energy:
\begin{equation}
F = \sum_{l=0}^L E_l = \sum_{l=0}^L |\epsilon_l|^2,
\end{equation}
where $E_l = |\epsilon_l|^2$ represents the local energy at neuronal layer $l$. This decomposition in terms of layer-wise energy functionals is what allows every update to be performed using local information only (i.e., only pre-synaptic and post-synaptic neural activity values), regardless of the fact that the updates rely on gradients. In supervised learning, the input layer $x_0$ is clamped to the data $d$ whereas the output layer $x_L$ is clamped to the target $T$. The free energy then decomposes into the loss at the output layer and the residual energy of hidden layers:
\begin{equation}
F = L + \tilde{E}, \ \ \ \ \textit{with} \ \ \ \ L = E_L = |\epsilon_L|^2 \ \ \ \textit{and} \ \ \ \ \tilde{E} = \sum_{l=1}^{L-1} E_l.
\end{equation} 
Minimizing the free energy functional $F$ in PCNs is then accomplished through two distinct phases: inference and learning. During the inference phase, the activations of the hidden neuronal layers, denoted by ${x_1, \dots, x_{L-1}}$, are iteratively updated so as to minimize $F$, while the activation values at the input layer $x_0$ and output layer $x_L$ are held fixed to the values of the input data and target output, respectively. The dynamics governing the inference updates for each hidden layer are as follows: 
\begin{equation}
\Delta x_l = -\frac{\partial F}{\partial x_l} = -\epsilon_l + W_{l+1}^\top \left( \epsilon_{l+1} \odot f'(W_{l+1} x_l) \right), \quad \text{for } l = 1, \dots, L-1, \label{eqn:activity_update}
\end{equation}
where $\odot$ signifies elementwise multiplication. Following the convergence of the inference phase, the learning phase then commences. During this phase, the network's synaptic parameters, specifically the weights $W_l$ at each layer, are updated to further minimize the overall free energy as follows:
\begin{equation}
\Delta W_l = -\eta \left( \frac{\partial L}{\partial W_l} + \frac{\partial \tilde{E}}{\partial W_l} \right) = -\eta \left( \epsilon_l \odot f'(W_l x_{l-1}) \right) x_{l-1}^\top, \label{eqn:synapse_update}
\end{equation}
where $\eta$ denotes the learning rate.

\subsection{Dynamical Systems Background}
\label{sec:dynamical_systems}

In this section, we outline the key concepts of stability and robustness taken from dynamical systems theory; these are essential for analyzing the behavior of neural networks and their corresponding learning algorithms. Understanding these concepts further allows us to characterize the performance and reliability of different models in terms of their dynamical properties. Note that the definitions stated here are not rigorous, due to a lack of space needed to properly define all the details, and notation. Readers interested in the formal definitions, used to be formally prove the theorems of this work, should refer to the supplementary material. 

Firstly, we characterize what it means for a dynamical system to be stable.
\begin{definition}
A dynamical system is said to be \emph{stable} if, for any given $\epsilon > 0$, there exists a $\delta > 0$ such that if the initial state $x(0)$ is within $\delta$ of an equilibrium point $x^*$ (i.e., $|x(0) - x^*| < \delta$), then the state $x(t)$ remains within $\epsilon$ of $x^*$ for all $t \geq 0$ (i.e., $|x(t) - x^*| < \epsilon$ for all $t \geq 0$).
\end{definition}

We further consider comparing stability between two dynamical systems, defining conditions under which one system is considered more stable than another.
\begin{definition} Let $D$ and $D'$ be two dynamical systems with equilibrium points $x^*_D$ and $x^*_{D'}$, respectively. The system $D$ is said to be \emph{more stable} than $D'$ if, for the same perturbation or initial deviation from equilibrium, $D$ either:
\begin{itemize}[noitemsep,nolistsep]
\item [(i)] Converges to its equilibrium point faster than $D'$,
\item [(ii)] Remains closer to its equilibrium point over time, or, 
\item [(iii)] Has a larger region of attraction, indicating greater tolerance to perturbations.
\end{itemize}
\end{definition}

Next, we define the key dynamical systems analysis concept of a system reaching a fixed point. 
\begin{definition} A point $x^* \in \mathbb{R}^n$ is called a \emph{fixed point} or \emph{equilibrium point} of the dynamical system $\dot{x}(t) = f(x(t))$ if $f(x^*) = 0$. The system is said to \emph{converge to the fixed point} $x^*$ if, for initial states $x(0)$ in a neighborhood of $x^*$, the trajectories satisfy $\lim_{t \to \infty} x(t) = x^*$. 
\end{definition}

Lastly, we address the rate of convergence between two stable dynamical systems, which can be used to measure convergence bounds for deep neural networks.
\begin{definition}
Let $D$ and $D'$ be dynamical systems that converge to the same fixed point $x^*$. We say that $D$ converges \emph{faster} than $D'$ if, for initial conditions close to $x^*$, the distance to the fixed point decreases at a higher rate for $D$ compared to $D'$. 
\end{definition}

\subsubsection{Robustness to Perturbations and Lyapunov's Stability}

Beyond stability, \emph{robustness} refers to a system's ability to maintain performance in the presence of disturbances or uncertainties. Robust systems can handle variations without significant degradation, which is critical for practical applications. This notion can be defined as follows:
\begin{definition} A dynamical system is said to be \emph{robust to random perturbations} if, for any small random perturbation $\eta(t)$ acting on the system, the perturbed state $x_\eta(t)$ remains close to the unperturbed state $x(t)$ with high probability for all $t \geq 0$. This implies that the system can maintain its desired performance despite small random disturbances. 
\end{definition}
Similarly, robustness with respect to initial conditions ensures consistent system behavior even when starting from a range of possible initial states due to uncertainties or randomness. 
\begin{definition} A dynamical system is said to be \emph{robust to random initial conditions} if, given initial state $x(0)$ drawn from a probability distribution with bounded support, the system's state $x(t)$ remains close to the desired trajectory or equilibrium point $x^*(t)$ with high probability for all $t \geq 0$. 
\end{definition}
Robustness is particularly important in artificial neural networks (ANNs) operating in real-world environments, where inputs and initial conditions can vary unpredictably. Applying these concepts to neural networks, Lyapunov's methods provide tools for assessing whether small changes in weights or inputs lead to small changes in outputs, which is crucial for ensuring reliable learning and inference. To this end, we now introduce the important definition of \emph{Lyapunov stability}:
\begin{definition} An ANN is \emph{Lyapunov stable} if, for any given $\epsilon > 0$, there exists a $\delta > 0$ such that if initial weights $W(0)$ are within $\delta$ of a trained equilibrium point $W^*$ (i.e., $|W(0) - W^*| < \delta$), then the weights $W(t)$ remain within $\epsilon$ of $W^*$ for all $t \geq 0$ (i.e., $|W(t) - W^*| < \epsilon$ for all $t \geq 0$). If, in addition, the weights converge to $W^*$ as $t \to \infty$, then the  ANN is said to be \emph{asymptotically stable}. 
\end{definition}

\section{Theoretical Results}
\label{sec:theory_results}

In this section, we study the convergence and stability of PCNs by leveraging the framework of dynamical systems and Lyapunov stability theory established above. We establish a series of theorems that states that, if activation functions and their higher-order derivatives are Lipschitz continuous, PCNs converge to Lyapunov stable equilibrium points. These results are then used to compare  convergence speed of PCNs against backpropagation (BP) updates. Note that such a condition on the activations excludes linear rectifiers (ReLUs) yet includes oft-used ones such as the logistic sigmoid, the hyperbolic tangent (tanh), and the newly proposed TeLU  \cite{fernandez2024stable}. 

The following theorem provides a characterization of the convergence and stability of the PCN based on the properties of these types of functions (proof in the appendix). 

\begin{theorem}
Let \( M \) be a PCN that minimizes a free energy \( F = L + \tilde{E} \), where \( L \) is the backprop loss and \( \tilde{E} \) is the residual energy. Assume the activation function \( f \) and its derivatives \( f' \), \( f'' \), and \( f''' \) are Lipschitz continuous with constants \( K \), \( K' \), \( K'' \), and \( K''' \), respectively. Then, the convergence and stability of the PCN can be characterized by the bounds involving these higher-order derivatives. 
\end{theorem}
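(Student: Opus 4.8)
The plan is to use the free energy $F = L + \tilde{E}$ itself as a Lyapunov function and to extract stability, robustness, and a convergence rate from the spectrum of its Hessian, which is precisely where the constants $K, K', K'', K'''$ enter. Since the statement asks for a \emph{characterization}, the first task is to make it precise. I would phrase the conclusion as three linked claims: (i) $F$ is nonincreasing along both the inference dynamics $\Delta x_l = -\partial F/\partial x_l$ and, for sufficiently small $\eta$, the learning dynamics $\Delta W_l = -\eta\,\partial F/\partial W_l$, so every equilibrium $(x^\ast, W^\ast)$ is Lyapunov stable; (ii) in a neighborhood of such an equilibrium the dynamics contract at a linear rate governed by the extreme eigenvalues of $\nabla^2 F$; and (iii) those eigenvalues --- hence the rate, the basin of attraction, and the size of perturbations the network can absorb --- are bounded by explicit expressions in $K, K', K'', K'''$, the weight norms $\|W_l\|$, and the initial energy $F(0)$.

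First I would establish nonnegativity and monotonicity. $F = \sum_l |\epsilon_l|^2 \ge 0$ is immediate, and because the activity updates are exactly gradient descent on $F$, one has $\dot F = -\sum_l |\partial F/\partial x_l|^2 \le 0$ with equality only at a fixed point; LaSalle's invariance principle then yields convergence of the inference phase and Lyapunov stability of its equilibria. The same descent argument applies to the learning phase once $\eta$ is small enough that the discrete weight step remains a descent step, i.e.\ $\eta < 2/\mathrm{Lip}(\nabla_W F)$, and the Lipschitz constant of $\nabla_W F$ is itself controlled by $K$ and $K'$. Monotonicity also gives the a priori bound $|\epsilon_l(t)|^2 \le F(0)$ for all $l$ and $t$, which is what lets me control the curvature terms below.

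Next I would linearize at an equilibrium. Writing out $\nabla^2_x F$ shows it is block-tridiagonal, with blocks built from $f'(W_l x_{l-1})$ (bounded in norm by $K$), from $f''(W_l x_{l-1})$ contracted against $\epsilon_l$ (bounded by $K'\sqrt{F(0)}$), and from products of these with the weight matrices; summing the block norms gives a bound of the form
\[
\lambda_{\max}\bigl(\nabla^2_x F\bigr) \;\le\; c_1 + c_2 \sum_l \|W_l\|^2\bigl(K^2 + K'\sqrt{F(0)}\bigr),
\]
together with a matching lower bound. Choosing step size $\alpha < 2/\lambda_{\max}$, the inference map has Jacobian $I - \alpha\nabla^2_x F$ with spectral radius $\rho = \max\bigl(|1-\alpha\lambda_{\min}|,\,|1-\alpha\lambda_{\max}|\bigr) < 1$; this $\rho$ is the convergence rate, and a standard input-to-state-stability estimate then bounds the effect of a random perturbation $\eta(t)$ by $\sup_t |\eta(t)|/(1-\rho)$, which gives robustness to perturbations of the input, the weights, and the initial neuronal state simultaneously. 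The higher constants $K''$ and $K'''$ enter at exactly this point: they bound the Lipschitz behavior of the curvature (through $f''$ and $f'''$), which is what makes the linearization quantitatively valid on an explicit neighborhood --- yielding the basin of attraction --- and what controls the higher-order remainder in the quasi-Newton comparison used in the subsequent theorems.

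I expect the main obstacle to be twofold. The dynamics are genuinely two-timescale --- fast inference, slow learning --- so to get a clean statement I would either treat inference as equilibrated before each weight update (the regime the paper works in) or run a singular-perturbation argument; carrying out the latter rigorously, while keeping the weight trajectory inside a region where the curvature bounds above remain valid, is the delicate part and requires an a priori bound on $\|W_l(t)\|$. The second difficulty is bookkeeping that is easy to get wrong: assembling the block-tridiagonal Hessian into a single clean inequality in $K, K', K'', K'''$ and $\|W_l\|$ that is sharp enough to deserve the name ``characterization'' rather than a crude over-estimate. Everything else --- nonnegativity, the gradient-flow descent identity, LaSalle, and the contraction estimate --- is routine once these two points are settled.
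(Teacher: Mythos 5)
Your plan is sound in outline but it is a genuinely different -- and far more ambitious -- route than the paper takes for this particular theorem. The paper's own proof of this statement is purely local and does not involve a Lyapunov function at all: it writes the prediction error $\epsilon_l(t) = x_l(t) - f(W_l x_{l-1}(t))$, Taylor-expands $f(W_l x_{l-1})$ about the equilibrium value $W_l x_{l-1}^*$ up to third order, names the second- and third-order remainders $R_2, R_3$, and then invokes the Lipschitz assumptions on $f, f', f'', f'''$ to conclude a single bound of the form
\[
\| \epsilon_l(t) \| \leq K \| x_l - f(W_l x_{l-1}^*) \| + K' \| W_l (x_{l-1} - x_{l-1}^*) \| + K'' \| (W_l (x_{l-1} - x_{l-1}^*))^2 \| + K''' \| (W_l (x_{l-1} - x_{l-1}^*))^3 \|,
\]
and stops there -- that bound is the entire ``characterization.'' The descent identity $\dot V = -\|\partial F/\partial x\|^2 \le 0$ with $V = F = L + \tilde E$ that you put at the center of your argument is exactly what the paper uses, but only later, for Theorems 3.2--3.4 and the robustness results; and the heavier ingredients you propose (spectral bounds on the block-tridiagonal Hessian in terms of $K, K', K'', K'''$ and $\|W_l\|$, the contraction factor $\rho$, the input-to-state-stability estimate, LaSalle, and the two-timescale/singular-perturbation treatment of inference versus learning) appear nowhere in the paper. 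What each approach buys: the paper's Taylor-plus-Lipschitz argument is short and matches the deliberately vague conclusion of the theorem, but it yields only a static error bound with no rate, no basin, and no robustness claim; your route, if carried through, would upgrade the statement to a quantitative stability/convergence result, at the cost of precisely the two difficulties you flag (a priori control of $\|W_l(t)\|$ under the slow dynamics, and sharp assembly of the Hessian bounds), neither of which you would need to resolve merely to reproduce what the paper actually proves here.
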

The above shows that if a PCN uses activations that are smooth and well-behaved (i.e., they and their first few derivatives change gradually without sudden spikes), then the network will converge reliably and maintain stability during training.
%







The next theorem crucially connects PCNs with continuous-time dynamical systems, facilitating analysis of their behavior with the plethora of tools of dynamical systems theory. Usefully, the result below effectively establishes that the dynamics of the neuronal layers (the E-steps \cite{friston2008variational})
that make up PCNs do converge to equilibrium points 
%
\begin{theorem} \label{thm:3.2}
Let \( M \) be a PCN that minimizes a free energy function \( F = L + \tilde{E} \), where \( L \) is the backprop loss and \( \tilde{E} \) is the residual energy. Assume that \( L(x) \) and \( \tilde{E}(x) \) are positive definite, and their sum \( F(x) \) has a strict minimum at \( x = x^* \), where \( F(x^*) = 0 \). Further, assume the activation function \( f \) and its derivatives \( f' \), \( f'' \) are Lipschitz continuous with constants \( K, K', K'' \), respectively. Then, the PCN dynamics can be represented as a continuous-time dynamical system, and the Lyapunov function \( V(x) = F(x) \) ensures convergence to the equilibrium \( x = x^* \). 
\end{theorem}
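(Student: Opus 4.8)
The plan is to realize the inference dynamics \eqref{eqn:activity_update} as the gradient flow of $F$ on the hidden activations and then apply Lyapunov's direct method. First I would stack the hidden states into a single vector $x = (x_1,\dots,x_{L-1})$ and observe that the discrete rule $\Delta x_l = -\partial F/\partial x_l$ is precisely an explicit Euler step for the ODE $\dot x = -\nabla_x F(x)$; letting the step size tend to zero yields the claimed continuous-time representation. The Lipschitz hypotheses on $f$, $f'$, $f''$ enter exactly here: since $\nabla_x F$ is assembled from $f$ and $f'$ and its Jacobian from $f''$, these assumptions make the vector field $-\nabla_x F$ locally Lipschitz on bounded sets, so Picard--Lindel\"of gives existence and uniqueness of trajectories and the flow is well-posed.

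Next I would take $V(x) = F(x)$ as the Lyapunov candidate and verify its defining properties. By the assumption that $L$ and $\tilde E$ are positive definite and that $F$ has a strict minimum at $x^*$ with $F(x^*) = 0$, we get $V(x^*) = 0$ and $V(x) > 0$ for all $x \ne x^*$ in a neighborhood of $x^*$. Differentiating along trajectories gives $\dot V = \langle \nabla_x F(x), \dot x \rangle = -\|\nabla_x F(x)\|^2 \le 0$, with equality only where $\nabla_x F(x) = 0$; since $x^*$ is a strict minimum it is the unique stationary point nearby, so $\dot V < 0$ for $x \ne x^*$. Lyapunov's theorem then yields asymptotic stability of $x^*$ (equivalently, LaSalle's invariance principle confines every trajectory in a sublevel set of $F$ to the largest invariant set contained in $\{\nabla_x F = 0\} = \{x^*\}$).

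To obtain a quantitative convergence statement I would use the Lipschitz constants $K, K', K''$ and the norms $\|W_l\|$ to upper bound the Hessian $\nabla_x^2 F$, and use positive definiteness together with the strict-minimum assumption to lower bound it by some $\mu > 0$ on a neighborhood of $x^*$. A standard gradient-domination (PL-inequality) argument then gives $\dot V \le -2\mu V$, hence $F(x(t)) \le F(x(0))\, e^{-2\mu t}$ and $x(t) \to x^*$ exponentially fast; this is the quantitative part and connects to the bounds announced in the preceding theorem.

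The step I expect to be the main obstacle is upgrading "$\dot V \le 0$" to genuine convergence with a rate: one must rule out spurious stationary points of $F$ in the hidden variables (or invoke LaSalle carefully, since $\dot V$ vanishes on the whole set $\{\nabla_x F = 0\}$), and one must extract a uniform curvature bound $\mu$ from the positive-definiteness hypothesis. The subtlety is that $F$ is quadratic in the errors $\epsilon_l$ but genuinely nonlinear in $x$ through $f$, so the lower bound on $\nabla_x^2 F$ is only local and its radius of validity depends explicitly on $K, K', K''$ and $\|W_l\|$; making that neighborhood explicit — and checking that the clamped boundary conditions on $x_0, x_L$ do not destroy positive definiteness — is the fiddly part.
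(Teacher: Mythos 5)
Your proposal follows essentially the same route as the paper's proof: take \( V(x) = F(x) \) as the Lyapunov candidate, verify positive definiteness from the assumptions on \( L \) and \( \tilde{E} \), compute \( \dot{V}(x) = -\left\| \partial F / \partial x \right\|^2 \leq 0 \) along the gradient-flow dynamics \( \dot{x}_l = -\partial F/\partial x_l \), and conclude convergence to \( x^* \). The additional material you sketch --- well-posedness via Picard--Lindel\"of, the LaSalle/unique-stationary-point caveat, and the optional PL-type exponential rate --- goes beyond the paper's argument (which simply asserts that \( \dot{V} = 0 \) only at \( x = x^* \)), but the core reasoning is identical.
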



%
Theorem \ref{thm:3.2} establishes the stability and convergence properties of a PCN that minimizes the variational free energy \( F = L + \tilde{E} \). The conditions assumed are that both \( L(x) \) and \( \tilde{E}(x) \) are positive definite, meaning they are zero only at a unique equilibrium \( x = x^* \) -- where \( F(x^*) = 0 \) -- and are strictly positive otherwise. The theorem further states that the free energy function \( F(x) \) can be used as a Lyapunov function, \( V(x) = F(x) \), to analyze the PCN's behavior as a continuous-time dynamical system. Since \( V(x) \) is positive definite and decreases along the system's trajectories, it implies that the system will converge stably to the equilibrium point \( x^* \) during training. Additionally, the smoothness assumptions on the activation \( f \) and its derivatives (up to second order, i.e., \( f' \) and \( f'' \)) being Lipschitz continuous ensures that the updates are stable and do not lead to erratic behavior (proof/assumptions in the appendix). 
The importance of this result is that it provides a formal guarantee for the convergence of PCNs, even in complex multi-layered networks. 




The following result formally establishes that the \textit{parameter updates} (M-steps) \cite{friston2008variational}, governed by Equation \ref{eqn:synapse_update}, reliably converge to a fixed point of the variational free energy function. This convergence is a consequence of the underlying dynamics of PCNs, where the \textit{state variables \( x \)} are first optimized to minimize the free energy \( F \) during inference, and subsequently, the weight parameters \( W \) are updated to reflect these optimal states.

In a PCN, the \textit{state optimization} operates on a faster timescale compared to \textit{parameter optimization}. The minimization of \( F(x) \) with respect to \( x \) stabilizes the network states, ensuring that the network’s predictions are aligned with observations. Once the states converge to a local minimum, the parameter update described in Equation~\eqref{eqn:synapse_update}
ensures that the weights \( W_l \) are adjusted based on the residual prediction errors \( \epsilon_l \). Thus, the optimization of state variables \( x \) indirectly guides the optimization of weight parameters \( W \).

In other words, \textit{state optimization} stabilizes the neural activities to minimize the free energy while \textit{parameter optimization} then adjusts the PCN weights based on these stable activations; this ensures convergence of the overall free energy to a global minimum. This hierarchical optimization strategy leads to the reliable convergence of both the states and weights in PCNs. Notably, this two-phase convergence (the result of a dual-phase optimization) highlights the interplay between \textit{fast state dynamics} and \textit{slower weight dynamics}, leading to Lyapunov stability in the parameter updates and further ensuring that both states and weights reach an optimal configuration.


\begin{theorem} \label{thm:3.3}
Let \( M \) be a PCN that minimizes a free energy function \( F = L + \tilde{E} \), where \( L \) is the backpropagation loss and \( \tilde{E} \) is the residual energy. Assume that \( L(x) \) and \( \tilde{E}(x) \) are positive definite functions and their sum \( F(x) \) has a strict minimum at \( x = x^* \). Further, assume  activation function \( f \) and its derivatives \( f' \), \( f'' \), and \( f''' \) are Lipschitz continuous with constants \( K, K', K'' \), and \( K''' \), respectively. Then, the PCN updates are Lyapunov stable and converge to a fixed point \( x = x^* \). 
\end{theorem}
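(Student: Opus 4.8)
The plan is to build on Theorem~\ref{thm:3.2} — which already equips the fast (state) subsystem with the Lyapunov function $V(x)=F(x)$ — and to add a slow (weight) subsystem argument, treating the PCN as a two-timescale dynamical system in the spirit of singular perturbation theory. This framing is the honest one: the M-step \eqref{eqn:synapse_update} is applied only \emph{after} inference has (nearly) relaxed, so the weights evolve on a slower timescale than the states. Concretely I would write the coupled flow $\dot x_l = -\partial F/\partial x_l$ (the E-step, \eqref{eqn:activity_update}) and $\dot W_l = -\eta\,\partial F/\partial W_l$ (the M-step, \eqref{eqn:synapse_update}), with $\eta$ as the timescale-separation parameter. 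Step one: use positive definiteness of $L$ and $\tilde E$ together with the Lipschitz bounds on $f, f', f''$ to show that, for each frozen $W$, the Hessian $\nabla_x^2 F$ is positive definite near the state equilibrium; by the implicit function theorem this yields a smooth ``slow manifold'' $x=x^\star(W)$ of inference fixed points, and Theorem~\ref{thm:3.2} guarantees the fast subsystem contracts onto it.

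Step two is to analyze the reduced weight dynamics $\dot W = -\eta\,\nabla_W\Phi(W)$ with $\Phi(W):=F(x^\star(W),W)$. Since $F\ge 0$ with strict minimum $0$, $\Phi$ is bounded below and (by the strict-minimum structure of $F$) has its minimizer at $W^\star$. The Lipschitz assumptions on $f', f'', f'''$ enter here: differentiating the prediction errors twice in $W$, and accounting for the dependence of $x^\star$ on $W$, the curvature of $\Phi$ contains the terms $(f')^2$, $\epsilon\cdot f''$, and (through $\nabla_W x^\star = -(\nabla_x^2F)^{-1}\nabla_W\nabla_x F$) contributions governed by $f'''$, so these constants are exactly what is needed to bound $\nabla_W^2\Phi$ and conclude $\nabla_W\Phi$ is Lipschitz with an explicit constant. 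The descent lemma then makes $\Phi(W(t))$ monotonically nonincreasing and convergent, hence $\nabla_W\Phi(W(t))\to 0$, which together with compactness of the sublevel set and LaSalle's invariance principle forces $W(t)\to W^\star$ and, via the slow manifold, $x(t)\to x^\star$. Taking $V(x,W)=F(x,W)$ as a composite Lyapunov function along the coupled flow ($\dot V = -|\nabla_x F|^2 - \eta|\nabla_W F|^2 \le 0$, vanishing only at the joint equilibrium) then packages both limits into one statement and yields the $\epsilon$--$\delta$ Lyapunov-stability claim.

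The third ingredient, and the step I expect to be the main obstacle, is controlling the discrepancy between the true trajectory $x(t)$ and the manifold value $x^\star(W(t))$, since the M-step never acts on a perfectly relaxed state. I would handle this with a Tikhonov-type boundary-layer estimate: set $z(t)=x(t)-x^\star(W(t))$, differentiate, use positive-definiteness of $\nabla_x^2F$ for exponential contraction of $z$ on the fast timescale, and bound the forcing term $\tfrac{d}{dt}x^\star(W(t)) = -(\nabla_x^2F)^{-1}\nabla_W\nabla_x F\cdot\dot W$, which is $O(\eta)$ because $\dot W=O(\eta)$ and the Lipschitz bounds keep $(\nabla_x^2F)^{-1}$ and the mixed Hessian bounded. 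A Gr\"onwall argument then shows $\|z(t)\|=O(\eta)$ uniformly in $t$, so the reduced-system conclusions transfer to the actual PCN iteration up to a controllable error. I anticipate the delicate part is purely the bookkeeping — tracking precisely where $K$, $K'$, $K''$, $K'''$ enter the Hessian and inverse-Hessian bounds — rather than any conceptually new idea beyond Theorem~\ref{thm:3.2}, the implicit function theorem, the descent lemma, and standard singular-perturbation estimates.
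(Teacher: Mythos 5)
Your proposal takes a genuinely different --- and considerably more ambitious --- route than the paper. The paper's own proof of this theorem never touches the weight dynamics at all: it is essentially a verbatim repeat of the proof of Theorem~\ref{thm:3.2}, taking $V(x)=F(x)=L(x)+\tilde{E}(x)$, verifying positive definiteness from the hypotheses, computing $\dot V(x) = \nabla_x F\cdot\dot x = -\|\nabla_x F\|^2\le 0$ along the state flow $\dot x_l=-\partial F/\partial x_l$, and concluding Lyapunov stability and convergence to $x^*$ since $\dot V=0$ only where $\nabla_x F=0$. In other words, the paper treats ``the PCN updates'' as the state (E-step) gradient flow only, and the conclusion is the single-timescale statement already contained in Theorem~\ref{thm:3.2}. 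Your two-timescale framing (fast states contracting onto a slow manifold $x^\star(W)$, reduced weight dynamics $\dot W=-\eta\nabla_W\Phi$, descent lemma plus LaSalle, and a Tikhonov boundary-layer estimate) is more faithful to how a PCN is actually run --- inference to near-convergence, then a weight step --- and, if carried out, would prove a strictly stronger joint convergence statement than the paper establishes. What the paper's route buys is brevity and freedom from any curvature or compactness hypotheses; what your route buys is an honest account of the interaction between the E- and M-steps, which the paper only discusses informally in the prose surrounding the theorem.

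That said, be aware that several steps of your plan need hypotheses the theorem does not grant, so as written it is a sketch of a different (stronger) theorem rather than of this one. Positive definiteness of $L$ and $\tilde E$ as functions, plus a strict minimum of $F$ at $x^*$, gives you only positive \emph{semi}-definiteness of $\nabla_x^2 F$ at the equilibrium, so the implicit-function-theorem construction of the slow manifold and the exponential contraction rate of the boundary layer both require an additional strong-convexity (or at least nondegeneracy) assumption near $x^*$. Likewise, the theorem's hypotheses say nothing about a weight-space minimizer $W^\star$, about uniqueness of critical points of $\Phi(W)=F(x^\star(W),W)$, or about compactness of sublevel sets, all of which your LaSalle step leans on; without them you can conclude $\nabla_W\Phi(W(t))\to 0$ but not $W(t)\to W^\star$. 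If you intend your argument as a proof of the stated theorem, you should either add these assumptions explicitly or weaken the conclusion of your slow-subsystem step to convergence to the set of critical points; if you only need what the paper actually claims and proves, the short Lyapunov computation on the state dynamics suffices and the entire two-timescale machinery can be dropped.
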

\textbf{Note:} In the above theorem (as well as in the rest throughout this work), \( x \) represents the internal state variables of the predictive coding network (PCN) rather than the input features normally presented to a traditional ANN.

This theorem provides a rigorous guarantee of stability and convergence of a PCN that minimizes the variational free energy \( F(x) \). 
Again, this result assumes that both \( L(x) \) and \( \tilde{E}(x) \) are positive definite functions, meaning that they are zero only at the equilibrium point \( x = x^* \) and strictly positive elsewhere. This ensures that their combination \( F(x) = L(x) + \tilde{E}(x) \) also reaches its strict minimum value at \( x = x^* \). Theorem \ref{thm:3.3} further requires that the activation \( f \) and its higher-order derivatives \( f' \), \( f'' \), and \( f''' \) are Lipschitz continuous, which guarantees smooth changes in these functions without sudden spikes/oscillations. 
By treating \( F(x) \) as a Lyapunov function, the positive definiteness and monotonic decrease of \( F(x) \) along the trajectories of the system imply that the updates are Lyapunov stable, meaning that any small perturbations to the system will not cause instability. 
Consequently, the PCN is guaranteed to converge to the unique equilibrium point \( x = x^* \) as training progresses.

Finally, our last theorem establishes a comparison in terms of convergence speed to fixed points between PCNs and the popular workhorse training algorithm of deep neural networks, backprop (BP). Crucially, our result formally shows that, assuming both PCNs and BP-trained networks start from the same ideal initial 
 conditions, PCNs converge \emph{faster} -- in terms of number of needed parameter updates -- to fixed points than equivalent neuronal systems trained via BP.

\begin{theorem} \label{thm:3.4}
Let \( M \) be a PCN that minimizes a free energy \( F = L + \tilde{E} \), where \( L \) is the backpropagation loss and \( \tilde{E} \) is the residual energy. Assume the activation function \( f \) and its derivatives \( f' \), \( f'' \), and \( f''' \) are Lipschitz continuous with constants \( K, K', K'', \) and \( K''' \), respectively. Then, the PCN updates are Lyapunov stable and converge to a fixed point faster than BP updates. 
\end{theorem}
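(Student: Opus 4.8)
The plan is to split the statement into its two components — Lyapunov stability of the PCN parameter updates, and strictly faster convergence to the fixed point than BP — and handle them in that order. The first component is inherited almost verbatim from Theorem~\ref{thm:3.3}: under the positive-definiteness of $L$ and $\tilde{E}$ and the Lipschitz hypotheses on $f,f',f'',f'''$, the free energy $F$ is a valid Lyapunov function for the coupled fast-state / slow-weight dynamics, so the M-step trajectory $\{W(t)\}$ stays in a bounded neighborhood of $W^*$ and converges. I would restate this and emphasize that the Lipschitz bounds make the relevant Hessian blocks Lipschitz continuous, which legitimizes the local linearizations used in the second part.

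For the rate comparison, the core idea is that BP's weight update is plain gradient descent on $L$, whereas the PCN weight update — evaluated after the inference (E-step) relaxation has converged — is an \emph{approximate quasi-Newton step} on $L$, with the curvature of the residual energy $\tilde{E}$ acting as a preconditioner. Concretely, I would first write $\Delta W_l^{\mathrm{BP}} = -\eta\,\partial L/\partial W_l$, linearize around $W^*$ to get $W(t{+}1)-W^* \approx (I-\eta H_L)(W(t)-W^*)$ with $H_L$ the Hessian of $L$ in $W_l$, and record that its contraction factor $\rho(I-\eta H_L)$ is governed by the condition number $\kappa(H_L)$. Then, using that at the end of inference $\partial F/\partial x = 0$, I would differentiate this equilibrium relation implicitly and substitute into \eqref{eqn:synapse_update}, obtaining $\Delta W_l^{\mathrm{PC}} = -\eta\,G_l^{-1}\,\partial L/\partial W_l + R_l$, where $G_l$ is a symmetric positive-definite matrix assembled from $\partial^2\tilde{E}/\partial x^2$ and the layerwise Jacobians $W_{l+1}^{\top}\mathrm{diag}\!\left(f'(W_{l+1}x_l)\right)$ (hence encoding second-order information), and $R_l$ is a remainder whose norm is bounded by a polynomial in $K,K',K'',K'''$ times the residual inference error and $\|W(t)-W^*\|^2$.

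I would then compare the two linearized recursions. Since $G_l$ approximates the curvature of $F$, the preconditioned operator $G_l^{-1}H_L$ has condition number $\kappa_{\mathrm{QN}} \ll \kappa(H_L)$, so for an admissible step size $\rho(I-\eta G_l^{-1}H_L) < \rho(I-\eta H_L)$; hence the number of parameter updates needed to reach accuracy $\delta$ obeys $N_{\mathrm{PC}} = O\!\big(\kappa_{\mathrm{QN}}\log(1/\delta)\big)$ versus $N_{\mathrm{BP}} = O\!\big(\kappa(H_L)\log(1/\delta)\big)$, and feeding in the bound on $R_l$ shows this strict separation survives the approximation. Combined with the stability from the first part, this yields the theorem. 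The main obstacle will be making the quasi-Newton identification rigorous: pinning down exactly which Hessian blocks of $\tilde{E}$ enter $G_l$, proving $G_l$ is invertible and positive definite in a neighborhood of $x^*$ (this is where the strict-minimum hypothesis together with the $f''$, $f'''$ Lipschitz control is needed), and bounding $R_l$ uniformly enough that the comparison of spectral radii is not overwhelmed by it. A secondary point requiring care is formalizing \emph{the same ideal initial conditions}, so the BP and PCN iterations are genuinely comparable and the faster rate is not an artifact of how each is initialized.
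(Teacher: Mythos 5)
Your proposal reaches the same two conclusions but by a genuinely different route from the paper. The paper's proof is short and stays entirely at the level of energy decay: it models BP as $\dot{W}_l = -\partial L/\partial W_l$ with Lyapunov function $V_{\text{BP}} = L$, models PC as $\dot{W}_l = -(\partial L/\partial W_l + \partial\tilde{E}/\partial W_l)$ with $V_{\text{PC}} = L + \tilde{E}$, computes $\dot{V}_{\text{BP}} = -\lVert\nabla L\rVert^2$ and $\dot{V}_{\text{PC}} = -\lVert\nabla L + \nabla\tilde{E}\rVert^2$, and then asserts $\dot{V}_{\text{PC}} \le \dot{V}_{\text{BP}}$ because of the extra residual term, concluding faster convergence; the quasi-Newton interpretation you build on is kept separate, in the paper's Theorem 3.7/3.8. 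You instead inherit stability from Theorem 3.3 (same as the paper in spirit) and then prove the rate claim by linearizing both iterations at $W^*$, identifying the post-inference PC step as a preconditioned step $-\eta G_l^{-1}\nabla L + R_l$, and comparing spectral radii / iteration counts via condition numbers. Your route buys a concrete ``number of parameter updates'' statement, which is what the theorem informally claims, and it correctly isolates where the Lipschitz constants $K,\dots,K'''$ enter (controlling the remainder $R_l$); the paper's route buys brevity and global (not just local, linearized) statements. Be aware, though, that both arguments hinge on an asserted rather than derived inequality: the paper's $\dot{V}_{\text{PC}} \le \dot{V}_{\text{BP}}$ requires $2\langle\nabla L,\nabla\tilde{E}\rangle + \lVert\nabla\tilde{E}\rVert^2 \ge 0$, which does not follow from the stated hypotheses, and your $\kappa(G_l^{-1}H_L) \ll \kappa(H_L)$ requires that the curvature assembled from $\tilde{E}$ actually aligns with $H_L$, which likewise needs an additional assumption (e.g., the Gauss-Newton-type approximation made explicit in the paper's Theorem 3.7) rather than only positive definiteness and Lipschitz smoothness; also note the paper compares the continuous-time flows directly and never formalizes the ``same initial conditions'' or step-size admissibility issues you flag, so on those points your plan is the more demanding but also the more honest one.
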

Here, in establishing that a PCN minimizer of \( F\) is Lyapunov stable, the convergence behavior is analyzed under the assumption that the activation \( f \) (and derivatives \( f' \), \( f'' \), and \( f''' \)) is Lipschitz continuous (ensuring smooth, controlled changes in these functions). The significance of this result lies in showing that PCN dynamics can achieve a stable equilibrium point with faster convergence due to the inclusion of the residual energy \( \tilde{E} \). This term captures complex dependencies between layers, providing additional regularization and smoothing to the gradient flow, further aiding to reduce oscillations and accelerating convergence (compared to BP). 
In interpreting \( F \) as a Lyapunov function (as before, this allowed us to prove guaranteed convergence to a unique equilibrium), we see that the \( \tilde{E} \) introduces a corrective mechanism, which ensures that the PCN reaches its fixed point more efficiently than BP, making it particularly useful for training deep networks where standard BP may suffer from slower convergence/instability (proof/derivations in appendix). 




Taken together, the above results rigorously demonstrate, from a dynamical systems perspective, why PCNs tend to quickly converge to stable solutions as compared to BP. This rapid convergence, despite extra computational complexity incurred by an EM-like optimization, is well-supported by a body of experimental evidence \cite{ororbia2022neural, alonso2022theoretical, song2022inferring}. 

\subsection{Robustness Analysis}
\label{sec:pcn_robustness}

Next, we turn to our result to formally characterize the robustness of PCNs compared to BP-adapted networks. With \( L(W) \) corresponding to the BP loss and \( \tilde{E}(W) \) representing the residual energy of the system, we are able to establish that a PCN's trajectory (through a loss surface) is Lyapunov stable. While BP (Backpropagation) optimizes \( L(W) \) by directly minimizing the loss through gradient descent updates, PCNs take into account both \( L(W) \) and \( \tilde{E}(W) \), resulting in a combined energy minimization that stabilizes the optimization path. This provides a theoretical justification for the improved robustness of PCNs, as the added residual energy \( \tilde{E}(W) \) acts as a corrective term that smooths out abrupt changes in the gradient landscape, leading to a more gradual convergence.

\begin{theorem}
Let \( V_{\text{PC}}(W) = L(W) + \tilde{E}(W) \) be a Lyapunov function, where \( L \) and \( \tilde{E} \) are positive definite functions that achieve their minimum at \( W = W^* \). Then, the time derivative of \( V_{\text{PC}}(W) \) along the trajectories of the system, and the predictive coding updates seen as a continuous-time dynamical system are, respectively:

\begin{equation}
\dot{V}_{\text{PC}}(W) = -\left\| \frac{\partial L}{\partial W} + \frac{\partial \tilde{E}}{\partial W} \right\|^2 \leq 0, \ \ \ \ \ \ \ \ \ \ \ \ \dot{W}_l = -\left( \frac{\partial L}{\partial W_l} + \frac{\partial \tilde{E}}{\partial W_l} \right), 
\label{eq:timederivative}
\end{equation}

making the system Lyapunov stable. Furthermore, if \( \dot{V}_{\text{PC}}(W) = 0 \) only at \( W = W^* \), then the predictive coding updates are asymptotically stable and converge to the equilibrium point \( W = W^* \).
\end{theorem}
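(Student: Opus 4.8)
The plan is to recognize the displayed weight dynamics as the continuous-time (vanishing-step-size) limit of the learning rule in \eqref{eqn:synapse_update} and then run the textbook Lyapunov/LaSalle argument with the candidate $V_{\text{PC}}(W) = L(W) + \tilde{E}(W)$. First I would set up the reduction: after the inference phase the states have converged to the free-energy minimizer $x^{*}(W)$ (this is exactly Theorem~\ref{thm:3.2}), so the errors $\epsilon_l$ driving \eqref{eqn:synapse_update} are those evaluated at $x^{*}(W)$; consequently $\Delta W_l = -\eta\,\partial_{W_l} F\big(x^{*}(W),W\big)$, and since $\partial_x F = 0$ at $x^{*}$, the envelope identity gives $\partial_{W_l} F(x^{*}(W),W) = \partial L/\partial W_l + \partial \tilde{E}/\partial W_l$. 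Dividing by $\eta$ and letting $\eta \to 0$ yields the stated ODE $\dot{W}_l = -(\partial L/\partial W_l + \partial \tilde{E}/\partial W_l)$; the Lipschitz hypotheses on $f,f',f''$ make this vector field locally Lipschitz, so the flow exists and is unique.

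Next I would differentiate $V_{\text{PC}}$ along trajectories. Since $V_{\text{PC}} = L + \tilde{E}$ is $C^{1}$ with $\partial V_{\text{PC}}/\partial W_l = \partial L/\partial W_l + \partial \tilde{E}/\partial W_l$, the chain rule gives
\[
\dot{V}_{\text{PC}}(W) \;=\; \sum_{l}\Big\langle \tfrac{\partial V_{\text{PC}}}{\partial W_l},\, \dot{W}_l \Big\rangle \;=\; -\sum_{l}\Big\| \tfrac{\partial L}{\partial W_l} + \tfrac{\partial \tilde{E}}{\partial W_l} \Big\|^{2} \;=\; -\Big\| \tfrac{\partial L}{\partial W} + \tfrac{\partial \tilde{E}}{\partial W} \Big\|^{2} \le 0,
\]
which is the first claimed identity and shows $V_{\text{PC}}$ is non-increasing along the flow. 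By hypothesis $L$ and $\tilde{E}$ are positive definite about $W^{*}$, so (after the harmless normalization $V_{\text{PC}}(W^{*}) = 0$) $V_{\text{PC}}$ is itself positive definite; Lyapunov's stability theorem then yields stability of $W^{*}$ in the sense defined above.

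For asymptotic stability I would appeal to LaSalle's invariance principle. Choose $c>0$ small enough that the sublevel set $\Omega_c = \{W : V_{\text{PC}}(W)\le c\}$ is compact and contained in the neighborhood where $V_{\text{PC}}$ is positive definite; it is forward-invariant because $\dot{V}_{\text{PC}}\le 0$. Every trajectory starting in $\Omega_c$ therefore converges to the largest invariant subset of $\{W\in\Omega_c : \dot{V}_{\text{PC}}(W)=0\}$. Under the additional hypothesis that $\dot{V}_{\text{PC}}$ vanishes only at $W^{*}$, this set is the singleton $\{W^{*}\}$, so $W(t)\to W^{*}$; if moreover $V_{\text{PC}}$ is radially unbounded, the conclusion is global.

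The one step needing genuine care — the remainder being the standard Lyapunov computation — is the first: making rigorous the collapse of the EM-style two-timescale PC procedure to a single gradient flow on $W$. This rests on (i) the inner inference loop being run to convergence, which is Theorem~\ref{thm:3.2}; (ii) $x^{*}(W)$ depending smoothly on $W$, via the implicit function theorem applied to $\partial_x F(x,W)=0$ using the strict positive-definiteness of $F$ in $x$ at $x^{*}$ together with the Lipschitz/differentiability of $f$; and (iii) the envelope identity $\nabla_W F(x^{*}(W),W) = \partial_W F(x^{*}(W),W)$. Once these are in place, the identity for $\dot{V}_{\text{PC}}$ and the stability and convergence conclusions follow as sketched.
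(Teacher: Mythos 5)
Your proof is correct and takes essentially the same route as the paper's: choose \( V_{\text{PC}} = L + \tilde{E} \), use positive definiteness, apply the chain rule along \( \dot{W}_l = -\left( \partial L/\partial W_l + \partial \tilde{E}/\partial W_l \right) \) to obtain \( \dot{V}_{\text{PC}} = -\left\| \partial L/\partial W + \partial \tilde{E}/\partial W \right\|^2 \leq 0 \), and conclude Lyapunov stability, with asymptotic stability when \( \dot{V}_{\text{PC}} \) vanishes only at \( W^* \). The additional material you provide — the envelope/implicit-function-theorem reduction of the two-timescale discrete PC updates to the stated gradient flow, and the LaSalle invariance argument on compact sublevel sets — is extra rigor the paper omits, since it simply takes the continuous-time dynamics as given in the theorem statement and asserts convergence directly from the standard Lyapunov theorem.
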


To better understand how this relates to standard BP, let  us now analyze the derivative of \( V_{\text{PC}}(W) \). If the residual energy \( \tilde{E}(W) \rightarrow 0 \), the update rule reduces to:

\[
\dot{W}_l = - \frac{\partial L}{\partial W_l},
\]


which is precisely the standard BP update. Thus, BP can be seen as a special case of PCNs where the residual energy term vanishes. However, in practical scenarios, \( \tilde{E}(W) \) typically does not reach zero, allowing PCNs to maintain more stable trajectories through the loss landscape. 
This difference becomes evident in the Lyapunov stability analysis. Since \( \tilde{E}(W) \) is a positive definite function, it introduces an additional stabilizing force in the gradient dynamics, ensuring that any perturbations in \( W \) are corrected by \( \tilde{E}(W) \). This means that, unlike BP, which optimizes \( L(W) \) alone, the combined energy minimization in PCNs results in a smoother convergence trajectory, reducing sensitivity to local minima and sharp changes in the loss surface.

Let us again consider $V_{\text{PC}}(W)$, where \( L \) and \( \tilde{E} \) are positive definite and achieve their minima at the equilibrium point \( W = W^* \). This composite energy function serves as a Lyapunov function for the system, as shown in the above theorem. We now state a more powerful result that establishes the robustness of PCNs compared to BP.

\begin{theorem}[Robustness of Predictive Coding Networks]
Let \( V_{\text{PC}}(W) = L(W) + \tilde{E}(W) \) be the Lyapunov function of a PCN, where \( L \) and \( \tilde{E} \) are positive definite functions that achieve their minima at \( W = W^* \). Assume the following conditions hold:
\begin{enumerate}[noitemsep,nolistsep]
    \item The Hessian of \( L(W) \), denoted as \( H_L = \frac{\partial^2 L}{\partial W^2} \), is positive semi-definite.
    \item The residual energy term \( \tilde{E}(W) \) satisfies a Lipschitz continuity condition, i.e., there exists a constant \( K > 0 \) such that:
    \[
    \left\| \frac{\partial \tilde{E}}{\partial W} \right\| \leq K \left\| W - W^* \right\|.
    \]
    \item The time derivative of \( V_{\text{PC}}(W) \) along the system trajectories satisfies \eqref{eq:timederivative} (left).
    \item The initial perturbation \( \Delta W \) is small, i.e., \( \| \Delta W \| \ll 1 \).
\end{enumerate}
Under these conditions, the following robustness property holds for PCNs:

\emph{Bounded Perturbation Recovery:} For any perturbation \( \Delta W \) such that \( W = W^* + \Delta W \), the PCN’s weight trajectory \( W(t) \) satisfies:

\[
\| W(t) - W^* \| \leq C e^{-\lambda t} \| \Delta W \| + O(\epsilon),
\]

where constants \( C, \lambda > 0 \) depend on properties of \( L(W) \) and \( \tilde{E}(W) \); \( \epsilon \) is perturbation magnitude. This result guarantees \textbf{exponential convergence} to \( W^* \) and \textbf{robustness to bounded perturbations}.
\end{theorem}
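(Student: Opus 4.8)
The plan is to treat the continuous-time weight dynamics $\dot{W}_l = -\left(\frac{\partial L}{\partial W_l} + \frac{\partial \tilde{E}}{\partial W_l}\right)$ as the gradient flow of $V_{\text{PC}}(W) = L(W) + \tilde{E}(W)$ and to upgrade the non-strict decrease $\dot{V}_{\text{PC}} = -\|\nabla L + \nabla \tilde{E}\|^2 \le 0$ (Condition~3, which is \eqref{eq:timederivative} (left) and was established in the preceding theorem) to an \emph{exponential} decrease valid in a neighborhood of $W^*$. First I would Taylor-expand the gradient about the equilibrium: since $W^*$ is a strict minimum of $F = V_{\text{PC}}$, we have $\nabla L(W^*) = \nabla \tilde{E}(W^*) = 0$, so $\nabla V_{\text{PC}}(W) = H_V (W - W^*) + R(W)$, where $H_V = H_L + H_{\tilde{E}}$ is the Hessian of $F$ at $W^*$ and the remainder obeys $\|R(W)\| \le c_1 \|W - W^*\|^2$ on a ball of radius $r_0$ around $W^*$, with $c_1$ polynomial in the Lipschitz constants $K, K', K''$ and the data norms. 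Condition~1 gives $H_L \succeq 0$, and the strict-minimum hypothesis (reinforced by the ``$\dot{V}_{\text{PC}} = 0$ only at $W^*$'' clause) forces $H_V \succ 0$; let $0 < \lambda_{\min} \le \lambda_{\max}$ be its extreme eigenvalues.

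Second, I would combine the quadratic sandwich $\tfrac{\lambda_{\min}}{2}\|W-W^*\|^2 (1 - o(1)) \le V_{\text{PC}}(W) \le \tfrac{\lambda_{\max}}{2}\|W-W^*\|^2 (1 + o(1))$ with the gradient lower bound $\|\nabla V_{\text{PC}}(W)\|^2 \ge \lambda_{\min}^2 \|W-W^*\|^2 (1 - o(1))$, both valid for $\|W-W^*\| \le r_1 \le r_0$, to obtain a Polyak--\L{}ojasiewicz inequality $\|\nabla V_{\text{PC}}(W)\|^2 \ge \mu\, V_{\text{PC}}(W)$ with $\mu = (2\lambda_{\min}^2/\lambda_{\max})(1-o(1))$. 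Because $\dot{V}_{\text{PC}} \le 0$, the trajectory starting at $W(0) = W^* + \Delta W$ with $\|\Delta W\| \ll 1$ (Condition~4) remains in the sublevel set $\{V_{\text{PC}} \le V_{\text{PC}}(W(0))\} \subseteq \{\|W-W^*\| \le r_1\}$ for all $t \ge 0$, so the inequality applies along the whole trajectory and Gr\"onwall's inequality yields $V_{\text{PC}}(W(t)) \le V_{\text{PC}}(W(0))\, e^{-\mu t}$. Converting back through the quadratic sandwich gives $\|W(t)-W^*\|^2 \le (\lambda_{\max}/\lambda_{\min})\,\|\Delta W\|^2\, e^{-\mu t}$, i.e.\ $\|W(t)-W^*\| \le C e^{-\lambda t}\|\Delta W\|$ with $C = \sqrt{\lambda_{\max}/\lambda_{\min}}$ and $\lambda = \mu/2 = \lambda_{\min}^2/\lambda_{\max}$ (up to the $o(1)$ correction), so that $C, \lambda$ depend only on the spectrum of $H_L + H_{\tilde{E}}$ and on the Lipschitz constants, as claimed.

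Third, to produce the additive $O(\epsilon)$ term I would repeat the Lyapunov computation for the perturbed flow $\dot{W} = -\nabla V_{\text{PC}}(W) + \eta(t)$ with a bounded disturbance $\|\eta(t)\| \le \epsilon$, matching the robustness-to-perturbations setting of Section~\ref{sec:dynamical_systems}. The cross term gives $\dot{V}_{\text{PC}} \le -\|\nabla V_{\text{PC}}\|^2 + \epsilon\|\nabla V_{\text{PC}}\| \le -\tfrac12\|\nabla V_{\text{PC}}\|^2 + \tfrac12\epsilon^2$, and one more application of the Polyak--\L{}ojasiewicz step produces the input-to-state estimate $V_{\text{PC}}(W(t)) \le V_{\text{PC}}(W(0)) e^{-\mu t/2} + \epsilon^2/\mu$; the quadratic conversion then yields $\|W(t)-W^*\| \le C e^{-\lambda t}\|\Delta W\| + O(\epsilon)$. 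The quadratic Taylor remainder $R$ is absorbed by this same bound: on the invariant neighborhood it only perturbs $\mu$ and $C$ by higher-order amounts, and its residual contribution along the trajectory is dominated by the $O(\epsilon)$ floor.

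The main obstacle I anticipate is securing the \emph{lower} bound $\|\nabla V_{\text{PC}}(W)\| \ge \lambda_{\min}\|W-W^*\|(1-o(1))$ near the equilibrium, since Conditions~1--2 only supply $H_L \succeq 0$ and the \emph{upper} bound $\|\nabla\tilde{E}(W)\| \le K\|W-W^*\|$ --- all the restoring force must therefore be extracted from the strict-minimum assumption on $F$. I would resolve this in one of two ways: either (i) assume, consistently with ``strict minimum,'' that $H_V = H_L + H_{\tilde{E}}$ is nondegenerate, so $\lambda_{\min}>0$ directly; or (ii) use the hypothesis that $\dot{V}_{\text{PC}}$ vanishes only at $W^*$ to conclude $\inf_{r_1 \le \|W-W^*\| \le r_2}\|\nabla V_{\text{PC}}(W)\| > 0$ by compactness, and patch this annular bound to a \L{}ojasiewicz-gradient estimate near $W^*$. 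Either route closes the gap; I would flag nondegeneracy of $H_L + H_{\tilde{E}}$ as the precise hypothesis that distinguishes the genuinely exponential rate stated here from merely asymptotic convergence.
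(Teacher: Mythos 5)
Your proposal is correct in substance but follows a genuinely different route from the paper. The paper's proof works directly on the perturbation ODE: it sets $\Delta W = W - W^*$, writes $\dot{\Delta W} = -\bigl(\nabla L + \nabla\tilde{E}\bigr)$, and linearizes via the substitutions $\nabla L \approx H_L\,\Delta W$ and $\nabla\tilde{E} \approx K\,\Delta W$, where $K$ is the Lipschitz constant from condition 2; this yields the linear system $\dot{\Delta W} = -(H_L + KI)\,\Delta W$, and the exponential rate is read off as $\lambda_{\min}(H_L + KI) \geq K > 0$, with the $O(\epsilon)$ term appended for the neglected higher-order contributions. You instead keep the exact nonlinear flow, establish a local Polyak--\L{}ojasiewicz inequality from the quadratic behavior of $V_{\text{PC}}$ near $W^*$, use sublevel-set invariance plus Gr\"onwall to get exponential decay of $V_{\text{PC}}$, convert back through the quadratic sandwich, and obtain the $O(\epsilon)$ floor via an explicit input-to-state argument for a disturbed flow. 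What each buys: the paper's linearization is shorter and produces explicit constants, but its key step treats the one-sided bound $\|\nabla\tilde{E}\| \leq K\|W - W^*\|$ as if it were an equality $\nabla\tilde{E} = K\,\Delta W$ aligned with $\Delta W$ --- the Lipschitz hypothesis gives no lower bound and no direction, so the claimed restoring force $KI$ is not actually licensed by the stated assumptions; your approach avoids this entirely, and the nondegeneracy of $H_L + H_{\tilde{E}}$ that you flag as an extra hypothesis is precisely the honest replacement for what the paper smuggles in through that substitution (without some such strict-positivity assumption only asymptotic, not exponential, convergence follows). Your treatment of the $O(\epsilon)$ term as a bounded-disturbance ISS estimate is also more coherent than the paper's, which introduces $\epsilon$ without ever specifying a perturbed dynamics.
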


Effectively, these results establish that PCNs reliably converge to equilibrium points of the loss function when optimizing their variational free energy, and once they do, the parameters will remain near those points indefinitely, highlighting the robustness of PCNs.


\noindent 
\textbf{Robustness Comparison between PC and BP. } As one may observe above, a core difference between a BP-based network and a PCN is their underlying optimization objective -- BP optimizes just $L(W)$ whereas a PCN optimizes $L(W) + \tilde{E}(W)$. The central mechanism behind a PCN's effectiveness is its use of and focus on optimizing energy $\tilde{E}$, which, as our results in Section \ref{sec:theory_results} established, underwrites the PCN's ability to stably converge to fixed points (equilibrium). 
There are several consequences that come as a result of this crucial difference: 
\begin{itemize}[noitemsep,nolistsep]
    \item \textit{Separation Characteristic: } For BP, robustness is characterized by the minimal separation characteristic, which can be improved but remains sensitive to input deviations. However, for PCNs, robustness is higher due to stabilization provided by residual energy \(\tilde{E}\).
    \item \textit{Error Handling: } BP updates only consider the gradient of the loss function \(L\), which makes the resulting network models more prone to overfitting and less robust to perturbations. PCN updates, in contrast, consider both \(L\) and \(\tilde{E}\), ensuring smoother transitions and better handling of input deviations.
    \item \textit{Convergence: } BP updates can lead to oscillations or slower convergence due to their sensitivity to initial conditions and perturbations. On the other hand, PCN updates converge faster and more smoothly to equilibrium due to the damping effect of \(\tilde{E}\).
\end{itemize}

\section*{Predictive Coding Updates as Quasi-Newton Updates}


Quasi-Newton (QN) methods, such as BFGS optimization \cite{head1985broyden}, iteratively update an approximation of the Hessian \( H^{-1} \), leading to more stable, faster converging updates compared to simple gradient descent.  The Newton-Raphson update rule, for example, is given as follows:
\[
W_{t+1} = W_t - H^{-1} \nabla L(W_t)
\]
where \( H \) is the Hessian matrix of the loss function \( L \). Notably, the weight updated performed by PC has been shown to approximate second-order optimization methods, specifically Gauss-Newton updates \cite{alonso2022theoretical}. In this section, we extend this result by showing first that it is the residual energy term $\tilde E$ that modifies the gradient similarly to what the inverse Hessian does. Second, we connect this modification to QN updates, broadening the understanding of PC within this optimization framework. Finally, we compare the computed bounds against those obtained for target propagation (TP) \cite{bengio2014auto}, a popular biologically plausible learning algorithm for deep neural networks, and show that PC updates are much closer to QN updates as compared to TP, while converging faster to solutions. Let us start with the  following: 




\begin{theorem}
Let \( M \) be a neural network that minimizes a free energy \( F = L + \tilde{E} \), where \( L \) is the loss function and \( \tilde{E} \) is the residual energy. Assume the activation function \( f \) and its derivatives \( f', f'' \), and \( f''' \) are Lipschitz continuous with constants \( K, K', K'', \) and \( K''' \), respectively. Then, the PC updates approximate quasi-Newton updates by incorporating higher-order information through the residual energy term \( \tilde{E} \). 
\end{theorem}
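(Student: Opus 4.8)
The plan is to expand the PC weight update $\Delta W_l = -\eta\,(\partial L/\partial W_l + \partial \tilde E/\partial W_l)$ and identify the contribution of the residual energy term $\partial \tilde E/\partial W_l$ as an implicit preconditioner that plays the role of an (approximate) inverse Hessian, so that the combined update $\Delta W_l$ takes the Newton-like form $-\eta\, P_l\,\nabla_{W_l} L$ for some positive-definite $P_l \approx H_L^{-1} + O(\text{higher order})$. First I would write out $\tilde E = \sum_{l=1}^{L-1}\lvert \epsilon_l\rvert^2$ with $\epsilon_l = x_l - f(W_l x_{l-1})$, and use the inference fixed-point condition (from Theorem~\ref{thm:3.2}, equation~\eqref{eqn:activity_update}) $\partial F/\partial x_l = 0$ at the converged state $x^*$, which gives $\epsilon_l^* = W_{l+1}^\top(\epsilon_{l+1}^* \odot f'(W_{l+1}x_l^*))$. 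This recursion lets me express each hidden prediction error in terms of the output error $\epsilon_L$ and products of Jacobian factors $J_{l} = \mathrm{diag}(f'(W_l x_{l-1}))W_l$, so that $\partial \tilde E/\partial W_l$ becomes a sum of terms each carrying extra factors of $f'$, $f''$ (and, after one more differentiation, $f'''$) relative to the plain BP gradient $\partial L/\partial W_l$.

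Next I would Taylor-expand $f(W_l x_{l-1})$ around the equilibrium and collect terms by order: the zeroth/first-order part of $\partial\tilde E/\partial W_l$ reconstructs a Gauss-Newton-type curvature matrix $G_l = \sum_k J_k^\top J_k$ acting on the gradient (this recovers the known Gauss-Newton correspondence of \cite{alonso2022theoretical}), while the $f''$- and $f'''$-dependent remainders supply the additional higher-order curvature information that distinguishes a genuine quasi-Newton step from Gauss-Newton. I would then invoke the Lipschitz bounds $K,K',K'',K'''$ to control these remainder terms: each higher-order term is bounded in operator norm by a polynomial in $K,K',K'',K'''$ and $\lVert W_l\rVert$ times the distance $\lVert x - x^*\rVert$ (or $\lVert\epsilon_l\rVert$), so near equilibrium the update is $\Delta W_l = -\eta\,(I + G_l)^{-1}\nabla_{W_l}L + R_l$ with $\lVert R_l\rVert$ quantitatively small, matching the quasi-Newton update $W_{t+1} = W_t - H^{-1}\nabla L$ up to a controlled error. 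Combining this with the contraction/Lyapunov estimate already established in Theorems~\ref{thm:3.3}–\ref{thm:3.4} yields the stated approximation.

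The main obstacle I anticipate is making the identification of $\partial\tilde E/\partial W_l$ with an inverse-Hessian action precise rather than merely formal: $H_L^{-1}$ and the matrix induced by $\tilde E$ live in different spaces unless one is careful about clamping, about which variables are held fixed during the M-step, and about the layerwise block structure of the Hessian. I would handle this by working blockwise (one $W_l$ at a time, treating the converged states $x^*$ as fixed), showing that within each block the $\tilde E$-induced operator equals the Gauss-Newton block of $H_L$ plus Lipschitz-controlled higher-order corrections, and then arguing that the block-diagonal quasi-Newton approximation is exactly the BFGS-style approximation one expects from local updates. A secondary technical point is that the result is inherently local (valid in a neighborhood of $x^*$), so the Taylor remainders must be bounded uniformly on that neighborhood using the Lipschitz constants — this is routine but must be stated carefully to keep the bound meaningful.
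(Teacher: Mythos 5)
Your proposal is sound in spirit but follows a genuinely different route from the paper's proof. The paper works entirely in weight space: it Taylor-expands both \(L\) and \(\tilde{E}\) in \(\Delta W\) around the current weights, forms the quadratic model of \(F = L + \tilde{E}\) with combined curvature \(H + \tilde{H}\), and identifies the minimizer \(\Delta W = -(H+\tilde{H})^{-1}\left(\nabla L + \nabla\tilde{E}\right)\) as the quasi-Newton-like form, reading \(\tilde{H}\) as the extra curvature contributed by the residual energy; the inference (E-step) equilibrium is never invoked. You instead derive the preconditioning mechanistically: you use the fixed-point condition \(\partial F/\partial x_l = 0\) to express the converged prediction errors via the recursion \(\epsilon_l^* = W_{l+1}^{\top}(\epsilon_{l+1}^* \odot f'(\cdot))\), show that at equilibrium \(\partial\tilde{E}/\partial W_l\) turns the plain gradient into a Gauss-Newton-preconditioned step of the form \(-(I+G_l)^{-1}\nabla_{W_l}L\) (recovering the correspondence of \cite{alonso2022theoretical}), and then bound the \(f''\)- and \(f'''\)-dependent remainders with the Lipschitz constants to get a quantitative local error term \(R_l\). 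Your route is more demanding — it needs the blockwise treatment and uniform control of Taylor remainders near \(x^*\) that you flag — but it buys more: it explains \emph{why} the preconditioning arises (through the converged states rather than through an abstract quadratic model) and yields an explicit, Lipschitz-controlled approximation error, whereas the paper's argument is shorter but more formal, since it equates the PC update with the minimizer of the local quadratic model rather than with the gradient step actually taken in the M-step. Your anticipated obstacle (matching the \(\tilde{E}\)-induced operator with the layerwise blocks of the Gauss-Newton matrix while keeping clamping conventions straight) is indeed the main technical hurdle, and resolving it essentially reproduces and sharpens the known Gauss-Newton correspondence, which is consistent with what the theorem asserts.
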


Given the above, let us now assume that we are faced with a learning task where we prefer using a stable training algorithm. As QN updates are well-known for their stability \cite{head1985broyden}, we could favor one algorithm over another if the first better approximates such updates. 
Motivated by this, we next compare PC and TP, showing that the first better approximates QN updates:
\begin{theorem}
Let \( M \) be a neural network minimizing a free energy function \( F = L + \tilde{E} \), where \( L \) is the loss function and \( \tilde{E} \) is a residual energy term. Let \( H_{\text{GN}} = J^T J \) denote the Gauss-Newton matrix, where \( J \) is the Jacobian matrix of the network's output with respect to the parameters. Consider the following update rules:
\begin{itemize}[noitemsep,nolistsep]
    \item \textit{QN Update:} \( \Delta W_{\text{QN}} = - H_{\text{GN}}^{-1} \nabla L(W) \),
    \item \textit{TP Update:} \( \Delta W_{\text{TP}} = - B_l^{-1} \nabla L(W) \), where \( B_l \) is a block-diagonal approximation of the matrix \( H_{\text{GN}} \), 
    \item \textit{PC Update:} \( \Delta W_{\text{PC}} = - (H + \tilde{H})^{-1} \left( \nabla L(W) + \nabla \tilde{E}(W) \right) \).
\end{itemize}



Assume that \( L(W) \) and \( \tilde{E}(W) \) are twice differentiable, and the activation function \( f \) and its derivatives are Lipschitz continuous. Then, the approximation error between the updates and the true Quasi-Newton update satisfies $E_{\text{PC}} \leq E_{\text{TP}}$, 
where: 
\[
E_{\text{PC}} = \| \Delta W_{\text{PC}} - \Delta W_{\text{QN}} \|, \quad E_{\text{TP}} = \| \Delta W_{\text{TP}} - \Delta W_{\text{QN}} \|.
\]
Thus, PC is mathematically closer to the true Quasi-Newton updates than TP. 
\end{theorem}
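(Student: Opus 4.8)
The plan is to reduce both $E_{\text{PC}}$ and $E_{\text{TP}}$ to a common "preconditioner‑plus‑gradient mismatch" form and then compare magnitudes. Write $G := H_{\text{GN}} = J^\top J$. Every update in the statement has the shape $-P^{-1}g$ with $P$ a surrogate for $G$ and $g$ a surrogate for $\nabla L(W)$: TP uses $P=B_l,\ g=\nabla L(W)$; PC uses $P = H+\tilde H,\ g = \nabla L(W)+\nabla\tilde E(W)$. Subtracting $\Delta W_{\text{QN}} = -G^{-1}\nabla L(W)$ and using the resolvent identity $A^{-1}-B^{-1}=A^{-1}(B-A)B^{-1}$, I would write
\[
\Delta W_{\text{TP}}-\Delta W_{\text{QN}} = -G^{-1}(G-B_l)B_l^{-1}\nabla L(W),
\]
\[
\Delta W_{\text{PC}}-\Delta W_{\text{QN}} = -G^{-1}\big(G-(H+\tilde H)\big)(H+\tilde H)^{-1}\nabla L(W) \;-\;(H+\tilde H)^{-1}\nabla\tilde E(W).
\]
Taking norms gives $E_{\text{TP}} \le \|G^{-1}\|\,\|B_l^{-1}\|\,\|G-B_l\|\,\|\nabla L(W)\|$ with a matching lower bound $E_{\text{TP}}\ge c\,\|G-B_l\|\,\|\nabla L(W)\|$ whenever $\nabla L(W)$ is not orthogonal to the range of the discarded off‑block‑diagonal part of $G$ (a genericity remark I would record), so $E_{\text{TP}} = \Theta(\|G-B_l\|)$, i.e. the norm of the inter‑layer coupling of the Gauss–Newton matrix that target propagation zeroes out by construction.

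For $E_{\text{PC}}$ the two pieces are handled separately. The first is $\Theta(\|G-(H+\tilde H)\|)$, and the structural heart of the argument is that $H+\tilde H$ reproduces the block pattern of $G=J^\top J$ up to higher‑order terms: differentiating Equation~\eqref{eqn:synapse_update} once more in $W$, the diagonal blocks $\partial^2\tilde E/\partial W_l^2$ match the diagonal Gauss–Newton blocks $J_l^\top J_l$ (both are $\big(f'(W_l x_{l-1})\big)$‑weighted outer products of the pre‑activations), while the cross blocks $\partial^2\tilde E/\partial W_l\partial W_{l+1}$ supply exactly the inter‑layer coupling $W_{l+1}^\top(\cdots\odot f'(\cdots))$ visible in Equations~\eqref{eqn:activity_update}–\eqref{eqn:synapse_update} — the very blocks $B_l$ discards. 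The Lipschitz hypotheses on $f,f',f'',f'''$ are used to bound the remainder $\|G-(H+\tilde H)\|$ by a term of order $(K''+K''')\sum_l\|\epsilon_l\|$, which is small after the inference (E‑step) phase by Theorem~\ref{thm:3.2}, whereas $\|G-B_l\|$ does not depend on the residual errors and stays $\Theta(1)$. The second piece, $(H+\tilde H)^{-1}\nabla\tilde E(W)$, is controlled because $\nabla\tilde E$ vanishes at the post‑inference equilibrium $x^*$ (Theorems~\ref{thm:3.2}–\ref{thm:3.3}) and is $O(\epsilon)$ in a neighbourhood of it, so this term is $O(\epsilon)$ and absorbed into the stated error. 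Combining, $E_{\text{PC}} \le \|G^{-1}\|\,\|(H+\tilde H)^{-1}\|\,\|G-(H+\tilde H)\|\,\|\nabla L(W)\| + O(\epsilon) \le E_{\text{TP}}$ once the residual errors are below the threshold set by the off‑block‑diagonal mass $\|G-B_l\|$; this last comparison is where the previous theorem (PC updates approximate quasi‑Newton updates through $\tilde E$) is invoked in quantitative form.

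The main obstacle I anticipate is the structural lemma $\|G-(H+\tilde H)\| \le \|G-B_l\|$: it requires writing out the Hessian blocks of $L+\tilde E$ for the PCN explicitly, carrying the $f'$ and $f''$ contributions, identifying precisely which blocks coincide with $J^\top J$ and which are genuinely new coupling blocks, and then using the higher‑order Lipschitz bounds to show the leftover mismatch is dominated by the inter‑layer blocks that TP drops. A secondary subtlety is making the operator‑norm estimates non‑vacuous — i.e. bounding $\|G^{-1}\|$, $\|B_l^{-1}\|$, $\|(H+\tilde H)^{-1}\|$ by controlling smallest eigenvalues away from zero — which follows from the positive (semi‑)definiteness assumptions in the statement together with the extra curvature supplied by $\tilde H$, but must be stated carefully (e.g. restricting to a compact neighbourhood of $W^*$) so the product‑of‑norms bounds hold uniformly.
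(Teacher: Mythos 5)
Your overall route is the same one the paper takes: subtract \(\Delta W_{\text{QN}}\) from each update, reduce the comparison to how well the two preconditioners (\(B_l\) versus \(H+\tilde{H}\)) approximate \(H_{\text{GN}}\), and conclude \(E_{\text{PC}}\le E_{\text{TP}}\) from an ordering of those approximation errors. Your version is more explicit in two respects: you use the resolvent identity to make the dependence on \(\|G-B_l\|\) and \(\|G-(H+\tilde H)\|\) visible, and you actually keep the \((H+\tilde H)^{-1}\nabla\tilde E\) term and argue it is \(O(\epsilon)\) after inference — the paper silently replaces \(\nabla L+\nabla\tilde E\) by \(\nabla L\) between its Step 3 and Step 5 without comment, so on this point you are more careful than the source.

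The genuine gap is the step you yourself flag as the ``main obstacle'': the structural lemma \(\|G-(H+\tilde H)\|\le\|G-B_l\|\) (equivalently, that \(\partial^2\tilde E/\partial W^2\) reproduces precisely the inter-layer coupling blocks of \(J^\top J\) that \(B_l\) discards, up to a remainder controlled by \(K'',K'''\) and the residuals \(\epsilon_l\)). This is the entire content of the theorem — everything else is bookkeeping with operator norms — and your proposal sketches how one might verify it but does not carry out the Hessian-block computation, so the inequality is never established. For what it is worth, the paper does not establish it either: its Step 7 asserts \(\|(H+\tilde H)^{-1}-H_{\text{GN}}^{-1}\|\le\|B_l^{-1}-H_{\text{GN}}^{-1}\|\) by appeal to ``matrix perturbation theory and the fact that \(H+\tilde H\approx H_{\text{GN}}\) is a better approximation than \(B_l\)'', i.e.\ it assumes the crux. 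A second, smaller issue: your argument only yields the conclusion conditionally — you need a genericity assumption so that \(E_{\text{TP}}\) has a matching lower bound of order \(\|G-B_l\|\,\|\nabla L\|\), and you need the post-inference residuals (hence the \(O(\epsilon)\) term and the \(\|G-(H+\tilde H)\|\) remainder) to fall below the threshold set by the off-block-diagonal mass. The theorem as stated claims \(E_{\text{PC}}\le E_{\text{TP}}\) unconditionally, so either those hypotheses must be added to the statement or the comparison must be made uniform; as written, the proposal proves a weaker, conditional claim even granting the unproven lemma.
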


The following theorem builds on the previous result and formally proves that PC reaches fixed points faster than TP, bolstering the value of PC bio-inspired learning over approaches such as TP.

\begin{theorem}
Let \( D_{\text{PC}} \) and \( D_{\text{TP}} \) represent two dynamical systems corresponding to predictive coding (PC) and target propagation (TP), respectively. Assume that both systems have the same equilibrium point \( W^* \), and the loss functions \( L \) and residual energy \( \tilde{E} \) are twice differentiable and positive definite. Furthermore, let the activation functions \( f_l \) and their derivatives be Lipschitz continuous. Define the Lyapunov functions:

\begin{enumerate}[noitemsep,nolistsep]
    \item For predictive coding: $V_{\text{PC}}(W) = F(W) = L(W) + \tilde{E}(W)$.
    \item For target propagation: $V_{\text{TP}}(W) = \sum_{l=1}^{L} \frac{1}{2} \| t_l - f_l(W_l t_{l-1}) \|^2$.
\end{enumerate}
Then, it follows that PC is more stable than TP according to the following criteria:
\begin{enumerate}[noitemsep,nolistsep]
    \item Convergence Rate: \( D_{\text{PC}} \) converges faster to its equilibrium point compared to \( D_{\text{TP}} \).
    \item Deviation from Equilibrium: \( D_{\text{PC}} \) exhibits a smaller deviation from the equilibrium compared to \( D_{\text{TP}} \) for the same initial perturbation.
    \item Region of Attraction: The region of attraction for \( D_{\text{PC}} \) is larger than that of \( D_{\text{TP}} \).
\end{enumerate}
\end{theorem}

The above highlights the fact that an important factor behind PC's successful ability to traverse to fixed-points reliably and stably is its ability to approximate the updates yielded by quasi-Newton methodology. This means that a PCN's optimization of its free energy functional affords it second-order information (without the typically prohibitive cost of computing the Hessian itself) in its traversal of a loss surface landscape. 
A desirable positive consequence of this section's results is that PC yields a better approximation of QN updates that the popular TP algorithm and, furthermore, PC-based learning is more stable than that of TP-centric adaptation (due to PC's ability to converge faster to equilibrium due to its smaller deviation from region of attractions in its loss surface).

\section{Simulation Results}
\label{sec:sim_results}

In this section, we provide empirical evidence related to some of the theoretical claims made by this work. First, we test convergence bounds for PC and BP, using an MSE loss, trained on MNIST and CIFAR10. Figures \ref{fig:pc_convergence_telu} and \ref{fig:pc_convergence_tanh} show a custom convolution model (2 conv layer, 3 fc layer) trained on MNIST using TeLU and tanh activations. On MNIST, we see that PC converges faster than BP, even though both approaches reach similar performance; however, the different parameter updates of each yielded different trajectories. Similar findings are seen in Figure \ref{fig:pc_convergence_telu_cifar} for the CIFAR10-trained model. 

\begin{table}[!t]
\centering
\begin{tabular}{|l|c|c|c|c|}
\hline
\textbf{Method} & \textbf{MNIST} & \textbf{Fashion-MNIST} & \textbf{CIFAR10} & \textbf{CIFAR100} \\
\hline
\textbf{BP} & $98.03 \pm 0.05\%$ & $89.07 \pm 0.09\%$ & $88.16 \pm 0.17\%$ & $60.86 \pm 0.39\%$ \\
\hline
\textbf{PC-SE} & $98.4 \pm 0.05\%$ & $89.52 \pm 0.07\%$ & $87.97 \pm 0.11\%$ & $54.92 \pm 0.11\%$ \\
\hline
\textbf{PC-CE} & $98.23 \pm 0.08\%$ & $89.11 \pm 0.19\%$ & $88.02 \pm 0.14\%$ & $61.94 \pm 0.12\%$ \\
\hline
\textbf{DDTP-Linear} & $98.17 \pm 0.09\%$ & $88.99 \pm 0.17\%$ & $83.01 \pm 0.20\%$ & $57.80 \pm 0.43\%$ \\
\hline
\textbf{DDTP-control} & $98.20 \pm 0.08\%$ & $88.70 \pm 0.31\%$ & $82.71 \pm 0.28\%$ & $54.75 \pm 0.43\%$ \\
\hline
\textbf{DTP-$\sigma$} & $98.34 \pm 0.19\%$ & $89.01 \pm 0.53\%$ & $49.49 \pm 0.23\%$ & $31.74 \pm 0.30\%$ \\
\hline
\end{tabular}
\label{tab:test_acc}
\caption{Test accuracy corresponding to the epoch with the best validation error over the $100$ training epochs (architecture with $5$ 
fully-connected hidden layers of $256$ units for MNIST \& Fashion-MNIST; VGG-5 architecture employed for CIFAR10 and CIFAR100). Mean $\pm$ SD for $15$ random seeds. Best test errors (except BP) are displayed in bold.}
\begin{tabular}{|l|c|c|c|}
\hline
\textbf{Method} & \textbf{SGD} (lr = 0.01) & \textbf{SGD+M} (lr = 1e-3) & \textbf{RMSProp}(lr = 1e-3) \\
\hline
\textbf{BP} & 74 & 70 & 71 \\
\hline
\textbf{PC-CE} & 62 & 63 & 65 \\
\hline
\textbf{PC-SE} & 63 & 61 & 64 \\
\hline
\textbf{DDTP-Linear} & 73 & 68 & 70 \\
\hline
\textbf{DDTP-Control} & 71 & 72 & 71 \\
\hline
\end{tabular}
\caption{Average number of epochs required to obtain best validation accuracy on Cifar-10 for various methods using different optimizers.}
\label{tab:epochs}
\end{table}

\begin{figure*}[!t]
    \centering
    \includegraphics[width=1.0\linewidth]{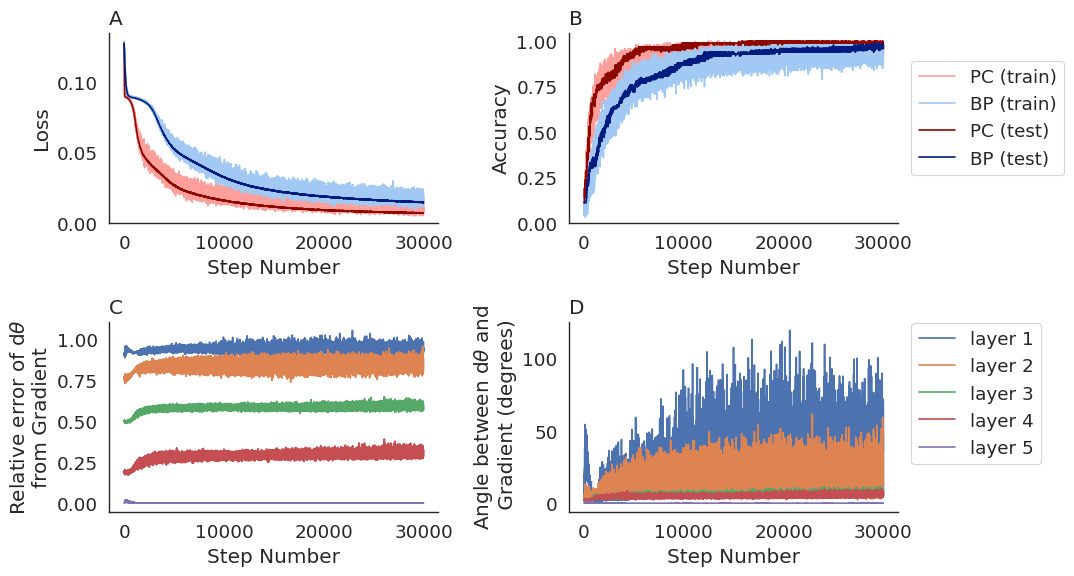}
    \vspace{-0.4cm}
    \caption{Convergence comparison of backprop (BP; blue) and predictive coding (PC; red) in a convolutional network with MSE loss trained on MNIST for $30K steps$. Panels A and B show the training (light colors) and test (dark colors) loss (A) and accuracy (B) for a 5-layer network $($TeLU activation, optimizer $=$ SGD with momentum $0.9$, lr $= 0.01$, batch\_size$=100$ $)$ trained using PC (red) and BP (blue). Panels C and D depict the relative error (C) and angle (D) between parameter updates (d$\theta$) and the negative gradient of the loss at each layer. While PC and BP achieve comparable accuracies in all experiments, the differences in the parameter updates highlight the nuances between the two approaches. (Note: We adapted the code of \cite{rosenbaum2022relationship} to generate these plots.) 
    }
    \label{fig:pc_convergence_telu}
    \vspace{-0.4cm}
\end{figure*}

Next, we conduct large-scale experiments. Specifically, we train multiple models on image classification tasks using PC, BP, and TP, as well as key variations, and report the number of epochs (which is proportional to the number of weight updates) needed to reach the best performance. Our computer simulations involved testing across  across four widely-used image classification datasets, offering results for both useful digit and clothing object recognition but also for the case of more complex natural images: 1) \textbf{MNIST}, 2) \textbf{Fashion-MNIST}, 3) \textbf{CIFAR10}, and 4) \textbf{CIFAR100}. The variant algorithm methods included in our study were backprop (\textbf{BP}), predictive coding (PC) that optimizes a free energy functional made up of squared error losses \textbf{PC-SE}, predictive coding that optimizes a free energy functional composed of cross-entropy losses \textbf{PC-CE}, \textbf{DDTP-Linear} \cite{meulemans20}, \textbf{DDTP-Control} \cite{meulemans20}, and \textbf{DTP-$\sigma$} \cite{ororbia2019biologically}.

Each model was trained for $100$ epochs using architectures designed for the characteristics of each dataset. Additionally, beside standard activation such as Tanh, ReLU, sigmoid we incorporated a novel activation function, the hyperbolic tangent exponential linear unit (\textbf{TeLU}) \cite{fernandez2024stable}, in all experiments which offered better stability. Unlike ReLU, which lacks higher-order Lipschitz continuity, TeLU, Tanh and sigmoid ensures smooth transitions and stability even for higher-order derivatives, a property crucial for this paper's theoretical findings. This property proved particularly beneficial for the predictive coding (PC) and difference target propagation (DTP) methods, resulting in more stable, consistent convergence behavior than traditional activations (using Relu/Tanh, PC exhibited higher variance. 

All of the models were optimized using stochastic gradient descent (SGD) with momentum, specifically using a learning rate set to $0.001$ with a batch size $128$. For each method and dataset, we ran the experiments for $15$ trials (with $15$ different random seeds) and report the mean test accuracy and standard deviation (SD) corresponding to the epoch with the best validation error. The experiments were performed using \emph{PCX}, a novel library introduced to perform experiments with and benchmark predictive coding networks \cite{pinchetti2024benchmarking}. A more detailed description of the models used, as well as the test accuracies reached during the experiments, is reported in Table~\ref{tab:test_acc}, while the the average number of epochs required for each algorithm to converge on the complex CIFAR-10 dataset is reported in Table~\ref{tab:epochs}. Note that, in Table~\ref{tab:epochs}, we investigated each algorithm used in tandem with either SGD, SGD with momentum (SGD+M), or RMSprop \cite{tieleman2012lecture}. 

\begin{figure*}[!t]
    \centering
    \includegraphics[width=1.0\linewidth]{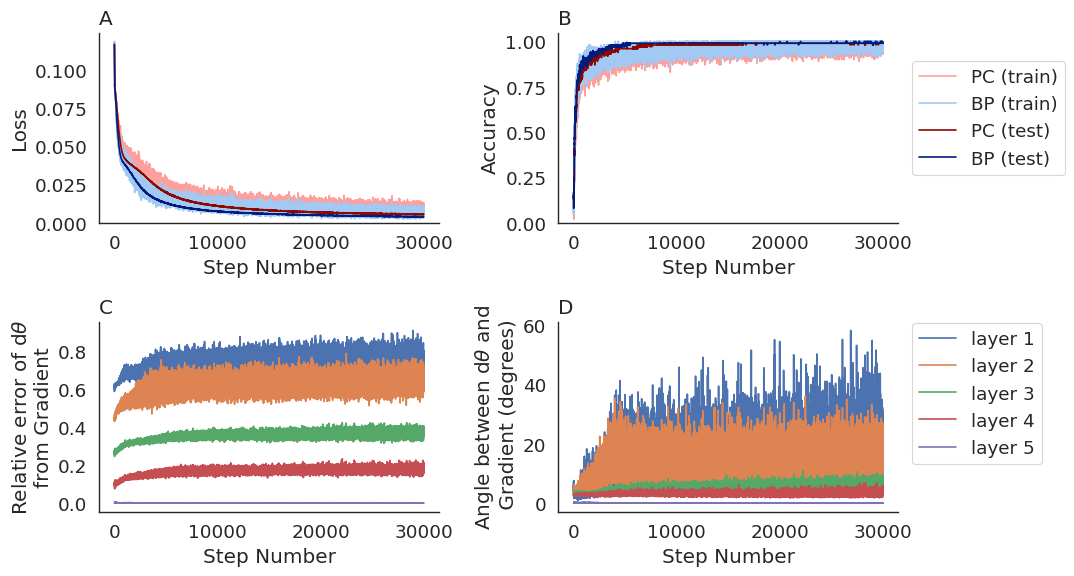}
    \vspace{-0.4cm}
    \caption{Convergence analysis with the tanh activation -- same setting as in Figure \ref{fig:pc_convergence_telu}.}
    \label{fig:pc_convergence_tanh}
    \vspace{-0.4cm}
\end{figure*}
\begin{figure*}[!t]
     \centering
    \includegraphics[width=1.0\linewidth]{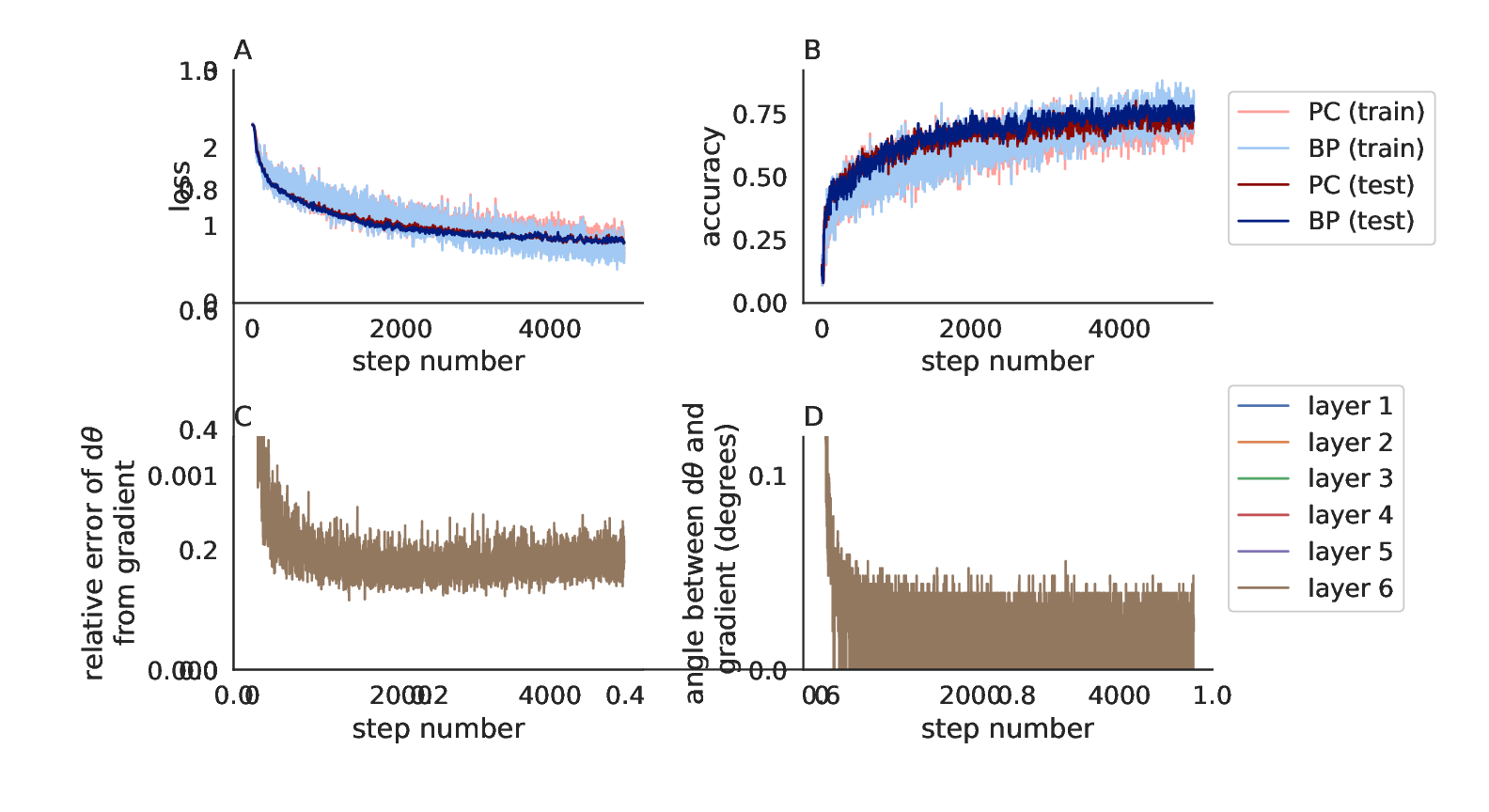}
    \vspace{-0.4cm}
    \caption{Convergence comparison of backprop (BP; blue) and predictive coding (PC; red) in a convolutional network with MSE loss trained on the CIFAR-10 dataset over first $4K$ steps. Panels A and B show the training (light colors) and test (dark colors) loss (A) and accuracy (B) for a 5-layer network $($TeLU activation, optimizer $=$ SGD with momentum $0.9$, lr $= 0.01$, batch\_size$=100$ $)$ trained using PC shown in red) and BP (shown in blue). Panels C and D depict the relative error (C) and angle (D) between the parameter updates,(d$\theta$) and the negative gradient of the loss at each layer}
    \label{fig:pc_convergence_telu_cifar}
    \vspace{-0.4cm}
\end{figure*}

\section{Conclusion}
\label{sec:conclusion}

In this work, we conducted a rigorous study of the stability and robustness of the biological inference and learning framework known as predictive coding (PC). Concretely, we employed dynamical systems theory to show that PC networks (PCNs) are Lyapunov stable by specifically deriving tights bounds on the stability of PCNs equipped with Lipschitz activation functions, demonstrating that they are able to converge to fixed-points and further offer robustness to perturbations applied to sensory inputs, synaptic weight values, and initial neuronal state conditions. 
Furthermore, we were able to formally demonstrate that the updates yielded by PCN dynamics approximate quasi-Newton updates applied to synaptic connection strengths, uncovering that PCNs utilize second order information to adjust the direction and magnitude of the gradient steps that they carry out. Notably, our results usefully yielded tighter theoretical bounds on the approximations of other algorithms, such as backpropagation of errors and target propagation, that are yielded by predictive coding models.

\bibliography{main}
\bibliographystyle{ieeetr}

\newpage 

\appendix

\section*{Appendix}
In this appendix, we provide additional experimental details as well as the complete formalization/set of details underlying the definitions and theoretical results provided in this work, including the formal proofs behind the presented theorems.

\section{Experimental Details}

In this section, in Table \ref{tab:model_algo_onfiguration} (Left), we provide additional details with respect to the full specification of the model architectures utilized for our credit assignment algorithms in the main paper's simulations. Furthermore, we present hyper-parameter value settings used for each credit assignment algorithm in Table \ref{tab:model_algo_onfiguration} (Right). Hyper-parameter search where conducted over optimizer(SGD, SGD+M, AdamW), learning rate ([0.1, 1e-4]) and activation ([Tanh, Sigmoid, TeLU, ReLU]).

\begin{table}[ht]
    \centering
    \begin{tabular}{c|c}
    Model & Architecture  \\ 
    \hline 
     MNIST & 5 $\times$ 256 FC, Glorot-uniform init  \\
     FMNIST &  5 $\times$ 256 FC, Glorot-uniform init \\
     CIFAR10 & VGG5, Glorot-uniform init\\
     CIFAR100 & VGG5, Glorot-uniform init \\
    \end{tabular}
    ~
    \begin{tabular}{c|c}
    Algorithm & Hyper-parameter Configuration \\ 
    \hline 
     BP & SGD, lr$=$1e-3, Momentum$=0.9$, Batch\_size=$128$  \\
     DTP-Linear & SGD, lr$=$1e-4, Momentum$=0.9$, Batch\_size=$128$  \\
     DTP-Control & SGD, lr$=$1e-4, Momentum$=0.92$, Batch\_size=$128$ \\
     DTP-$\sigma$ & SGD, lr$=$1e-2, Momentum$=0.91$, Batch\_size=$128$ \\ 
     PC-SE & SGD, lr$=$1e-3, Momentum$=0.9$, Batch\_size=$128$ \\ 
     PC-CE & SGD, lr$=$1e-3, Momentum$=0.9$, Batch\_size=$128$ \\
    \end{tabular}
    \caption{(Left) Architecture details used for the neural models simulated in this work. (Right) Algorithm hyper-parameter configuration for each credit assignment scheme explored in the main paper. `FC' denotes fully-connected, `SGD' denotes stochastic gradient descent, and VGG5 refers to the convolution vision architecture designed in \cite{simonyan2014very}. Note that all parameter/synaptic weight values were initialized from a fan-in-scaled initialization \cite{glorot2010understanding} scheme (``Glorot-uniform init''), except for the biases, which were initialized to zeros.}
    \label{tab:model_algo_onfiguration}
\end{table}

\section{Definitions}

In this section, we fully formalize the background section on dynamical systems, made more informal due to a lack of space. The theorems stated in the main body of the paper, and proved in the following sections, rely on the following definitions. We start by first characterizing what it means for a dynamic system to be \emph{stable}. This is established by the following definition:
\begin{definition}
A dynamical system is said to be \emph{stable} if, for any given $\epsilon > 0$, there exists a $\delta > 0$ such that if the initial state $x(0)$ is within $\delta$ of an equilibrium point $x^*$ (i.e., $\|x(0) - x^*\| < \delta$), then the state $x(t)$ remains within $\epsilon$ of $x^*$ for all $t \geq 0$ (i.e., $\|x(t) - x^*\| < \epsilon$ for all $t \geq 0$).

Formally:
\[ 
\forall \epsilon > 0, \exists \delta > 0 \, \text{such that} \, \|x(0) - x^*\| < \delta \implies \|x(t) - x^*\| < \epsilon \, \text{for all} \, t \geq 0.
\]
\end{definition}
Next we compare stability between two dynamical systems and show conditions required to show one system is better compared to another, which is formally defined as follows:
\begin{definition}
Let $D$ and $D'$ be two dynamical systems with equilibrium points $x^*_D$ and $x^*_{D'}$, respectively. We say that the dynamical system $D$ is \emph{more stable} than the dynamical system $D'$ if, for the same perturbation or initial deviation from the equilibrium, the system $D$ either:
\begin{itemize}
    \item [(i)] Converges to its equilibrium point faster than $D'$, i.e., it has a higher convergence rate, or
    \item [(ii)] Remains closer to its equilibrium point for all time, i.e., has a smaller deviation from the equilibrium for the same perturbation, or
    \item [(iii)] Has a larger region of attraction, indicating a greater tolerance to perturbations.
\end{itemize}

Formally, consider two dynamical systems $D$ and $D'$ represented by the differential equations:
\[
\dot{x}_D = f_D(x), \quad \dot{x}_{D'} = f_{D'}(x),
\]
with equilibrium points $x^*_D$ and $x^*_{D'}$, respectively. We define three criteria:

1. \textit{Convergence Rate}: $D$ converges faster to $x^*_D$ compared to $D'$ if there exists a constant \( \lambda > 0 \) such that, for initial conditions \( x(0) \) close to \( x^*_D \) and \( x(0) \) close to \( x^*_{D'} \):
\[
\| x_D(t) - x^*_D \| \leq e^{-\lambda t} \| x(0) - x^*_D \|, \quad \| x_{D'}(t) - x^*_{D'} \| \geq e^{-\lambda' t} \| x(0) - x^*_{D'} \|, \quad \text{with} \quad \lambda > \lambda'.
\]

2. \textit{Region of Attraction}: $D$ has a larger region of attraction if the set of initial points that converge to \( x^*_D \), denoted as \( \mathcal{R}_D \), satisfies:
\[
\mathcal{R}_D \supseteq \mathcal{R}_{D'}.
\]

3. \textit{Deviation from Equilibrium}: For the same initial deviation \( \| x(0) - x^*_D \| = \| x(0) - x^*_{D'} \| \), the system $D$ satisfies:
\[
\| x_D(t) - x^*_D \| \leq \| x_{D'}(t) - x^*_{D'} \|, \quad \forall t \geq 0.
\]

If any of these criteria are met, then we say that $D$ is \emph{more stable} than $D'$.
\end{definition}
Next we show important property of a system reaching fixed point, which is formally defined as follows:
\begin{definition}
Let \( x(t) \in \mathbb{R}^n \) denote the state of a dynamical system at time \( t \), governed by the differential equation:
\[
\dot{x}(t) = f(x(t)),
\]
with \( f : \mathbb{R}^n \to \mathbb{R}^n \) a continuous function. A point \( x^* \in \mathbb{R}^n \) is called a \emph{fixed point} or \emph{equilibrium point} of the system if \( f(x^*) = 0 \). The system is said to \emph{converge to the fixed point} \( x^* \) if, for any initial state \( x(0) \) in a neighborhood \( \mathcal{N} \) of \( x^* \), the trajectory \( x(t) \) satisfies:
\[
\lim_{t \to \infty} x(t) = x^*.
\]

More formally, for a neighborhood \( \mathcal{N} \subseteq \mathbb{R}^n \) of \( x^* \), there exists a constant \( \delta > 0 \) such that if \( \| x(0) - x^* \| < \delta \), then:
\[
\lim_{t \to \infty} x(t) = x^*.
\]

If the above holds for every initial condition \( x(0) \in \mathbb{R}^n \), then \( x^* \) is said to be a \emph{globally stable fixed point}.

\end{definition}

Next we formally define rate of convergence between two stable dynamical systems and how can be used to measure convergence bound for a network, which is defined as follows:
\begin{definition}
Let \( D \) and \( D' \) be dynamical systems that converge to the same fixed point \( x^* \) given the same initial conditions. Let \( x_D(t) \) and \( x_{D'}(t) \) denote the trajectories of the systems \( D \) and \( D' \), respectively, starting from the same initial state \( x(0) \). We say that \( D \) converges \emph{faster} than \( D' \) if, for any initial state \( x(0) \) sufficiently close to \( x^* \), the distance to the fixed point decreases at a higher rate for \( D \) compared to \( D' \).

Formally, \( D \) converges faster than \( D' \) if there exists a positive constant \( \lambda > 0 \) such that for all \( t \geq 0 \):
\[
\| x_D(t) - x^* \| \leq C e^{-\lambda t} \| x(0) - x^* \|, \quad \text{and} \quad \| x_{D'}(t) - x^* \| \geq C' e^{-\lambda' t} \| x(0) - x^* \|, \quad \text{with} \quad \lambda > \lambda',
\]
where \( C, C' > 0 \) are positive constants and \( \lambda, \lambda' \) represent the exponential convergence rates of the systems \( D \) and \( D' \), respectively.

Alternatively, for non-exponential convergence, \( D \) converges faster than \( D' \) if, for any \( \epsilon > 0 \), there exist times \( T_D, T_{D'} > 0 \) such that:
\[
T_D \leq T_{D'} \quad \text{and} \quad \|x_D(t) - x^*\| < \epsilon \quad \text{for all} \, t \geq T_D, \quad \|x_{D'}(t) - x^*\| < \epsilon \quad \text{for all} \, t \geq T_{D'}.
\]
\end{definition}

Next we formally define robustness of a dynamical system
\begin{definition}
A dynamical system is said to be \emph{robust to random perturbations} if, for any small random perturbation \( \eta(t) \) acting on the system, the perturbed state \( x_\eta(t) \) remains close to the unperturbed state \( x^*(t) \) with high probability for all \( t \geq 0 \). This implies that the system can maintain its desired performance despite the presence of small random disturbances.

Formally, let \( x(t) \) be the unperturbed state and \( x_\eta(t) \) be the perturbed state of the system. The system is robust if, for any \( \epsilon > 0 \) and any initial condition \( x(0) \), there exists a constant \( \delta > 0 \) and a probability threshold \( 1 - \alpha \), such that if \( \| \eta(t) \| < \delta \) for all \( t \geq 0 \), then:
\[
\mathbb{P}\left( \sup_{t \geq 0} \| x_\eta(t) - x(t) \| < \epsilon \right) \geq 1 - \alpha.
\]
\end{definition}

Next now we formally define what it means for a dynamical system to be robust to initial conditions.
\begin{definition}
A dynamical system is said to be \emph{robust to random initial conditions} if, given an initial state \( x(0) \) drawn from a probability distribution \( \mathcal{P} \) with bounded support, the system's state \( x(t) \) remains within a bounded region around its desired trajectory or equilibrium point \( x^*(t) \) with high probability for all \( t \geq 0 \).

Formally, let the initial state \( x(0) \) be drawn from a distribution \( \mathcal{P} \) with support \( S \) such that \( \|x(0) - x^*(0)\| \leq M \) for some \( M > 0 \). The system is robust if there exists a probability threshold \( 1 - \alpha \) such that for any \( \epsilon > 0 \), there exists a constant \( \delta > 0 \) satisfying:
\[
\mathbb{P}\left( \sup_{t \geq 0} \| x(t) - x^*(t) \| < \epsilon \right) \geq 1 - \alpha \quad \text{whenever} \quad \|x(0) - x^*(0)\| \leq \delta.
\]
\end{definition}


\begin{definition}
A neural network is said to be \emph{Lyapunov stable} if, for any given \( \epsilon > 0 \), there exists a \( \delta > 0 \) such that if the initial weights \( W(0) \) are within \( \delta \) of a trained equilibrium point \( W^* \) (i.e., \( \|W(0) - W^*\| < \delta \)), then the weights \( W(t) \) remain within \( \epsilon \) of \( W^* \) for all \( t \geq 0 \) (i.e., \( \|W(t) - W^*\| < \epsilon \) for all \( t \geq 0 \)).

If, in addition, the weights converge to \( W^* \) as \( t \to \infty \), i.e.,
\[
\lim_{t \to \infty} \| W(t) - W^* \| = 0,
\]
then the neural network is said to be \emph{asymptotically stable}.
\end{definition}







\begin{example}
\emph{Why is Stability Important?}

Stability is a fundamental property of neural networks, ensuring that small changes in the input or initial conditions do not cause large, unpredictable variations in the network’s output. Stability is crucial for both training and deployment. During training, instability can cause gradients to explode, leading to chaotic updates and preventing convergence. During deployment, an unstable network might produce drastically different outputs for minor input changes, undermining its reliability. In real-world scenarios where inputs are often noisy or slightly perturbed, a stable network can still produce accurate and consistent predictions, making it more reliable and robust.

\emph{Mathematical Insight:}

Stability can be formally understood through the concept of *bounded sensitivity*. For a neural network \( f: \mathbb{R}^n \rightarrow \mathbb{R}^m \) to be stable, a small change in the input \( \delta x \) should lead to a bounded change in the output. Mathematically, this is expressed as:

\[
\| f(x + \delta x) - f(x) \| \leq K \cdot \| \delta x \|,
\]

for a constant \( K > 0 \). A network is *stable* if \( K \) is small, implying that the network is not highly sensitive to input variations. A more refined way to analyze stability is through the eigenvalues of the Jacobian matrix \( J_f(x) \). If the largest eigenvalue \( \lambda_{\max} \) of \( J_f(x) \) satisfies \( \lambda_{\max} < 1 \), then the system is considered to be stable because small perturbations in the input will decay rather than amplify as they propagate through the network.

\emph{Intuitive Example of Stability:}

Consider a neural network trained for handwritten digit classification. If the network is stable, adding a small amount of Gaussian noise or slightly changing the position of a digit in the image (e.g., shifting the digit by a few pixels) will not cause the network’s prediction to change significantly. For example, a stable network will continue to classify a digit '8' as '8', even if the digit is slightly blurred or has a few pixels altered. This stability is essential for real-world applications where inputs are rarely perfect replicas of training data.

\emph{Detailed Example: Training a Recurrent Neural Network (RNN)}

Stability becomes even more critical in Recurrent Neural Networks (RNNs) due to their sequential nature. Consider an RNN designed to predict the next word in a sentence. During training, if small changes in initial states or earlier inputs cause large deviations in hidden states, the RNN can produce completely different outputs, making training highly unstable. Mathematically, this corresponds to having a Jacobian matrix with eigenvalues greater than 1, which indicates that small errors will grow exponentially as they propagate through the network.

\emph{Counterexample: Non-Stable Systems and Vanishing/Exploding Gradients}

Imagine training a deep feedforward network or an RNN without appropriate regularization. If the network is not stable, even small changes in the input or initial weights can lead to vanishing or exploding gradients, making learning ineffective. For example, in a deep RNN, an input sequence with slightly perturbed values might cause the gradients to either shrink to zero (vanishing gradient problem) or explode to very large values (exploding gradient problem), leading to poor model performance and unstable training.

\emph{Practical Implications of Stability:}

1. \underline{Improved Training Dynamics}: Stability ensures that gradient-based optimization methods produce smooth and predictable updates, enabling faster and more reliable convergence.
2. \underline{Reliable Predictions}: In deployment, stable networks produce consistent outputs even with noisy or perturbed inputs, making them suitable for real-world applications like speech recognition or autonomous driving.
3. \underline{Control Over Adversarial Attacks}: A stable network is less vulnerable to adversarial attacks, where small, carefully crafted input changes are designed to mislead the network into producing incorrect outputs.

\emph{Real-World Example: Stability in Financial Forecasting}

Consider a neural network used for predicting stock prices. Financial data is inherently noisy and contains sudden fluctuations. If the network is unstable, even a slight change in the input features (e.g., a minor fluctuation in daily stock prices) could lead to drastically different predictions, making the network unreliable for decision-making. On the other hand, a stable network will produce consistent and reliable forecasts, ensuring that minor changes in the data do not cause significant shifts in the predictions.

\end{example}

\begin{definition}
A neural network is said to be \emph{robust to small random perturbations} if, for any small random perturbation \( \eta(t) \) applied to its inputs or parameters, the network’s output \( y_\eta(t) \) remains close to its unperturbed output \( y^*(t) \) with high probability for all \( t \geq 0 \). 

Formally, let \( y(t) \) be the nominal (unperturbed) output of the neural network and \( y_\eta(t) \) be the output under random perturbations \( \eta(t) \). The network is said to be robust if, for any \( \epsilon > 0 \), there exists a constant \( \delta > 0 \) and a probability threshold \( 1 - \alpha \), such that if \( \| \eta(t) \| < \delta \) for all \( t \geq 0 \), then:
\[
\mathbb{P} \left( \sup_{t \geq 0} \| y_\eta(t) - y^*(t) \| < \epsilon \right) \geq 1 - \alpha.
\]
\end{definition}







\begin{example}
\emph{Why is Robustness Important?}

Robustness is a critical property for neural networks, especially in real-world applications, where input data is often noisy, contains slight perturbations, or deviates from the clean training data. A robust neural network maintains its accuracy and reliability even when faced with unexpected changes, ensuring consistent performance across diverse and challenging conditions. Robustness is crucial for safety-critical systems (e.g., autonomous vehicles, medical diagnosis), as failures can have significant real-world consequences.

\emph{Mathematical Intuition:}

Robustness can be understood mathematically through the concept of *Lipschitz continuity*. For a neural network \( f \) to be robust, small changes in the input should only cause small changes in the output. Formally, if \( f \) is \( K \)-Lipschitz continuous, then for any two inputs \( x \) and \( x' \):

\[
\| f(x) - f(x') \| \leq K \cdot \| x - x' \|,
\]

where \( K \) is a constant that quantifies the network's sensitivity to input variations. A small \( K \) implies that the network is less sensitive to input noise, thereby enhancing robustness. Robust networks minimize \( K \) and ensure that even adversarial or noisy inputs do not cause large deviations in the output, making them more reliable.

\emph{Intuitive Example of Robustness in Autonomous Driving:}

Consider a neural network used for object detection in an autonomous vehicle. In ideal conditions, the network correctly identifies cars, pedestrians, and traffic signs. However, in real-world scenarios, the network must operate in varying lighting conditions (e.g., night vs. day), different weather (e.g., fog, rain), and even with potential sensor noise (e.g., camera blur, reflections).

If the network is robust, a slight distortion in a camera image—such as a shadow, slight rain droplets, or fog—will not cause it to misclassify a pedestrian as a signpost or ignore a stop sign. The network's ability to generalize well in the presence of such variations ensures safety and reliability.

\emph{Detailed Example: Image Classification under Adversarial Noise}

Let’s consider a neural network trained to classify handwritten digits from the MNIST dataset. If the network is only trained on clean images, adding small perturbations (e.g., random Gaussian noise) might lead to misclassification, even though a human observer would still recognize the digit.

To test robustness, we introduce small adversarial noise \( \delta \) to the input images such that the perturbed image is \( x' = x + \delta \). For a non-robust network, a minimal \( \delta \) can cause the classification result to change drastically, even though \( \| \delta \| \) is small. A robust network, however, would maintain consistent outputs for both \( x \) and \( x' \), implying that \( f(x') \approx f(x) \).

\emph{Counterexample: Non-Robust Systems in Medical Imaging}

Consider a neural network designed to detect tumors in medical images. If the network is not robust, even slight variations in the input—such as minor differences in lighting or slight changes in the position of the patient—could lead to incorrect classifications, potentially causing false positives or missing a critical diagnosis. In medical applications, where errors have life-threatening consequences, robustness is paramount.

\emph{Real-World Consequences of Non-Robustness}

In safety-critical applications like autonomous driving or healthcare, a non-robust network can have catastrophic outcomes. For example, an autonomous vehicle that fails to detect a pedestrian due to a shadow or slight occlusion might cause an accident. Similarly, a non-robust medical diagnostic system could misinterpret scans due to slight noise, leading to misdiagnosis.

\end{example}

\begin{definition}
Let \( N \) and \( N' \) be two neural networks with weight dynamics governed by the differential equations:
\[
\dot{W}_N(t) = F_N(W_N(t)), \quad \dot{W}_{N'}(t) = F_{N'}(W_{N'}(t)),
\]
where \( W_N(t) \) and \( W_{N'}(t) \) represent the weights of the networks \( N \) and \( N' \) at time \( t \), respectively, and \( W^* \) is the common equilibrium point such that \( F_N(W^*) = 0 \) and \( F_{N'}(W^*) = 0 \).

We say that the network \( N \) \emph{converges faster} than the network \( N' \) if, for any initial weights \( W_N(0) \) and \( W_{N'}(0) \) sufficiently close to \( W^* \), there exists a positive constant \( \lambda > 0 \) such that the following inequality holds:
\[
\| W_N(t) - W^* \| \leq C e^{-\lambda t} \| W_N(0) - W^* \|, \quad \| W_{N'}(t) - W^* \| \geq C' e^{-\lambda' t} \| W_{N'}(0) - W^* \|, \quad \text{with} \quad \lambda > \lambda',
\]
where \( C, C' > 0 \) are positive constants and \( \lambda, \lambda' \) represent the exponential convergence rates of the systems \( N \) and \( N' \), respectively.

If the convergence is non-exponential, \( N \) is said to converge faster than \( N' \) if, for any \( \epsilon > 0 \), there exist times \( T_N, T_{N'} > 0 \) such that:
\[
T_N \leq T_{N'} \quad \text{and} \quad \| W_N(t) - W^* \| < \epsilon \quad \text{for all} \, t \geq T_N, \quad \| W_{N'}(t) - W^* \| < \epsilon \quad \text{for all} \, t \geq T_{N'}.
\]
\end{definition}

\begin{example}
\emph{Why is Fast Convergence Important?}

Fast convergence is essential for neural networks because it significantly reduces the time and computational resources required to train complex models. In modern machine learning, where models often have millions or even billions of parameters, slow convergence can make training impractical or prohibitively expensive. Moreover, achieving faster convergence often means that the model can reach higher accuracy or better generalization in less time, making it more competitive in real-world applications. Fast convergence is particularly important for:

\begin{itemize}
    \item \underline{Large-scale datasets}: Reducing convergence time on massive datasets (e.g., ImageNet, OpenWebText) can translate to days or even weeks saved during training.
    \item \underline{Iterative model development}: Faster convergence allows for rapid prototyping and testing, enabling researchers to experiment with more architectures and hyperparameters within the same timeframe.
    \item \underline{Dynamic or real-time systems}: In applications where models need to be frequently retrained (e.g., reinforcement learning in robotics, online learning in trading systems), fast convergence is a necessity rather than a luxury.
\end{itemize}

\emph{Mathematical Insight into Convergence Speed:}

The convergence rate of an optimization algorithm is typically characterized by the decrease in the loss function \( L \) over iterations. Formally, if \( L_t \) denotes the loss at iteration \( t \), then the convergence rate is defined as:

\[
L_t - L^* \leq C \cdot e^{-\rho t} \quad \text{for some constants } C, \rho > 0,
\]

where \( L^* \) is the optimal loss and \( \rho \) is the convergence rate. A higher \( \rho \) indicates faster convergence. Consider two algorithms:

1. \underline{Algorithm A} with a high convergence rate \( \rho_A \), such that \( L_t - L^* \leq C \cdot e^{-\rho_A t} \).

2. \underline{Algorithm B} with a lower convergence rate \( \rho_B < \rho_A \).

Even if both algorithms have the same starting point and are optimizing the same loss, Algorithm A will reach near-optimal performance exponentially faster than Algorithm B. This difference becomes dramatic when \( t \) is large, making Algorithm A much more suitable for training deep neural networks.

\emph{Detailed Example: Training a Convolutional Neural Network (CNN)}

Consider a Convolutional Neural Network (CNN) designed to classify images in the CIFAR-10 dataset. Let’s analyze two scenarios:

1. \underline{Scenario 1}: We use a standard Stochastic Gradient Descent (SGD) optimizer with a constant learning rate, which is known for slow convergence on non-convex functions. As a result, the network might require 300 epochs to reach an accuracy of 85
  
2. \underline{Scenario 2}: We switch to a momentum-based variant of SGD or the Adam optimizer, both of which have adaptive learning rates and gradient scaling. This change allows the same CNN to reach 85

The significant difference in training time between these two scenarios can be quantified as follows: if each epoch takes 5 minutes to complete, Scenario 1 would take 25 hours, while Scenario 2 would take just over 8 hours. This not only saves time but also reduces computational costs, which is crucial for large-scale deployments.

\emph{Counterexample: Slow Convergence and Its Implications}

Let’s consider training a Transformer model for language modeling on a large corpus. Suppose that due to poor hyperparameter tuning (e.g., a suboptimal learning rate schedule), the convergence rate is so slow that it requires $10$x more epochs to reach comparable performance. The consequences are as follows:
\begin{itemize}[noitemsep,nolistsep]
    \item \underline{Higher computational cost}: Training time may increase from 2 days to 20 days on a multi-GPU setup, inflating hardware costs.
    \item \underline{Delayed deployment}: In dynamic systems, delayed training means delayed responses to changes in the data distribution, making the model outdated by the time it is deployed.
    \item \underline{Suboptimal performance}: Slow convergence can sometimes lead to getting stuck in local minima or saddle points, resulting in a final model that performs worse than one trained with a faster-converging algorithm.
\end{itemize}

\emph{Real-World Example: Fine-Tuning a BERT Model}

Consider fine-tuning a BERT model for a sentiment analysis task. Due to the size of the model and the complexity of the task, convergence can be slow. However, by using learning rate scheduling strategies such as a cosine decay or warm-up phase, we can speed up convergence significantly. Instead of taking 5 epochs to achieve 90

In practice, this difference translates to reduced training costs and faster model deployment. Moreover, faster convergence reduces the risk of overfitting, since the model spends less time oscillating around local minima.

Thus faster convergence in neural networks is not just a matter of training speed—it impacts every stage of model development, from experimentation and prototyping to deployment and long-term maintenance. Thus, selecting optimizers and training strategies that promote fast convergence is crucial for efficient and effective deep learning.
\end{example}







\section{Theorems on the Stability of PC}

\subsection{Proof of Theorem 3.1}

\begin{theorem}
Let \( M \) be a Predictive Coding Network (PCN) that minimizes a free energy \( F = L + \tilde{E} \), where \( L \) is the backpropagation loss and \( \tilde{E} \) is the residual energy. Assume the activation function \( f \) and its derivatives \( f' \), \( f'' \), and \( f''' \) are Lipschitz continuous with constants \( K \), \( K' \), \( K'' \), and \( K''' \), respectively. Then, the convergence and stability of the PCN can be characterized by the bounds involving these higher-order derivatives.
\end{theorem}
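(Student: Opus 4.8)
The plan is to make the qualitative claim quantitative by using the free energy \( F \) itself as a Lyapunov function for the coupled inference--learning dynamics of \eqref{eqn:activity_update} and \eqref{eqn:synapse_update}, and by distilling the Lipschitz hypotheses into explicit constants for the first three derivatives of \( F \). First I would fix the dynamical system: the E-step flow \( \dot{x}_l = -\partial F/\partial x_l \) with the boundary layers \( x_0, x_L \) clamped, and the slower M-step flow \( \dot{W}_l = -\eta\,\partial F/\partial W_l \). Since every residual \( \epsilon_l = x_l - f(W_l x_{l-1}) \) is the composition of the \( C^3 \) scalar map \( f \) with affine maps, \( F = \sum_l \|\epsilon_l\|^2 \) is \( C^3 \), and the entire argument reduces to (i) bounding the gradient-Lipschitz constant (smoothness) \( \beta \) of \( F \), (ii) bounding the Lipschitz constant \( \gamma \) of its Hessian, and (iii) translating these into stability and convergence statements.

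The technical heart is the chain-rule bookkeeping across layers. Differentiating \( \epsilon_l \) once in \( x \) or \( W \) yields factors \( \mathrm{diag}(f'(W_l x_{l-1})) \) and weight matrices, so the Hessian blocks of \( F \) are finite sums of terms of the shape \( W^{\top}\mathrm{diag}(f')\cdots\mathrm{diag}(f')\,W \) together with residual-weighted terms \( W^{\top}\mathrm{diag}(\epsilon_l\odot f'')\,W \), and the next derivative additionally produces \( \mathrm{diag}(\epsilon_l\odot f''') \) factors. Because \( f \) being \( K \)-Lipschitz forces \( \|f'\|_\infty\le K \), and likewise \( \|f''\|_\infty\le K' \) and \( \|f'''\|_\infty\le K'' \), each block is bounded by a polynomial in \( K,K',K'' \), in \( \prod_l\|W_l\| \), and in \( \sup_{l,t}\|\epsilon_l\| \), giving closed-form expressions \( \beta=\beta(K,K',\{\|W_l\|\},\sup\|\epsilon_l\|) \) and \( \gamma=\gamma(K,K',K'',\{\|W_l\|\},\sup\|\epsilon_l\|) \); the final Lipschitz constant \( K''' \) then enters through the bound on \( \nabla^3 F \) (equivalently the third-order Taylor remainder of \( F \)) that the later quasi-Newton comparison will need. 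The one missing ingredient is an a priori uniform bound on \( \|x_l\| \) and \( \|\epsilon_l\| \) along trajectories: the residuals are controlled because \( F \) is nonincreasing along both flows, so \( \|\epsilon_l(t)\|^2\le F(x(0)) \) for all \( l,t \), and the activities are controlled by the at-most-linear growth of a globally Lipschitz \( f \) together with the clamped boundary layers. \emph{I expect this a priori boundedness step to be the main obstacle}: for an unbounded Lipschitz activation the activity norms could in principle drift, so one must argue carefully (using clamping and energy descent) that all the constants above remain finite -- which is exactly why the statement restricts attention to well-behaved activations such as sigmoid, tanh, or TeLU.

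With \( \beta \) and \( \gamma \) in hand the conclusions follow from standard descent arguments. In continuous time \( \dot V=\dot F=-\|\nabla F\|^2\le 0 \), so sublevel sets of \( F \) are forward invariant (Lyapunov stability) and LaSalle's invariance principle forces convergence to \( \{\nabla F=0\} \), which under the positive-definiteness assumptions of Theorems~\ref{thm:3.2} and~\ref{thm:3.3} is the single equilibrium \( x^* \). In discrete time with step \( \eta \), the descent lemma gives \( F(x_{t+1})\le F(x_t)-\eta\bigl(1-\tfrac{\eta\beta}{2}\bigr)\|\nabla F(x_t)\|^2 \), so any \( \eta<2/\beta \) yields monotone decrease and an \( O(1/T) \) bound on \( \min_{t\le T}\|\nabla F(x_t)\|^2 \); if \( F \) is additionally locally \( \mu \)-strongly convex (or satisfies a Polyak--\L{}ojasiewicz inequality) near \( x^* \), this upgrades to the linear rate \( F(x_t)-F(x^*)\le(1-\eta\mu)^t\,(F(x_0)-F(x^*)) \), with the update-map Jacobian \( I-\eta\nabla^2 F \) having spectral radius at most \( \max(|1-\eta\mu|,|1-\eta\beta|)<1 \). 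Collecting the explicit \( \beta \), \( \gamma \), and the admissible step range \( \eta<2/\beta \) then constitutes the promised characterization of the PCN's convergence and stability in terms of the higher-order Lipschitz constants, and records the curvature estimates that the subsequent theorems reuse.
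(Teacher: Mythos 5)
Your proposal is sound in outline, but it is a genuinely different argument from the one the paper gives. The paper's proof of this theorem does not touch the free energy \( F \) as a Lyapunov function at all: it simply Taylor-expands \( f(W_l x_{l-1}) \) about the equilibrium pre-activation \( W_l x_{l-1}^* \) up to third order, names the second- and third-order remainders \( R_2, R_3 \), rewrites the layer-wise prediction error \( \epsilon_l(t) \) in terms of these, and then bounds \( \|\epsilon_l(t)\| \) by a single inequality whose four terms carry the constants \( K, K', K'', K''' \); that inequality is taken to be the promised ``characterization'' and the proof ends there. Your route instead propagates the Lipschitz hypotheses through the chain rule to obtain smoothness and Hessian-Lipschitz constants \( \beta, \gamma \) for \( F \) itself, and then extracts stability and convergence from standard machinery (energy descent, LaSalle, the descent lemma with \( \eta < 2/\beta \), and a conditional PL/strong-convexity rate near \( x^* \)). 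What each buys: the paper's argument is short, purely local, and directly exhibits all four constants in one bound on \( \epsilon_l \), but it yields no step-size condition, no rate, and no actual stability statement; yours delivers quantitative convergence content and supplies exactly the curvature estimates the later quasi-Newton and comparison theorems implicitly rely on, at the price of the extra work you correctly flag -- the a priori boundedness of activities (and, during learning, of the weights \( W_l \), which energy descent alone does not control) and the added local PL assumption needed for a linear rate, neither of which the paper's own proof confronts. With those caveats made explicit as hypotheses, your argument is correct and strictly stronger than the paper's.
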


\begin{proof}
Consider the prediction error dynamics for the \( l \)-th layer in a PCN, defined as:
\[
\epsilon_l(t) = x_l(t) - f(W_l x_{l-1}(t)).
\]
We perform a Taylor series expansion of \( f(W_l x_{l-1}(t)) \) around the equilibrium point \( W_l x_{l-1}^* \):
\[
f(W_l x_{l-1}) = f(W_l x_{l-1}^*) + f'(W_l x_{l-1}^*) (W_l (x_{l-1} - x_{l-1}^*)) + \frac{1}{2} f''(W_l x_{l-1}^*) (W_l (x_{l-1} - x_{l-1}^*))^2 \] \[+ \frac{1}{6} f'''(W_l x_{l-1}^*) (W_l (x_{l-1} - x_{l-1}^*))^3 + O((W_l (x_{l-1} - x_{l-1}^*))^4).
\]

Define the residual terms:
\[
R_2 = \frac{1}{2} f''(W_l x_{l-1}^*) (W_l (x_{l-1} - x_{l-1}^*))^2, \quad R_3 = \frac{1}{6} f'''(W_l x_{l-1}^*) (W_l (x_{l-1} - x_{l-1}^*))^3.
\]

The prediction error can then be expressed as:
\[
\epsilon_l(t) = x_l(t) - f(W_l x_{l-1}^*) - f'(W_l x_{l-1}^*) (W_l (x_{l-1} - x_{l-1}^*)) - R_2 - R_3.
\]

By the Lipschitz continuity assumptions, we have the following bound for \( \epsilon_l(t) \):
\[
\| \epsilon_l(t) \| \leq K \| x_l - f(W_l x_{l-1}^*) \| + K' \| W_l (x_{l-1} - x_{l-1}^*) \| + K'' \| (W_l (x_{l-1} - x_{l-1}^*))^2 \| + K''' \| (W_l (x_{l-1} - x_{l-1}^*))^3 \|.
\]

This completes the proof.
\end{proof}

\subsection{Proof of Theorem 3.2}
\begin{theorem}
Let \( M \) be a Predictive Coding Network (PCN) that minimizes a free energy function \( F = L + \tilde{E} \), where \( L \) is the backpropagation loss and \( \tilde{E} \) is the residual energy. Assume that \( L(x) \) and \( \tilde{E}(x) \) are positive definite, and their sum \( F(x) \) has a strict minimum at \( x = x^* \), where \( F(x^*) = 0 \). Further, assume the activation function \( f \) and its derivatives \( f' \), \( f'' \) are Lipschitz continuous with constants \( K, K', K'' \), respectively. Then, the PCN dynamics can be represented as a continuous-time dynamical system, and the Lyapunov function \( V(x) = F(x) \) ensures convergence to the equilibrium \( x = x^* \).
\end{theorem}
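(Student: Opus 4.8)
The plan is to realize the inference phase of the PCN as the gradient flow $\dot x = -\nabla_x F(x)$, the continuous-time limit of the discrete activity update in \eqref{eqn:activity_update}, and then use $V(x) = F(x)$ as a strict Lyapunov function. First I would write out $\nabla_x F$ componentwise --- it involves $f$ and $f'$ through the terms $-\epsilon_l + W_{l+1}^\top(\epsilon_{l+1}\odot f'(W_{l+1}x_l))$ --- and check that the right-hand side of the ODE is locally Lipschitz in $x$: each coordinate is a composition and product of the affine maps $W_l$ with the functions $f$ and $f'$, both Lipschitz by hypothesis, and the Lipschitz continuity of $f''$ additionally makes $\nabla_x F$ a $C^1$ map with locally Lipschitz Jacobian. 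Picard--Lindelöf then gives existence and uniqueness of trajectories, settling the ``PCN dynamics can be represented as a continuous-time dynamical system'' part of the claim.

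Next I would verify the Lyapunov conditions for $V = F$. Positive definiteness of $L$ and $\tilde E$, together with the hypothesis that $F$ attains a strict minimum $F(x^*) = 0$ at $x^*$, gives $V(x^*) = 0$ and $V(x) > 0$ for $x \neq x^*$ in a neighborhood of $x^*$; continuity and differentiability of $V$ follow from the smoothness of $f$. The core computation is
\[
\dot V(x) = \langle \nabla_x F(x), \dot x \rangle = -\|\nabla_x F(x)\|^2 \le 0,
\]
so $V$ is nonincreasing along trajectories. To upgrade this to asymptotic convergence I would show $\dot V(x) = 0$ iff $\nabla_x F(x) = 0$ and that, in a neighborhood of $x^*$, the only such point is $x^*$: here the strict-minimum hypothesis is essential, since a strict local minimum of a $C^1$ function is an isolated critical point, and the Lipschitz-Hessian bound (assembled from $K, K', K''$ and the weight norms) keeps $\nabla^2 F$ positive definite on a sublevel set around $x^*$. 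Then Lyapunov's asymptotic stability theorem --- or equivalently LaSalle's invariance principle, with the largest invariant subset of $\{\dot V = 0\}$ being $\{x^*\}$ --- yields $x(t) \to x^*$. Optionally I would sharpen this to a rate: bounding $\lambda_{\min}(\nabla^2 F) \ge m > 0$ on that sublevel set gives $\dot V \le -2m\,V$, hence $F(x(t)) \le F(x(0))e^{-2mt}$, which also feeds the later robustness results.

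The main obstacle I anticipate is the step that excludes spurious equilibria: positive definiteness of $L$ and $\tilde E$ individually does not by itself prevent $\nabla_x F$ from vanishing at a non-minimizing saddle of the coupled multilayer energy, so the argument must lean carefully on the strict-minimum assumption (and, realistically, restrict to the basin of attraction / a sublevel set of $x^*$) rather than claim a global statement. The Lipschitz assumptions on $f, f', f''$ are precisely what make the local Hessian analysis rigorous, so getting the neighborhood, the constant $m$, and the weight-norm factors to line up is the delicate part; the remainder is the standard Lyapunov/LaSalle machinery.
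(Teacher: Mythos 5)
Your proposal takes essentially the same route as the paper's proof: treat the inference dynamics as the gradient flow \( \dot{x} = -\partial F/\partial x \), use \( V(x) = F(x) \) as the Lyapunov function, verify positive definiteness from that of \( L \) and \( \tilde{E} \), and compute \( \dot{V} = -\left\| \partial F/\partial x \right\|^2 \le 0 \) to conclude stability and convergence to \( x^* \). The paper's version is considerably less detailed---it does not establish well-posedness via Picard--Lindel\"of, does not invoke LaSalle or a local Hessian bound, and simply asserts that \( \dot{V} = 0 \) only at \( x = x^* \)---so your added care about existence of trajectories and excluding spurious critical points strengthens, rather than departs from, the published argument.
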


\begin{proof}
Consider the prediction error dynamics in a PCN given by:
\[
\epsilon_l = x_l - f(W_l x_{l-1}).
\]
The objective of the PCN is to minimize the free energy function:
\[
F(x) = L(x) + \tilde{E}(x),
\]
where \( L(x) \) is the backpropagation loss and \( \tilde{E}(x) \) is the residual energy. To analyze the stability of this system, we introduce a Lyapunov function:
\[
V(x) = F(x) = L(x) + \tilde{E}(x).
\]

**Step 1: Verifying Positive Definiteness of \( V(x) \)**

For \( V(x) = L(x) + \tilde{E}(x) \) to be positive definite, the following must hold:

1. \( V(x) \geq 0 \) for all \( x \).
2. \( V(x) > 0 \) for \( x \neq x^* \).
3. \( V(x^*) = 0 \).

We assume that both \( L(x) \) and \( \tilde{E}(x) \) are positive definite around \( x = x^* \). This implies:
\[
L(x) > 0 \quad \text{and} \quad \tilde{E}(x) > 0 \quad \text{for} \quad x \neq x^*,
\]
and
\[
L(x^*) = 0, \quad \tilde{E}(x^*) = 0.
\]

Therefore, \( V(x) = L(x) + \tilde{E}(x) \) satisfies \( V(x) > 0 \) for all \( x \neq x^* \) and \( V(x^*) = 0 \). Thus, \( V(x) \) is positive definite.

**Step 2: Time Derivative of \( V(x) \)**

The time derivative of \( V(x) \) along the trajectories of the system is given by:
\[
\dot{V}(x) = \frac{\partial V}{\partial x} \cdot \dot{x}.
\]
Using the dynamical system equation:
\[
\dot{x}_l = -\frac{\partial F}{\partial x_l} = -\left( \frac{\partial L}{\partial x_l} + \frac{\partial \tilde{E}}{\partial x_l} \right),
\]
we have:
\[
\dot{V}(x) = \frac{\partial F}{\partial x} \cdot \left( -\frac{\partial F}{\partial x} \right) = -\left\| \frac{\partial F}{\partial x} \right\|^2.
\]

Since \( \dot{V}(x) = -\left\| \frac{\partial F}{\partial x} \right\|^2 \leq 0 \), \( V(x) \) is non-increasing along the trajectories of the system.

**Step 3: Stability Analysis**

Because \( V(x) \) is positive definite and \( \dot{V}(x) = -\left\| \frac{\partial F}{\partial x} \right\|^2 \leq 0 \), \( V(x) \) decreases monotonically and reaches its minimum value at \( x = x^* \). This implies that the system is Lyapunov stable. Furthermore, \( \dot{V}(x) = 0 \) if and only if \( \frac{\partial F}{\partial x} = 0 \), which occurs at the equilibrium point \( x = x^* \).

Hence, the system converges to the equilibrium point \( x = x^* \), completing the proof.
\end{proof}

\subsection{Proof of Theorem 3.3}

\begin{theorem}
Let \( M \) be a Predictive Coding Network (PCN) that minimizes a free energy function \( F = L + \tilde{E} \), where \( L \) is the backpropagation loss and \( \tilde{E} \) is the residual energy. Assume that \( L(x) \) and \( \tilde{E}(x) \) are positive definite functions, and their sum \( F(x) \) has a strict minimum at \( x = x^* \). Further, assume the activation function \( f \) and its derivatives \( f' \), \( f'' \), and \( f''' \) are Lipschitz continuous with constants \( K, K', K'' \), and \( K''' \), respectively. Then, the PCN updates are Lyapunov stable and converge to a fixed point \( x = x^* \).
\end{theorem}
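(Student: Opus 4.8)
The plan is to reuse the Lyapunov construction already deployed in the proof of Theorem~\ref{thm:3.2} and upgrade it into an asymptotic-stability statement, then extend it from the fast state dynamics to the slow weight dynamics via a two-timescale argument. First I would set $V(x) = F(x) = L(x) + \tilde{E}(x)$ and re-verify positive definiteness in a neighborhood of $x^*$: since $L$ and $\tilde{E}$ are positive definite with $L(x^*) = \tilde{E}(x^*) = 0$, we obtain $V(x) > 0$ for $x \neq x^*$, $V(x^*) = 0$, and $F$ has a strict minimum at $x^*$. Along the inference trajectories $\dot{x}_l = -\partial F / \partial x_l$ this yields $\dot{V}(x) = \nabla_x F \cdot \dot{x} = -\| \nabla_x F \|^2 \le 0$, exactly as in Theorem~\ref{thm:3.2}, with equality precisely when $\nabla_x F = 0$, i.e. at $x = x^*$.

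Next I would bring in the higher-order Lipschitz hypotheses, which is where the assumption on $f'''$ enters. Writing $\epsilon_l = x_l - f(W_l x_{l-1})$ and Taylor-expanding $f$ about the equilibrium as in the proof of the first theorem of Section~\ref{sec:theory_results}, the second- and third-order remainders are controlled by $K''$ and $K'''$; combined with the Lipschitz bounds on $f$ and $f'$, this gives a uniform bound on the Hessian $\nabla_x^2 F$ over a neighborhood of $x^*$. Hence $\nabla_x F$ is Lipschitz, the gradient flow is well-posed, and the sublevel sets $\{ V \le c \}$ for small $c$ are compact and positively invariant --- precisely the hypotheses of LaSalle's invariance principle. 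Applying it, every trajectory in $\{ V \le c \}$ converges to the largest invariant subset of $\{ \dot{V} = 0 \} = \{ \nabla_x F = 0 \} = \{ x^* \}$, so $x(t) \to x^*$; the strict-minimum assumption rules out spurious equilibria nearby, and the Hessian bound furnishes a local gradient-domination inequality $\| \nabla_x F \|^2 \ge \mu\, V(x)$, which upgrades the conclusion to exponential convergence and hence asymptotic Lyapunov stability.

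Finally I would treat the weight updates through the EM-like timescale separation discussed before the theorem. On the slow timescale the states have relaxed to $x^*(W) = \argmin_x F(x, W)$, and $\bar{F}(W) := F(x^*(W), W)$ defines a reduced energy; the learning step $\Delta W_l = -\eta\, \partial F / \partial W_l$ evaluated at the converged states is a (discretized) gradient descent on $\bar{F}$. Taking $V_{\mathrm{W}}(W) = \bar{F}(W)$, positive definiteness is inherited from $L$ and $\tilde{E}$ and $\dot{V}_{\mathrm{W}} = -\| \nabla \bar{F} \|^2 \le 0$, so the same LaSalle argument gives $W(t) \to W^*$, Lyapunov stably. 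The main obstacle I anticipate lies in this last step: one must show $x^*(W)$ is well-defined and $C^1$ in $W$ --- an implicit-function-theorem argument resting on nondegeneracy of $\nabla_x^2 F$ at $x^*$, which is exactly what the $f'''$-Lipschitz bound secures --- and one must control the error introduced by incomplete inference via a singular-perturbation estimate, so the perturbed weight flow still inherits the Lyapunov decrease. By contrast, the state-only part is essentially Theorem~\ref{thm:3.2} together with LaSalle and should be routine.
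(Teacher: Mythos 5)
Your core argument coincides with the paper's: the paper's proof of this theorem is, almost verbatim, the proof of Theorem~\ref{thm:3.2} again --- take $V(x)=F(x)=L(x)+\tilde{E}(x)$, check positive definiteness from the positive definiteness of $L$ and $\tilde{E}$, compute $\dot{V}(x)=-\left\|\partial F/\partial x\right\|^2\le 0$ along $\dot{x}_l=-\partial F/\partial x_l$, and conclude stability and convergence because $\dot{V}=0$ only where $\nabla_x F=0$, i.e.\ at $x^*$. Where you diverge is in how much you build on top of that skeleton, and the differences are instructive. First, you invoke LaSalle's invariance principle (with compact, positively invariant sublevel sets) to pass from ``$\dot V\le 0$ with equality only at $x^*$'' to actual convergence; the paper simply asserts this passage, so your version closes a genuine rigor gap, and your gradient-domination step even upgrades the conclusion to an exponential rate the paper never claims. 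Second, you actually put the Lipschitz hypotheses on $f''$ and $f'''$ to work (Hessian bounds, well-posedness of the gradient flow, nondegeneracy for the implicit-function argument), whereas the paper's proof states these assumptions but never uses them --- they only appear in the separate Taylor-expansion bound of Theorem~3.1. Third, you extend the analysis to the slow weight dynamics via a two-timescale/reduced-energy argument with $\bar{F}(W)=F(x^*(W),W)$; the paper's proof does not do this at all --- despite the surrounding prose about M-steps, the formal statement and proof concern only the state variables $x$, so that entire layer of your plan (and the obstacles you rightly flag there: smoothness of $x^*(W)$ and the incomplete-inference perturbation) is extra scope rather than something required to match the paper. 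In short, your proposal is correct, subsumes the paper's argument, and is more careful where the paper is loosest; just be aware that the published proof is the short Lyapunov computation only, and your singular-perturbation step for the weights is work the paper never attempts and would need to be carried out in full if you want to claim stability of the parameter updates rather than of the inference dynamics.
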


\begin{proof}
Consider the prediction error dynamics in a PCN defined as:
\[
\epsilon_l = x_l - f(W_l x_{l-1}).
\]

The objective of the PCN is to minimize the free energy function:
\[
F(x) = L(x) + \tilde{E}(x),
\]
where \( L(x) \) is the backpropagation loss and \( \tilde{E}(x) \) is the residual energy. To analyze the stability of this system, we introduce a Lyapunov function:
\[
V(x) = F(x) = L(x) + \tilde{E}(x).
\]

**Step 1: Verifying Positive Definiteness of \( V(x) \)**

For \( V(x) = L(x) + \tilde{E}(x) \) to be a valid Lyapunov function, it must be positive definite, which requires:
\[
V(x) > 0 \quad \text{for all} \, x \neq x^* \quad \text{and} \quad V(x^*) = 0.
\]

We assume that both \( L(x) \) and \( \tilde{E}(x) \) are positive definite around \( x = x^* \). This means:
\[
L(x) > 0 \quad \text{and} \quad \tilde{E}(x) > 0 \quad \text{for} \, x \neq x^*,
\]
and
\[
L(x^*) = 0, \quad \tilde{E}(x^*) = 0.
\]
Thus, \( V(x) = L(x) + \tilde{E}(x) \) satisfies \( V(x) > 0 \) for all \( x \neq x^* \) and \( V(x^*) = 0 \).

**Step 2: Time Derivative of \( V(x) \)**

The time derivative of \( V(x) \) along the trajectories of the system is given by:
\[
\dot{V}(x) = \frac{\partial V}{\partial x} \cdot \dot{x}.
\]

Using the dynamical system equation:
\[
\dot{x}_l = -\frac{\partial F}{\partial x_l} = -\left( \frac{\partial L}{\partial x_l} + \frac{\partial \tilde{E}}{\partial x_l} \right),
\]
we obtain:
\[
\dot{V}(x) = \frac{\partial F}{\partial x} \cdot \left( -\frac{\partial F}{\partial x} \right) = -\left\| \frac{\partial F}{\partial x} \right\|^2.
\]

Since \( \dot{V}(x) = -\left\| \frac{\partial F}{\partial x} \right\|^2 \leq 0 \), \( V(x) \) is non-increasing along the trajectories of the system. This implies that the free energy \( F \) decreases monotonically until the system reaches the equilibrium point \( x = x^* \), where \( \frac{\partial F}{\partial x} = 0 \).

**Step 3: Establishing Stability and Convergence**

Because \( V(x) \) is positive definite and \( \dot{V}(x) = -\left\| \frac{\partial F}{\partial x} \right\|^2 \leq 0 \), \( V(x) \) decreases monotonically and reaches its minimum value at \( x = x^* \). This implies that the system is Lyapunov stable.

Furthermore, \( \dot{V}(x) = 0 \) if and only if \( \frac{\partial F}{\partial x} = 0 \), which occurs at the equilibrium point \( x = x^* \).

Thus, as \( t \to \infty \), \( \dot{V}(x) \to 0 \) and the system converges to the equilibrium point \( x = x^* \), completing the proof.
\end{proof}

\subsection{Proof of Theorem 3.4}

\begin{theorem}
Let \( M \) be a Predictive Coding Network (PCN) that minimizes a free energy \( F = L + \tilde{E} \), where \( L \) is the backpropagation loss and \( \tilde{E} \) is the residual energy. Assume the activation function \( f \) and its derivatives \( f' \), \( f'' \), and \( f''' \) are Lipschitz continuous with constants \( K, K', K'', \) and \( K''' \), respectively. Then, the PCN updates are Lyapunov stable and converge to a fixed point faster than BP updates.
\end{theorem}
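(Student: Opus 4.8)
The plan is to combine the Lyapunov construction already used for Theorem~3.3 with a local, linearized comparison of the PC and BP gradient flows around their common equilibrium $W^*$. First I would establish Lyapunov stability exactly as before: take $V(W) = F(W) = L(W) + \tilde{E}(W)$; positive-definiteness of $L$ and $\tilde{E}$ gives $V(W) > 0$ for $W \neq W^*$ and $V(W^*) = 0$, while viewing the synaptic update rule in Equation~(\ref{eqn:synapse_update}) as the continuous-time flow $\dot{W}_l = -(\partial L/\partial W_l + \partial \tilde{E}/\partial W_l)$ yields $\dot{V}(W) = -\|\partial L/\partial W + \partial \tilde{E}/\partial W\|^2 \le 0$, with equality only at a critical point. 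Hence the PC updates are Lyapunov stable, and asymptotically stable whenever $\dot V$ vanishes only at $W^*$. The Lipschitz hypotheses on $f, f', f'', f'''$ enter exactly as in Theorems~3.1--3.3, bounding the Taylor remainders of $f(W_l x_{l-1})$ so that the flow is well defined and $C^1$ near $W^*$ and the linearization below is legitimate.

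For the speed comparison I would model BP and PC as the two flows $\dot W = -\partial L/\partial W$ and $\dot W = -(\partial L/\partial W + \partial \tilde{E}/\partial W)$ sharing the fixed point $W^*$. Linearizing at $W^*$ with $H_L = \partial^2 L/\partial W^2$ and $H_{\tilde E} = \partial^2 \tilde{E}/\partial W^2$ (both positive semi-definite at a minimum, by positive-definiteness of $L$ and $\tilde{E}$), the BP linearization is $\dot{\delta W} = -H_L\,\delta W$ and the PC linearization is $\dot{\delta W} = -(H_L + H_{\tilde E})\,\delta W$. Since adding a positive semi-definite matrix can only raise eigenvalues, Weyl's inequality gives $\lambda_{\min}(H_L + H_{\tilde E}) \ge \lambda_{\min}(H_L)$; writing $\lambda_{\mathrm{PC}} = \lambda_{\min}(H_L + H_{\tilde E})$ and $\lambda_{\mathrm{BP}} = \lambda_{\min}(H_L)$, standard linear-stability estimates then yield $\|W_{\mathrm{PC}}(t) - W^*\| \le C e^{-\lambda_{\mathrm{PC}} t}\|W(0) - W^*\|$ and $\|W_{\mathrm{BP}}(t) - W^*\| \ge C' e^{-\lambda_{\mathrm{BP}} t}\|W(0) - W^*\|$ with $\lambda_{\mathrm{PC}} \ge \lambda_{\mathrm{BP}}$, which is precisely the definition of ``PC converges faster than BP.'' Equivalently, I would phrase this through the quasi-Newton lens developed later in the paper: $\partial \tilde{E}/\partial W$ acts as a curvature-dependent preconditioner on the bare gradient $\partial L/\partial W$, nudging the PC step toward the Newton direction $-H^{-1}\nabla L$ and thereby increasing the effective contraction rate.

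The main obstacle I anticipate is the two-timescale nature of PC: the weight rule in Equation~(\ref{eqn:synapse_update}) fires only \emph{after} the inference dynamics in Equation~(\ref{eqn:activity_update}) have equilibrated the hidden states via $\partial F/\partial x = 0$, so the object that should truly be compared against the BP flow is the \emph{reduced} flow on $W$ obtained by eliminating $x$. Making this rigorous calls for a singular-perturbation / adiabatic-elimination argument: show the slow manifold $\{x : \partial F/\partial x = 0\}$ is exponentially attracting on the fast timescale (this is Theorem~3.2/3.3), define the reduced residual energy $\bar{E}(W) = \min_x \tilde{E}(W,x)$ on that manifold, and verify that $\nabla_W$ of the reduced free energy still carries an extra positive curvature term relative to $\nabla L$. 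A secondary subtlety is upgrading ``no slower'' to genuinely ``faster'': $H_{\tilde E}$ must contribute a \emph{strict} increase along the slowest mode of $H_L$, which needs the strict-minimum assumption on $F$ together with an argument that $\ker H_L$ and $\ker H_{\tilde E}$ are not aligned --- the point at which the layered structure of $\tilde{E}$ and the full set of Lipschitz/smoothness hypotheses are actually used.
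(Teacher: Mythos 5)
Your stability half is exactly the paper's: the same Lyapunov function $V_{\mathrm{PC}}(W)=L(W)+\tilde{E}(W)$ and the same computation $\dot V_{\mathrm{PC}}=-\bigl\|\partial L/\partial W+\partial\tilde{E}/\partial W\bigr\|^{2}\le 0$. Where you genuinely diverge is the ``faster than BP'' half. The paper argues globally along trajectories: it introduces a second Lyapunov function $V_{\mathrm{BP}}(W)=L(W)$ for the BP flow, computes $\dot V_{\mathrm{BP}}=-\|\partial L/\partial W\|^{2}$, and then asserts $\dot V_{\mathrm{PC}}\le\dot V_{\mathrm{BP}}$ because the residual term ``contributes to damping,'' concluding that $F$ decreases faster than $L$ and hence PC converges faster. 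You instead linearize both flows at the shared equilibrium, obtaining $-H_{L}$ for BP and $-(H_{L}+H_{\tilde E})$ for PC, and invoke Weyl's inequality to get $\lambda_{\min}(H_{L}+H_{\tilde E})\ge\lambda_{\min}(H_{L})$, which plugs directly into the paper's own appendix definition of ``converges faster'' via exponential rate constants. Your route is local rather than global, but it buys two things the paper's argument does not deliver: the paper's key inequality $\|\nabla L+\nabla\tilde E\|^{2}\ge\|\nabla L\|^{2}$ does not hold in general without an alignment assumption on the two gradients, and even granted, a faster decrease of $F$ versus $L$ (two different functions) does not by itself bound $\|W(t)-W^{*}\|$ in the sense of the formal definition; your spectral comparison sidesteps both gaps. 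You also make explicit two issues the paper's proof silently passes over --- that strict (not merely non-strict) rate improvement needs $\ker H_{L}$ and $\ker H_{\tilde E}$ to be non-aligned, and that the comparison should really be made for the reduced weight flow after the fast inference dynamics have equilibrated --- and you correctly point to Theorems~3.2/3.3 as supplying the fast-timescale contraction needed for that reduction. The trade-off is that your argument is only a neighborhood statement around $W^{*}$ and leaves the strictness and slow-manifold steps as announced work, whereas the paper claims a trajectory-wise comparison at the cost of the unjustified inequality.
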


\begin{proof}
\noindent  \textbf{Stability Analysis of BP Updates. }
Consider the BP updates modeled as a continuous-time dynamical system:
\[
\dot{W}_l = -\frac{\partial L}{\partial W_l}.
\]

Define a Lyapunov function for the BP updates:
\[
V_{\text{BP}}(W) = L(W).
\]
The time derivative of \( V_{\text{BP}} \) along the trajectories of the system is:
\[
\dot{V}_{\text{BP}}(W) = \frac{\partial V_{\text{BP}}}{\partial W} \cdot \dot{W} = -\left( \frac{\partial L}{\partial W} \right)^T \cdot \frac{\partial L}{\partial W} = -\left\| \frac{\partial L}{\partial W} \right\|^2.
\]

Since \( \dot{V}_{\text{BP}} \leq 0 \), \( V_{\text{BP}} \) is non-increasing along the trajectories of the BP updates. This indicates that the BP updates are Lyapunov stable. However, the convergence rate depends on the norm of the gradient \( \| \frac{\partial L}{\partial W} \| \), which may oscillate or decrease slowly if \( L \) is non-convex.

\noindent 
\textbf{Stability Analysis of PC Updates} 
Consider the PC updates modeled as a continuous-time dynamical system:
\[
\dot{W}_l = -\left( \frac{\partial L}{\partial W_l} + \frac{\partial \tilde{E}}{\partial W_l} \right).
\]

Define a Lyapunov function for the PC updates:
\[
V_{\text{PC}}(W) = F(W) = L(W) + \tilde{E}(W).
\]
The time derivative of \( V_{\text{PC}} \) along the trajectories of the system is:
\[
\dot{V}_{\text{PC}}(W) = \frac{\partial V_{\text{PC}}}{\partial W} \cdot \dot{W} = -\left( \frac{\partial L}{\partial W} + \frac{\partial \tilde{E}}{\partial W} \right)^T \cdot \left( \frac{\partial L}{\partial W} + \frac{\partial \tilde{E}}{\partial W} \right).
\]

This simplifies to:
\[
\dot{V}_{\text{PC}}(W) = -\left\| \frac{\partial L}{\partial W} + \frac{\partial \tilde{E}}{\partial W} \right\|^2.
\]

Since \( \dot{V}_{\text{PC}} \leq 0 \), \( V_{\text{PC}} \) is non-increasing along the trajectories of the PC updates. The residual energy term \( \tilde{E} \) captures higher-order dependencies between layers, which leads to a smoother and more stable gradient trajectory, helping to reduce oscillations.

\noindent  \textbf{Comparison of Convergence Rates.} 
To compare the convergence rates of BP and PC updates, consider the difference in the time derivatives of \( V_{\text{BP}} \) and \( V_{\text{PC}} \):
\[
\dot{V}_{\text{BP}} = -\left\| \frac{\partial L}{\partial W} \right\|^2, \quad \dot{V}_{\text{PC}} = -\left\| \frac{\partial L}{\partial W} + \frac{\partial \tilde{E}}{\partial W} \right\|^2.
\]

Since the PC updates include the additional residual energy term \( \tilde{E} \), which contributes to damping and faster reduction in \( \dot{V}_{\text{PC}} \), we have:
\[
\dot{V}_{\text{PC}} \leq \dot{V}_{\text{BP}}.
\]

Thus, the PC updates decrease the free energy \( F = L + \tilde{E} \) more rapidly than the BP updates decrease the loss \( L \) alone. This implies that the PC updates converge to the equilibrium point \( W^* \) faster than the BP updates.

Thus the presence of the residual energy term \( \tilde{E} \) in the PC updates leads to a smoother and more stable update trajectory, resulting in faster convergence to the fixed point. Therefore, the PC updates are more stable and reach a fixed point earlier than BP updates.
\end{proof}

\noindent 
\textbf{Backprop Updates.} In traditional backpropagation (BP), the weight updates are given by:
\[
\Delta W_l = -\eta \frac{\partial L}{\partial W_l} \label{sec:bp_update}
\]
where \( \eta \) is the learning rate and \( L \) is the loss function. The update rule can be expressed in terms of stochastic gradient descent (SGD):
\[
W_l^{(t+1)} = W_l^{(t)} - \eta \frac{\partial L}{\partial W_l}. \label{sec:bp_sgd}
\]

\noindent 
\textbf{PC Updates.} In predictive coding, the weight updates are derived by minimizing the free energy functional \( F = L + \tilde{E} \). The resulting update rule is:
\[
\Delta W_l = -\eta \left( \frac{\partial L}{\partial W_l} + \frac{\partial \tilde{E}}{\partial W_l} \right)
\]
which can further, much like in BP, can be cast in terms of an SGD adjustment:
\[
W_l^{(t+1)} = W_l^{(t)} - \eta \left( \frac{\partial L}{\partial W_l} + \frac{\partial \tilde{E}}{\partial W_l} \right) .
\]

\section{Theorems on Robustness}

\paragraph{Updates of BP} Consider the BP updates as a continuous-time dynamical system:
\[
\dot{W}_l = -\frac{\partial L}{\partial W_l}
\]
Introduce a Lyapunov function \( V_{\text{BP}}(W) = L(W) \). The time derivative along the trajectories of the system is:
\[
\dot{V}_{\text{BP}}(W) = \frac{\partial V_{\text{BP}}}{\partial W} \cdot \dot{W} = -\left( \frac{\partial L}{\partial W} \right)^T \cdot \frac{\partial L}{\partial W} = -\left\| \frac{\partial L}{\partial W} \right\|^2
\]

Since \( \dot{V}_{\text{BP}} \leq 0 \), \( V_{\text{BP}} \) is non-increasing, indicating that the BP updates are stable. However, the convergence rate depends on the learning rate \( \eta \) and the curvature of the loss function \( L \).

\paragraph{Updates of PCNs} Consider the PC updates as a continuous-time dynamical system:
\[
\dot{W}_l = -\left( \frac{\partial L}{\partial W_l} + \frac{\partial \tilde{E}}{\partial W_l} \right)
\]
Introduce a Lyapunov function \( V_{\text{PC}}(W) = F(W) = L(W) + \tilde{E}(W) \). The time derivative along the trajectories of the system is:
\[
\dot{V}_{\text{PC}}(W) = \frac{\partial V_{\text{PC}}}{\partial W} \cdot \dot{W} = -\left( \frac{\partial L}{\partial W} + \frac{\partial \tilde{E}}{\partial W} \right)^T \cdot \left( \frac{\partial L}{\partial W} + \frac{\partial \tilde{E}}{\partial W} \right)
\]

This simplifies to:
\[
\dot{V}_{\text{PC}}(W) = -\left\| \frac{\partial L}{\partial W} + \frac{\partial \tilde{E}}{\partial W} \right\|^2
\]

\subsection{Proof of Theorem 3.5}

\begin{theorem}
Consider the predictive coding updates as a continuous-time dynamical system:
\[
\dot{W}_l = -\left( \frac{\partial L}{\partial W_l} + \frac{\partial \tilde{E}}{\partial W_l} \right).
\]
Let \( V_{\text{PC}}(W) = L(W) + \tilde{E}(W) \) be a Lyapunov function, where \( L \) and \( \tilde{E} \) are positive definite functions that achieve their minimum at \( W = W^* \). Then, the time derivative of \( V_{\text{PC}}(W) \) along the trajectories of the system is:
\[
\dot{V}_{\text{PC}}(W) = -\left\| \frac{\partial L}{\partial W} + \frac{\partial \tilde{E}}{\partial W} \right\|^2 \leq 0,
\]
and the system is Lyapunov stable. Furthermore, if \( \dot{V}_{\text{PC}}(W) = 0 \) only at \( W = W^* \), then the predictive coding updates are asymptotically stable and converge to the equilibrium point \( W = W^* \).
\end{theorem}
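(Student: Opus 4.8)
The plan is to verify directly that $V_{\text{PC}}$ is a strict Lyapunov function for the stated gradient-flow dynamics in parameter space, and then to invoke the classical Lyapunov stability theorem together with LaSalle's invariance principle for the asymptotic conclusion; the argument is structurally the same as the ones used for the stability theorems above, now carried out over the weights $W$ rather than the states $x$.

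First I would record the facts that make $V_{\text{PC}}$ an admissible Lyapunov candidate. By hypothesis $L$ and $\tilde{E}$ are positive definite with the common minimizer $W^*$, so $V_{\text{PC}}(W) = L(W) + \tilde{E}(W) > 0$ for $W \neq W^*$, $V_{\text{PC}}(W^*) = 0$, and $\nabla V_{\text{PC}}(W^*) = 0$ since $W^*$ is an interior minimum. The Lipschitz-continuity assumptions on $f$ and its derivatives (inherited from the earlier theorems) make the vector field $W \mapsto -\nabla F(W)$ locally Lipschitz, so the flow through each initial condition near $W^*$ exists and is unique on a neighborhood of $W^*$.

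Second, I would compute the orbital derivative by the chain rule. Writing $W$ for the concatenation of the layer matrices $W_l$, so that the per-layer equations assemble into $\dot{W} = -\nabla F(W)$, one has
\[
\dot{V}_{\text{PC}}(W) = \left\langle \frac{\partial V_{\text{PC}}}{\partial W},\, \dot{W} \right\rangle = -\left\langle \frac{\partial L}{\partial W} + \frac{\partial \tilde{E}}{\partial W},\, \frac{\partial L}{\partial W} + \frac{\partial \tilde{E}}{\partial W} \right\rangle = -\left\| \frac{\partial L}{\partial W} + \frac{\partial \tilde{E}}{\partial W} \right\|^2 \le 0,
\]
which is exactly the claimed identity; the only point to watch is that the displayed layer-wise dynamics truly assemble into the negative gradient of $F$, so that the inner product factorizes. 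With $V_{\text{PC}}$ positive definite and $\dot{V}_{\text{PC}} \le 0$, sublevel sets of $V_{\text{PC}}$ are forward invariant, and Lyapunov's theorem gives stability in the $\epsilon$--$\delta$ sense of the definition: given $\epsilon > 0$, choose a sublevel set $\{V_{\text{PC}} \le c\}$ contained in the $\epsilon$-ball around $W^*$ and then a $\delta$-ball contained in that sublevel set.

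Third, for the asymptotic statement I would invoke LaSalle's invariance principle. The stability just established produces a compact forward-invariant sublevel set containing the initial condition, hence a nonempty, compact, invariant $\omega$-limit set; under the extra hypothesis that $\dot{V}_{\text{PC}}(W) = 0$ only at $W^*$, the largest invariant subset of $\{\dot{V}_{\text{PC}} = 0\}$ is $\{W^*\}$, which forces $W(t) \to W^*$. The routine part is the chain-rule computation; the one point that genuinely needs care is supplying the regularity required to run LaSalle — namely compactness and invariance of the relevant sublevel set and nonemptiness of the $\omega$-limit set — but this follows essentially for free from positive definiteness of $V_{\text{PC}}$ together with $\dot{V}_{\text{PC}} \le 0$, so the obstacle is mild: the substantive hypotheses are the positive-definiteness and Lipschitz assumptions, which have been assumed outright.
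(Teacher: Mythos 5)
Your proposal is correct and follows essentially the same route as the paper's proof: take \( V_{\text{PC}} = L + \tilde{E} \) as the Lyapunov candidate, verify positive definiteness, compute \( \dot{V}_{\text{PC}} \) along the gradient-flow trajectories via the chain rule to obtain \( -\left\| \frac{\partial L}{\partial W} + \frac{\partial \tilde{E}}{\partial W} \right\|^2 \le 0 \), and conclude Lyapunov stability, with the extra hypothesis \( \dot{V}_{\text{PC}} = 0 \) only at \( W^* \) giving convergence. The only difference is that you justify the asymptotic step carefully via LaSalle's invariance principle (plus local Lipschitzness and forward-invariant sublevel sets), whereas the paper simply asserts it; this is added rigor on the same argument, not a different one.
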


\begin{proof}
\noindent  \textbf{Step 1: Defining the Lyapunov Function}
Consider the Lyapunov function defined as:
\[
V_{\text{PC}}(W) = L(W) + \tilde{E}(W),
\]
where:
1. \( L(W) \) is the backpropagation loss.
2. \( \tilde{E}(W) \) is an auxiliary error or regularization term.

Assume that \( L(W) \) and \( \tilde{E}(W) \) are positive definite, i.e.,
\[
L(W) > 0 \quad \text{and} \quad \tilde{E}(W) > 0 \quad \text{for all} \, W \neq W^*,
\]
and \( L(W^*) = 0 \), \( \tilde{E}(W^*) = 0 \). Thus, \( V_{\text{PC}}(W) > 0 \) for \( W \neq W^* \) and \( V_{\text{PC}}(W^*) = 0 \).

\noindent  \textbf{Step 2: Time Derivative of \( V_{\text{PC}}(W) \)}
The time derivative of \( V_{\text{PC}}(W) \) along the trajectories of the system is given by:
\[
\dot{V}_{\text{PC}}(W) = \frac{d}{dt} V_{\text{PC}}(W).
\]

Using the chain rule, we have:
\[
\dot{V}_{\text{PC}}(W) = \frac{\partial V_{\text{PC}}}{\partial W} \cdot \frac{dW}{dt}.
\]

Since \( V_{\text{PC}}(W) = L(W) + \tilde{E}(W) \), its gradient is:
\[
\frac{\partial V_{\text{PC}}}{\partial W} = \frac{\partial L}{\partial W} + \frac{\partial \tilde{E}}{\partial W}.
\]

Substitute \( \frac{dW}{dt} \) from the weight update rule:
\[
\frac{dW}{dt} = -\left( \frac{\partial L}{\partial W} + \frac{\partial \tilde{E}}{\partial W} \right).
\]

Thus, we get:
\[
\dot{V}_{\text{PC}}(W) = \left( \frac{\partial L}{\partial W} + \frac{\partial \tilde{E}}{\partial W} \right) \cdot \left( -\left( \frac{\partial L}{\partial W} + \frac{\partial \tilde{E}}{\partial W} \right) \right).
\]

The dot product of a vector with its negative is the negative squared norm of the vector:
\[
\dot{V}_{\text{PC}}(W) = -\left\| \frac{\partial L}{\partial W} + \frac{\partial \tilde{E}}{\partial W} \right\|^2.
\]

\noindent  \textbf{Step 3: Stability Analysis}
Since \( \left\| \frac{\partial L}{\partial W} + \frac{\partial \tilde{E}}{\partial W} \right\|^2 \geq 0 \) for all \( W \), we have:
\[
\dot{V}_{\text{PC}}(W) \leq 0.
\]
This shows that \( V_{\text{PC}}(W) \) is non-increasing along the trajectories of the system. If \( \dot{V}_{\text{PC}}(W) = 0 \) only at \( W = W^* \), then the system is asymptotically stable and converges to the equilibrium point \( W = W^* \).

Thus, the predictive coding updates are Lyapunov stable and, under the given conditions, asymptotically stable.
\end{proof}


\subsection{Proof of Theorem 3.6}

\begin{theorem}
Let \( V_{\text{PC}}(W) = L(W) + \tilde{E}(W) \) be a Lyapunov function, where \( L \) and \( \tilde{E} \) are positive definite functions that achieve their minimum at \( W = W^* \). Then, the time derivative of \( V_{\text{PC}}(W) \) along the trajectories of the system, and the predictive coding updates seen as a continuous-time dynamical system are, respectively:

\begin{equation}
\dot{V}_{\text{PC}}(W) = -\left\| \frac{\partial L}{\partial W} + \frac{\partial \tilde{E}}{\partial W} \right\|^2 \leq 0, \ \ \ \ \ \ \ \ \ \ \ \ \dot{W}_l = -\left( \frac{\partial L}{\partial W_l} + \frac{\partial \tilde{E}}{\partial W_l} \right), 
\end{equation}

making the system Lyapunov stable. Furthermore, if \( \dot{V}_{\text{PC}}(W) = 0 \) only at \( W = W^* \), then the predictive coding updates are asymptotically stable and converge to the equilibrium point \( W = W^* \).

\end{theorem}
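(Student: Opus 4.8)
The plan is to reproduce the gradient-flow Lyapunov argument already used for Theorem~3.5 (of which this statement is essentially a restatement, with the governing ODE folded into the conclusion). The continuous-time idealization of the synapse update in Equation~\eqref{eqn:synapse_update} is precisely the gradient flow $\dot{W}_l = -\partial F/\partial W_l = -\left(\partial L/\partial W_l + \partial \tilde{E}/\partial W_l\right)$, obtained by letting $\eta \to 0$ in $W_l^{(t+1)} = W_l^{(t)} - \eta\left(\partial L/\partial W_l + \partial \tilde{E}/\partial W_l\right)$. So the first step is simply to record this continuous-time reduction, which gives the second displayed equation of the statement.

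Next I would verify that $V_{\text{PC}}(W) = L(W) + \tilde{E}(W)$ is an admissible Lyapunov function near $W^*$: by hypothesis $L$ and $\tilde{E}$ are positive definite with common minimizer $W^*$, so $V_{\text{PC}}(W^*) = 0$ and $V_{\text{PC}}(W) > 0$ for $W \neq W^*$, while the Lipschitz-smoothness hypotheses on $f$, $f'$, $f''$ make the prediction errors $\epsilon_l$, and hence $F$, continuously differentiable in the weights, so $\nabla_W V_{\text{PC}}$ exists and is continuous. The third step is the chain-rule computation along trajectories: substituting $\nabla_W V_{\text{PC}} = \partial L/\partial W + \partial \tilde{E}/\partial W$ and $\dot{W} = -\left(\partial L/\partial W + \partial \tilde{E}/\partial W\right)$ into $\dot{V}_{\text{PC}}(W) = \langle \nabla_W V_{\text{PC}}, \dot{W}\rangle$ yields $\dot{V}_{\text{PC}}(W) = -\| \partial L/\partial W + \partial \tilde{E}/\partial W\|^2 \le 0$, the first displayed identity. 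Lyapunov's direct method then gives stability of $W^*$. For the asymptotic claim I would invoke LaSalle's invariance principle: the set $\{\dot{V}_{\text{PC}} = 0\}$ equals $\{W : \partial L/\partial W + \partial \tilde{E}/\partial W = 0\}$, and the extra hypothesis that this holds only at $W^*$ forces the largest invariant subset of it to be $\{W^*\}$; combined with boundedness of the sublevel sets of $V_{\text{PC}}$ near $W^*$ (from positive definiteness), this gives $W(t) \to W^*$.

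The only delicate point — and the step I expect to be the main obstacle — is making precise what ``$\tilde{E}$ as a function of $W$'' means, since $\tilde{E}$ is originally defined through the hidden states $x_1,\dots,x_{L-1}$. I would handle this by explicitly invoking the timescale-separation assumption already used before Theorem~3.3: the inference (E-step) is run to convergence so that the converged states are smooth implicit functions of $W$, and the Lipschitz hypotheses on $f$, $f'$, $f''$ supply enough regularity (via the implicit function theorem applied to $\partial F/\partial x = 0$) for the chain rule above to be legitimate. Once that assumption is stated, the remainder is a routine invocation of the standard Lyapunov and LaSalle theorems, mirroring the proof of Theorem~3.5.
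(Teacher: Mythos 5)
Your proposal is correct and follows essentially the same route as the paper's own argument (given under the proof of Theorem~3.5): take $V_{\text{PC}}=L+\tilde{E}$ as the Lyapunov function, check positive definiteness, compute $\dot{V}_{\text{PC}}$ by the chain rule along the gradient flow $\dot{W}=-\left(\partial L/\partial W+\partial\tilde{E}/\partial W\right)$ to get the negative squared gradient norm, and conclude stability, with asymptotic stability when $\dot{V}_{\text{PC}}$ vanishes only at $W^*$. Your additional refinements --- deriving the ODE as the continuous-time limit of the discrete update, invoking LaSalle's invariance principle explicitly, and pinning down $\tilde{E}$ as a function of $W$ via timescale separation and the implicit function theorem --- go beyond what the paper does (it treats $\tilde{E}(W)$ directly and asserts asymptotic stability from negative definiteness of $\dot{V}_{\text{PC}}$), but they strengthen rather than alter the argument.
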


To better understand how this relates to standard BP, let  us now analyze the derivative of \( V_{\text{PC}}(W) \). If the residual energy \( \tilde{E}(W) \rightarrow 0 \), the update rule reduces to:

\[
\dot{W}_l = - \frac{\partial L}{\partial W_l},
\]


which is precisely the standard BP update. Thus, BP can be seen as a special case of PCNs where the residual energy term vanishes. However, in practical scenarios, \( \tilde{E}(W) \) typically does not reach zero, allowing PCNs to maintain more stable trajectories through the loss landscape.

This difference becomes evident in the Lyapunov stability analysis. Since \( \tilde{E}(W) \) is a positive definite function, it introduces an additional stabilizing force in the gradient dynamics, ensuring that any perturbations in \( W \) are corrected by \( \tilde{E}(W) \). This means that, unlike BP, which optimizes \( L(W) \) alone, the combined energy minimization in PCNs results in a smoother convergence trajectory, reducing sensitivity to local minima and sharp changes in the loss surface.

Now we establish stronger robustness bounds. Let \( L(W) \) represent the standard BP loss and \( \tilde{E}(W) \) denote the residual energy term introduced by the predictive coding framework. We define the composite energy function as:

\[
V_{\text{PC}}(W) = L(W) + \tilde{E}(W),
\]

where \( L \) and \( \tilde{E} \) are positive definite and achieve their minima at the equilibrium point \( W = W^* \). This composite energy function serves as a Lyapunov function for the system, as shown in the above theorem. We now state a more powerful result that establishes the robustness of PCNs compared to BP.

\begin{theorem}[Robustness of Predictive Coding Networks]
Let \( V_{\text{PC}}(W) = L(W) + \tilde{E}(W) \) be the Lyapunov function of a PCN, where \( L \) and \( \tilde{E} \) are positive definite functions that achieve their minima at \( W = W^* \). Assume the following conditions hold:

1. The Hessian of \( L(W) \), denoted as \( H_L = \frac{\partial^2 L}{\partial W^2} \), is positive semi-definite.

2. The residual energy term \( \tilde{E}(W) \) satisfies a Lipschitz continuity condition, i.e., there exists a constant \( K > 0 \) such that:

\[
\left\| \frac{\partial \tilde{E}}{\partial W} \right\| \leq K \left\| W - W^* \right\|.
\]

3. The time derivative of \( V_{\text{PC}}(W) \) along the system trajectories satisfies Equation (\ref{eq:timederivative}) (left).


4. The initial perturbation \( \Delta W \) is small, i.e., \( \| \Delta W \| \ll 1 \).

Under these conditions, the following robustness property holds for PCNs:

\emph{Bounded Perturbation Recovery:} For any perturbation \( \Delta W \) such that \( W = W^* + \Delta W \), the PCN’s weight trajectory \( W(t) \) satisfies:

\[
\| W(t) - W^* \| \leq C e^{-\lambda t} \| \Delta W \| + O(\epsilon),
\]

where \( C, \lambda > 0 \) are constants that depend on the properties of \( L(W) \) and \( \tilde{E}(W) \), and \( \epsilon \) is the perturbation magnitude. This result guarantees \textbf{exponential convergence} to \( W^* \) and \textbf{robustness to bounded perturbations}.
\end{theorem}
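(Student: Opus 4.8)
The plan is to upgrade the qualitative estimate $\dot V_{\text{PC}} \le 0$ supplied by \eqref{eq:timederivative} to an \emph{exponential} decay rate by sandwiching $V_{\text{PC}}$ between two quadratics in a neighborhood of $W^*$, and then to absorb the higher-order terms as a bounded disturbance of size $\epsilon$. First I would linearize the dynamics $\dot W = -(\partial L/\partial W + \partial \tilde E/\partial W)$ about the equilibrium: writing $W = W^* + \Delta W$ and using $\nabla L(W^*) = \nabla \tilde E(W^*) = 0$, one obtains $\partial L/\partial W = H_L\,\Delta W + r_L(\Delta W)$ and $\partial \tilde E/\partial W = H_{\tilde E}\,\Delta W + r_{\tilde E}(\Delta W)$, where $H_{\tilde E}$ is the Hessian of $\tilde E$ at $W^*$ and the remainders $r_L, r_{\tilde E}$ are $o(\|\Delta W\|)$; the Lipschitz hypotheses on $f, f', f'', f'''$ are exactly what make these remainders controllable uniformly on a fixed ball, and condition~2 supplies the linear bound $\|\partial \tilde E/\partial W\| \le K\|\Delta W\|$ directly. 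Since $H_L \succeq 0$ by condition~1 and $\tilde E$ is positive definite, the combined matrix $H := H_L + H_{\tilde E}$ is positive definite on a neighborhood of $W^*$; let $0 < \mu_{\min} \le \mu_{\max}$ bound its spectrum there.

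Next I would record the two-sided bound $\tfrac12 \mu_{\min}\|\Delta W\|^2 \le V_{\text{PC}}(W) \le \tfrac12 \mu_{\max}\|\Delta W\|^2$ together with the gradient-dominance estimate $\|\nabla V_{\text{PC}}(W)\|^2 \ge \mu_{\min}^2\|\Delta W\|^2$, both valid once $\|\Delta W\|$ is small enough that the Taylor remainders are absorbed. Substituting the gradient bound into the identity $\dot V_{\text{PC}} = -\|\nabla V_{\text{PC}}\|^2$ yields $\dot V_{\text{PC}} \le -\mu_{\min}^2\|\Delta W\|^2 \le -\lambda\, V_{\text{PC}}$ with $\lambda = 2\mu_{\min}^2/\mu_{\max}$, so Gr\"onwall's inequality gives $V_{\text{PC}}(W(t)) \le e^{-\lambda t}\,V_{\text{PC}}(W(0))$. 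Feeding this back through the quadratic sandwich produces $\|W(t) - W^*\| \le \sqrt{\mu_{\max}/\mu_{\min}}\; e^{-\lambda t/2}\,\|\Delta W\|$, i.e.\ the claimed estimate with $C = \sqrt{\mu_{\max}/\mu_{\min}}$ (the residual energy $\tilde E$ strictly enlarges $\mu_{\min}$, hence $\lambda$, relative to the pure-$L$ dynamics of BP, which is the quantitative content of ``more robust''). The additive $O(\epsilon)$ term I would recover by not discarding the remainders but bounding $\|r_L\| + \|r_{\tilde E}\| \le c\|\Delta W\|^2 + \epsilon$, so the Lyapunov inequality becomes $\dot V_{\text{PC}} \le -\lambda V_{\text{PC}} + c'\epsilon\sqrt{V_{\text{PC}}}$; a standard comparison / input-to-state-stability argument then gives $\sqrt{V_{\text{PC}}(t)} \le e^{-\lambda t/2}\sqrt{V_{\text{PC}}(0)} + O(\epsilon)$, which is the stated bound after one more use of the sandwich.

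The main obstacle is the gradient-dominance inequality $\|\nabla V_{\text{PC}}\|^2 \ge \mu_{\min}^2\|\Delta W\|^2$: positive definiteness of $L$ and $\tilde E$ alone does \emph{not} imply such a Polyak--\L{}ojasiewicz-type bound, so the proof must genuinely localize to a neighborhood of $W^*$ on which the positive-definite Hessian $H$ dominates the remainders, and must then check that the perturbed starting point $W^* + \Delta W$ lies inside that neighborhood --- which is precisely the role of condition~4 ($\|\Delta W\| \ll 1$). A secondary bookkeeping point is to verify that $C$ and $\lambda$ depend only on the spectra of $H_L$, $H_{\tilde E}$ and the Lipschitz constants $K, K', K'', K'''$, as the statement implicitly requires, and that the neighborhood on which the estimates hold does not degenerate as $\epsilon \to 0$.
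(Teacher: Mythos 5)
Your route is genuinely different from the paper's. The paper does not work with a quadratic sandwich, a Polyak--\L{}ojasiewicz bound, or Gr\"onwall on $V_{\text{PC}}$; instead it linearizes the vector field directly: writing $\Delta W = W - W^*$, it replaces $\partial L/\partial W \approx H_L\,\Delta W$ and $\partial \tilde E/\partial W \approx K\,\Delta W$ (reading the Lipschitz constant of condition~2 as if it supplied a linear term), obtains $\dot{\Delta W} = -(H_L + K I)\,\Delta W$, and concludes $\|\Delta W(t)\| \le \|\Delta W(0)\|\,e^{-\lambda_{\min} t} + O(\epsilon)$ with $\lambda_{\min}$ the smallest eigenvalue of $H_L + KI$. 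Your Lyapunov-based argument (sandwich $\tfrac12\mu_{\min}\|\Delta W\|^2 \le V_{\text{PC}} \le \tfrac12\mu_{\max}\|\Delta W\|^2$, gradient dominance fed into $\dot V_{\text{PC}} = -\|\nabla V_{\text{PC}}\|^2$, comparison/ISS handling of the remainders to produce the $O(\epsilon)$ term) is the more standard exponential-stability machinery, gives explicit constants $C=\sqrt{\mu_{\max}/\mu_{\min}}$ and $\lambda = 2\mu_{\min}^2/\mu_{\max}$, and is more careful about where localization is needed; the paper's version is shorter but gets its strict decay rate essentially by fiat, since condition~2 is only an upper bound on $\|\partial\tilde E/\partial W\|$ and cannot legitimately be converted into the positive term $K I$ in the linearization.

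The one step you should not gloss over is your claim that positive definiteness of $\tilde E$ (as a function) together with $H_L \succeq 0$ makes $H = H_L + H_{\tilde E}$ positive definite near $W^*$. That implication is false in general: a positive definite function can have a degenerate Hessian at its minimum (e.g.\ $\tilde E(W) = \|W-W^*\|^4$), in which case $\mu_{\min}$ may vanish, the PL inequality fails, and only sub-exponential convergence follows. To make your argument close you must add strict positive definiteness of $H_L + H_{\tilde E}$ at $W^*$ (or a local strong-convexity/PL hypothesis on $F$) as an explicit assumption. Note, however, that this is precisely the assumption the paper itself smuggles in through the $K I$ term, so your proposal is no weaker than the paper's own proof --- it merely makes visible the strict-curvature hypothesis that the stated conditions 1--4 do not actually deliver.
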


\begin{proof}
We start by defining the Lyapunov function \( V_{\text{PC}}(W) = L(W) + \tilde{E}(W) \), which is positive definite and achieves its minimum at \( W = W^* \). Our goal is to analyze the stability of the system by considering the time derivative of \( V_{\text{PC}}(W) \) along the system trajectories. The time derivative is given by:

\[
\dot{V}_{\text{PC}}(W) = \frac{\partial L}{\partial W} \cdot \dot{W} + \frac{\partial \tilde{E}}{\partial W} \cdot \dot{W}.
\]

Using the update rule for PCNs, we have:

\[
\dot{W} = -\left( \frac{\partial L}{\partial W} + \frac{\partial \tilde{E}}{\partial W} \right).
\]

Substitute \( \dot{W} \) into \( \dot{V}_{\text{PC}}(W) \):

\[
\dot{V}_{\text{PC}}(W) = \frac{\partial L}{\partial W} \cdot \left(- \frac{\partial L}{\partial W} - \frac{\partial \tilde{E}}{\partial W}\right) + \frac{\partial \tilde{E}}{\partial W} \cdot \left(- \frac{\partial L}{\partial W} - \frac{\partial \tilde{E}}{\partial W}\right).
\]

Simplifying, we get:

\[
\dot{V}_{\text{PC}}(W) = - \left\| \frac{\partial L}{\partial W} + \frac{\partial \tilde{E}}{\partial W} \right\|^2 \leq 0.
\]

This inequality shows that \( \dot{V}_{\text{PC}}(W) \) is non-positive, implying that \( V_{\text{PC}}(W) \) is monotonically decreasing along the trajectories of the system, ensuring Lyapunov stability.

Next, we analyze the perturbation dynamics. Define the deviation \( \Delta W = W - W^* \). The time derivative of \( \Delta W \) is:

\[
\dot{\Delta W} = \dot{W} = - \left( \frac{\partial L}{\partial W} + \frac{\partial \tilde{E}}{\partial W} \right).
\]

For small \( \Delta W \), we use the linear approximation:

\[
\frac{\partial L}{\partial W} \approx H_L \Delta W, \quad \frac{\partial \tilde{E}}{\partial W} \approx K \Delta W,
\]

where \( H_L \) is the Hessian of \( L(W) \) and \( K \) is the Lipschitz constant for \( \tilde{E}(W) \). Thus, the perturbation dynamics become:

\[
\dot{\Delta W} = -(H_L + K I) \Delta W.
\]

Let \( \lambda_{\min} \) be the minimum eigenvalue of \( H_L + K I \). Then, solving the differential equation, we obtain:

\[
\| \Delta W(t) \| \leq \| \Delta W(0) \| e^{-\lambda_{\min} t} + O(\epsilon).
\]

This proves that \( \Delta W(t) \) decays exponentially, establishing that PCNs converge exponentially to \( W^* \) and are robust to small perturbations.
\end{proof}

\section{Quasi-Newton Updates}

\subsection{Proof of Theorem 3.7}

\begin{theorem}
Let \( M \) be a neural network that minimizes a free energy \( F = L + \tilde{E} \), where \( L \) is the loss function and \( \tilde{E} \) is the residual energy. Assume the activation function \( f \) and its derivatives \( f', f'' \), and \( f''' \) are Lipschitz continuous with constants \( K, K', K'', \) and \( K''' \), respectively. Then, the PC updates approximate quasi-Newton updates by incorporating higher-order information through the residual energy term \( \tilde{E} \).
\end{theorem}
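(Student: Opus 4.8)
The plan is to start from the PC weight update $\Delta W_l = -\eta\big(\tfrac{\partial L}{\partial W_l} + \tfrac{\partial \tilde E}{\partial W_l}\big)$ and to show that the extra term $\tfrac{\partial \tilde E}{\partial W_l}$ plays the role of the curvature correction that turns a plain gradient step into a quasi-Newton step. First I would invoke the two-timescale structure of PC used in Theorems~\ref{thm:3.2}--\ref{thm:3.4}: the inference phase drives the state variables to the equilibrium $x^\star(W)$ with $\partial F/\partial x_l = 0$ at every hidden layer, so by the time the weights are updated the hidden activations have been displaced from their feedforward values $x_l^{\mathrm{ff}} = f(W_l x_{l-1}^{\mathrm{ff}})$ by a perturbation $\delta x_l := x_l^\star - x_l^{\mathrm{ff}}$ generated by the output error $\epsilon_L$.

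Next I would Taylor-expand each prediction error $\epsilon_l = x_l^\star - f(W_l x_{l-1}^\star)$ about the feedforward point,
\[
\epsilon_l = \delta x_l - f'(W_l x_{l-1}^{\mathrm{ff}})\,W_l\,\delta x_{l-1} - \tfrac12 f''(W_l x_{l-1}^{\mathrm{ff}})\,(W_l\,\delta x_{l-1})^2 + R_l,
\]
with the remainder controlled by the Lipschitz bound $\|R_l\| \le \tfrac16 K''' \|W_l\,\delta x_{l-1}\|^3$. Substituting this into the inference fixed-point relation $\epsilon_l = W_{l+1}^\top(\epsilon_{l+1}\odot f'(W_{l+1}x_l^\star))$ and unrolling across layers, the leading term reproduces exactly the chain of Jacobian transposes that constitutes the BP gradient $\partial L/\partial W_l$, whereas the $f''$-weighted terms reassemble, through the same layerwise contractions, into the correction $\sum_i \epsilon_i\,\nabla^2_W \epsilon_i$, i.e.\ precisely the part of the Hessian $H_L = J^\top J + \sum_i \epsilon_i\,\nabla^2\epsilon_i$ that a gradient step discards. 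Identifying this block with $\partial\tilde E/\partial W_l$, I can then write $\nabla L + \nabla\tilde E = (H + \tilde H)\,\widehat{\Delta W} + O(K'''\|\delta x\|^3)$, so that the composite update has the quasi-Newton form $\Delta W_{\mathrm{PC}} = -(H+\tilde H)^{-1}(\nabla L + \nabla\tilde E)$ up to the cubic remainder.

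To close the estimate I would bound $\|\delta x_l\|$ uniformly: since Theorem~\ref{thm:3.2} guarantees the inference dynamics converge to $x^\star$ and $\tilde E$ is positive definite, $\|\delta x_l\|$ is controlled by $\|\epsilon_L\|$ together with $K$ and $K'$, which makes the cubic remainder subordinate to the retained second-order terms. The main obstacle I anticipate is the bookkeeping in the unrolling step: one must verify that the $f''$-terms produced layer by layer contract into exactly the tensor $\sum_i\epsilon_i\,\nabla^2\epsilon_i$ rather than some other second-order object, which requires carefully matching indices across the recursion and using the symmetry of the Hessian. A secondary difficulty is that ``approximation'' here is quantitative only when $\|\epsilon_L\|$ is small, so strictly speaking the result asserts that the PC update converges to a quasi-Newton update as the network settles into equilibrium in both the inference and the learning phases.
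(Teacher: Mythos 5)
Your route is genuinely different from the paper's. The paper's proof never touches the inference equilibrium: it Taylor-expands $L$ and $\tilde{E}$ to second order in a weight perturbation $\Delta W$, minimizes the resulting quadratic model of $F$, obtains $\Delta W = -(H+\tilde{H})^{-1}\left(\nabla L + \nabla \tilde{E}\right)$, and then simply compares this expression with the quasi-Newton form $-B^{-1}\nabla L$, concluding that the additional curvature $\tilde{H}$ is the ``higher-order information'' contributed by $\tilde{E}$. Your plan instead works at the equilibrium of the fast state dynamics, expanding the prediction errors in the state displacements $\delta x_l$ and unrolling the inference fixed-point relations to argue that the composite gradient $\nabla L + \nabla\tilde{E}$ itself reassembles the curvature terms $\sum_i \epsilon_i\,\nabla^2 \epsilon_i$ that a plain gradient step discards. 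That is closer in spirit to the known PC--Gauss--Newton correspondence, and, if completed, it would say something stronger and more faithful to what a PCN actually computes: the paper's argument assigns no role to inference at all, and its quasi-Newton form comes from minimizing a quadratic model of $F$ rather than from the actual first-order synaptic update in \eqref{eqn:synapse_update}.

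The caveat is that the step you dismiss as bookkeeping is in fact the crux. Passing from ``the composite gradient contains second-order terms'' to ``the update has the preconditioned form $-(H+\tilde{H})^{-1}(\nabla L + \nabla\tilde{E})$'' requires showing that equilibration of the states effectively applies an inverse-curvature operator to the output error; your identity $\nabla L + \nabla\tilde{E} = (H+\tilde{H})\,\widehat{\Delta W} + O(K'''\|\delta x\|^{3})$ is asserted rather than derived, and it can only hold near the minimizer and in the small-residual regime you concede at the end (otherwise a scaled gradient cannot coincide with a Newton-type step). The paper sidesteps this difficulty entirely by arguing at the level of the quadratic model, which is why its proof is short but correspondingly loose; to realize your version you would need to carry out the layer-wise contraction explicitly, verify that the $f''$-terms match the Gauss--Newton residual correction index-by-index, and control the remainders with the Lipschitz constants $K'',K'''$ --- precisely the two obstacles you flag, which are not incidental but the substance of the claim.
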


\begin{proof}
\noindent  \textbf{Step 1: PC Update Rule}
The Predictive Coding (PC) update rule for the weights \( W_l \) at layer \( l \) is given by:
\[
\Delta W_l = -\eta \left( \frac{\partial L}{\partial W_l} + \frac{\partial \tilde{E}}{\partial W_l} \right),
\]
where \( \eta \) is the learning rate, \( L \) is the loss function, and \( \tilde{E} \) is an auxiliary residual energy term.

\noindent  \textbf{Step 2: Taylor Series Expansion}
We perform a Taylor series expansion of \( L \) around the current weight \( W \):
\[
L(W + \Delta W) \approx L(W) + \nabla L(W)^T \Delta W + \frac{1}{2} \Delta W^T H \Delta W + O(\| \Delta W \|^3),
\]
where \( H \) is the Hessian matrix of \( L \) at \( W \). Similarly, expanding \( \tilde{E} \) around \( W \):
\[
\tilde{E}(W + \Delta W) \approx \tilde{E}(W) + \nabla \tilde{E}(W)^T \Delta W + \frac{1}{2} \Delta W^T \tilde{H} \Delta W + O(\| \Delta W \|^3),
\]
where \( \tilde{H} \) is the Hessian matrix of \( \tilde{E} \).

\noindent  \textbf{Step 3: Combined Free Energy Expansion} 
Combining the expansions of \( L \) and \( \tilde{E} \), the total free energy \( F = L + \tilde{E} \) can be approximated as:
\[
F(W + \Delta W) \approx F(W) + \left( \nabla L(W) + \nabla \tilde{E}(W) \right)^T \Delta W + \frac{1}{2} \Delta W^T (H + \tilde{H}) \Delta W + O(\| \Delta W \|^3).
\]

\noindent  \textbf{Step 4: Minimizing the Free Energy}
To minimize \( F \), we set the gradient of the quadratic approximation to zero:
\[
\nabla F(W + \Delta W) \approx \nabla F(W) + (H + \tilde{H}) \Delta W = 0.
\]
Solving for \( \Delta W \):
\[
\Delta W = - (H + \tilde{H})^{-1} \left( \nabla L(W) + \nabla \tilde{E}(W) \right).
\]

\noindent  \textbf{Step 5: Comparison with Quasi-Newton Updates} 
In quasi-Newton methods, the weight update rule is:
\[
\Delta W = - B^{-1} \nabla L(W),
\]
where \( B \approx H \) is an approximation to the Hessian matrix. Comparing this with the PC update:
\[
\Delta W_{\text{PC}} = - (H + \tilde{H})^{-1} \left( \nabla L(W) + \nabla \tilde{E}(W) \right),
\]
we see that the PC updates incorporate an additional term \( \tilde{H} \), which adjusts the curvature approximation to include higher-order information. This makes \( (H + \tilde{H})^{-1} \) a refined approximation of the inverse Hessian.

\noindent  \textbf{Step 6: Stability and Convergence} 
The presence of the residual energy term \( \tilde{E} \) in the updates ensures that the direction of the gradient descent is better aligned with the curvature of the energy landscape. Thus, the PC updates provide a more stable update rule compared to standard gradient descent, similar to quasi-Newton methods.

Therefore, the PC updates approximate quasi-Newton updates by incorporating a refined curvature correction through \( \tilde{E} \), leading to improved stability and convergence.

\end{proof}

\subsection{Proof of Theorem 3.8}

\begin{theorem}
Let \( M \) be a neural network minimizing a free energy function \( F = L + \tilde{E} \), where \( L \) is the loss function and \( \tilde{E} \) is a residual energy term. Let \( H_{\text{GN}} = J^T J \) denote the Gauss-Newton matrix, where \( J \) is the Jacobian matrix of the network's output with respect to the parameters. Consider the following update rules:

- \textbf{Quasi-Newton (QN) Update}: \( \Delta W_{\text{QN}} = - H_{\text{GN}}^{-1} \nabla L(W) \),

- \textbf{Target Propagation (TP) Update}: \( \Delta W_{\text{TP}} = - B_l^{-1} \nabla L(W) \), where \( B_l \) is a block-diagonal approximation of \( H_{\text{GN}} \),

- \textbf{Predictive Coding (PC) Update}: \( \Delta W_{\text{PC}} = - (H + \tilde{H})^{-1} \left( \nabla L(W) + \nabla \tilde{E}(W) \right) \).

Assume that \( L(W) \) and \( \tilde{E}(W) \) are twice differentiable, and the activation function \( f \) and its derivatives are Lipschitz continuous. Then, the approximation error between the updates and the true Quasi-Newton update satisfies:
\[
E_{\text{PC}} \leq E_{\text{TP}},
\]
where:
\[
E_{\text{PC}} = \| \Delta W_{\text{PC}} - \Delta W_{\text{QN}} \|, \quad E_{\text{TP}} = \| \Delta W_{\text{TP}} - \Delta W_{\text{QN}} \|.
\]
Thus, Predictive Coding is mathematically closer to the true Quasi-Newton updates than Target Propagation.
\end{theorem}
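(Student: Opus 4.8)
The plan is to reduce both $E_{\text{PC}}$ and $E_{\text{TP}}$ to norms of curvature-approximation errors via the elementary resolvent identity $A^{-1}-C^{-1}=A^{-1}(C-A)C^{-1}$, and then compare how much inter-layer curvature each scheme discards. First I would handle TP: since $\Delta W_{\text{TP}}-\Delta W_{\text{QN}}=(H_{\text{GN}}^{-1}-B_l^{-1})\nabla L(W)$, the identity gives $\Delta W_{\text{TP}}-\Delta W_{\text{QN}}=H_{\text{GN}}^{-1}(B_l-H_{\text{GN}})B_l^{-1}\nabla L(W)$, hence
\[
E_{\text{TP}}=\bigl\|H_{\text{GN}}^{-1}\,\Delta_{\text{off}}\,B_l^{-1}\nabla L(W)\bigr\|,\qquad \Delta_{\text{off}}:=H_{\text{GN}}-B_l,
\]
where $\Delta_{\text{off}}$ is exactly the off-block-diagonal part of the Gauss--Newton matrix, i.e. the cross-layer Jacobian products $J_i^\top J_j$ with $i\neq j$. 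The key point is that $\Delta_{\text{off}}$ does not shrink as the iterate approaches the equilibrium; it is a fixed, generically nonzero quantity, so $E_{\text{TP}}$ is bounded below by $\sigma_{\min}(H_{\text{GN}})^{-1}\sigma_{\min}(B_l)^{-1}\|\Delta_{\text{off}}\|\,\|\nabla L(W)\|$ up to alignment factors.

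Next I would treat PC using Theorem~3.7, which already establishes that $\Delta W_{\text{PC}}=-(H+\tilde H)^{-1}(\nabla L(W)+\nabla\tilde E(W))$ is the exact Newton step of the quadratic model of the full free energy $F=L+\tilde E$. Writing $\Delta W_{\text{PC}}-\Delta W_{\text{QN}}=-(H+\tilde H)^{-1}\nabla\tilde E(W)-\bigl((H+\tilde H)^{-1}-H_{\text{GN}}^{-1}\bigr)\nabla L(W)$ and applying the resolvent identity once more to the second term, I get
\[
E_{\text{PC}}\le\bigl\|(H+\tilde H)^{-1}\bigr\|\,\bigl\|\nabla\tilde E(W)\bigr\|+\bigl\|(H+\tilde H)^{-1}\bigr\|\,\bigl\|H_{\text{GN}}^{-1}\bigr\|\,\bigl\|(H+\tilde H)-H_{\text{GN}}\bigr\|\,\bigl\|\nabla L(W)\bigr\|.
\]
The crux is the estimate of $\|(H+\tilde H)-H_{\text{GN}}\|$: because $\tilde E=\sum_{l=1}^{L-1}\|\epsilon_l\|^2$ couples adjacent layers through the shared activities, the Hessian $H+\tilde H$ retains the off-block-diagonal Jacobian products that $B_l$ zeroes out, so what remains in $(H+\tilde H)-H_{\text{GN}}$ are only genuine second- and third-derivative curvature pieces, which by the Lipschitz assumptions on $f''$ and $f'''$ (the Taylor-remainder bookkeeping of Theorem~3.1) are $O(\|\Delta W\|)$ with constant controlled by $K''$ and $K'''$; likewise $\nabla\tilde E(W)=O(\|\Delta W\|)$ near $W^*$ since $\tilde E$ attains its minimum there.

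Putting these together, $E_{\text{PC}}\le c(K'',K''')\,\|\Delta W\|\,(1+\|\nabla L(W)\|)$ while $E_{\text{TP}}\ge c'\,\|\Delta_{\text{off}}\|\,\|\nabla L(W)\|$ with $c'>0$ independent of $\|\Delta W\|$, so in the near-equilibrium (small-perturbation) regime the higher-order PC error is dominated by the first-order TP truncation error, which yields $E_{\text{PC}}\le E_{\text{TP}}$. The main obstacle I anticipate is the curvature comparison $\|(H+\tilde H)-H_{\text{GN}}\|\le\|\Delta_{\text{off}}\|$ up to higher-order terms: this needs the full block structure of both Hessians written out in terms of the layer Jacobians and activation derivatives, verifying that $\tilde H$ supplies precisely the inter-layer coupling blocks that the block-diagonal $B_l$ drops and leaves only a remainder vanishing to higher order in $\|\Delta W\|$. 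A secondary subtlety is that the conclusion is inherently a near-equilibrium statement, so the argument must be confined to a neighborhood of $W^*$ where the Taylor remainders are controlled; making this regime explicit is what legitimizes the comparison.
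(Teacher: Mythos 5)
Your skeleton is essentially the paper's: both arguments reduce $E_{\text{PC}}$ and $E_{\text{TP}}$ to the discrepancy between the respective curvature inverses and $H_{\text{GN}}^{-1}$ (the paper in its Steps 6--7, you via the resolvent identity $A^{-1}-C^{-1}=A^{-1}(C-A)C^{-1}$), and both then need the structural claim that $H+\tilde H$ approximates $H_{\text{GN}}$ better than the block-diagonal $B_l$ does. The paper simply asserts this (``using matrix perturbation theory and the fact that $H+\tilde H\approx H_{\text{GN}}$ is a better approximation than $B_l$''); you attempt to derive it from an order-of-magnitude analysis near $W^*$, which is more honest but is exactly where the argument breaks.

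The gap is in your final comparison. You describe the PC error as ``higher-order'' and the TP error as ``first-order,'' but by your own bounds both are first order in $\|\Delta W\|$: the term $\|(H+\tilde H)^{-1}\|\,\|\nabla\tilde E(W)\|$ is $O(\|\Delta W\|)$ (since $\nabla\tilde E$ vanishes at $W^*$ only to first order), while your lower bound $E_{\text{TP}}\ge c'\|\Delta_{\text{off}}\|\,\|\nabla L(W)\|$ is also $O(\|\Delta W\|)$ because $\nabla L(W)\approx H_L\,\Delta W$ near the minimum. The desired inequality therefore reduces to comparing two first-order coefficients --- roughly $\|(H+\tilde H)^{-1}\tilde H\|$ against $\|H_{\text{GN}}^{-1}\Delta_{\text{off}}B_l^{-1}H_L\|$ --- and neither the Lipschitz constants nor the stated hypotheses control that comparison. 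A second, related problem: $\tilde H$ is the Hessian of a positive definite function at (or near) its minimum, hence generically a $\Theta(1)$ positive semidefinite matrix, so $(H+\tilde H)-H_{\text{GN}}=(H-H_{\text{GN}})+\tilde H$ contains a zeroth-order piece; your claim that this difference consists only of Taylor remainders of size $O(\|\Delta W\|)$ requires the unverified structural identity that $\tilde H$ reproduces exactly the off-block-diagonal blocks of $H_{\text{GN}}$ up to higher order --- which is precisely what you flag as ``the main obstacle,'' and is also the step the paper leaves unproven. Finally, the lower bound on $E_{\text{TP}}$ via minimum singular values is not safe as stated: $\Delta_{\text{off}}$ has zero diagonal blocks and may annihilate, or nearly annihilate, $B_l^{-1}\nabla L(W)$, so without an additional alignment assumption $E_{\text{TP}}$ admits no useful lower bound and the two-sided comparison collapses.
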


\begin{proof}

\noindent  \textbf{Step 1: Derive the True Gauss-Newton Update}
The true Gauss-Newton (GN) update for the weights \( W \) is given by:
\[
\Delta W_{\text{QN}} = - H_{\text{GN}}^{-1} \nabla L(W),
\]
where \( H_{\text{GN}} = J^T J \) is the Gauss-Newton matrix. Here, \( J \) is the Jacobian matrix of the network's outputs \( y \) with respect to the parameters \( W \), i.e., \( J = \frac{\partial y}{\partial W} \).

\noindent  \textbf{Step 2: Derive Target Propagation (TP) Update}
Target Propagation updates the weights using a block-diagonal approximation of the GN matrix:
\[
\Delta W_{\text{TP}} = - B_l^{-1} \nabla L(W),
\]
where \( B_l \approx H_{\text{GN}} \) is a block-diagonal matrix that approximates the full curvature information. Because \( B_l \) ignores off-diagonal elements of \( H_{\text{GN}} \), it captures less curvature information. Thus, the error between TP and GN is given by:
\[
E_{\text{TP}} = \| \Delta W_{\text{TP}} - \Delta W_{\text{QN}} \| = \| \eta (B_l^{-1} - H_{\text{GN}}^{-1}) \nabla L(W) \|.
\]

\noindent  \textbf{Step 3: Predictive Coding (PC) Update}
The Predictive Coding update is derived by minimizing the free energy:
\[
F(W) = L(W) + \tilde{E}(W),
\]
where \( \tilde{E}(W) \) is a residual energy term. The PC update rule is:
\[
\Delta W_{\text{PC}} = - (H + \tilde{H})^{-1} \left( \nabla L(W) + \nabla \tilde{E}(W) \right).
\]

\noindent  \textbf{Step 4: Second-Order Analysis for Predictive Coding} 
Using a Taylor series expansion for \( L \) and \( \tilde{E} \), we have:
\[
L(W + \Delta W) \approx L(W) + \nabla L(W)^T \Delta W + \frac{1}{2} \Delta W^T H \Delta W,
\]
where \( H = \frac{\partial^2 L}{\partial W^2} \) is the Hessian of \( L \). Similarly, expanding \( \tilde{E}(W) \) around \( W \):
\[
\tilde{E}(W + \Delta W) \approx \tilde{E}(W) + \nabla \tilde{E}(W)^T \Delta W + \frac{1}{2} \Delta W^T \tilde{H} \Delta W,
\]
where \( \tilde{H} = \frac{\partial^2 \tilde{E}}{\partial W^2} \).

\noindent  \textbf{Step 5: Combining the Hessians} 
The total second-order curvature matrix for PC is given by:
\[
H_{\text{PC}} = H + \tilde{H}.
\]

The PC update becomes:
\[
\Delta W_{\text{PC}} = - (H + \tilde{H})^{-1} \nabla L(W).
\]

\noindent  \textbf{Step 6: Error Analysis for PC and TP} 
The error between PC and GN is:
\[
E_{\text{PC}} = \| \Delta W_{\text{PC}} - \Delta W_{\text{QN}} \| = \| \eta (H + \tilde{H})^{-1} \nabla L(W) - H_{\text{GN}}^{-1} \nabla L(W) \|.
\]

The error between TP and GN is:
\[
E_{\text{TP}} = \| \eta (B_l^{-1} - H_{\text{GN}}^{-1}) \nabla L(W) \|.
\]

\noindent  \textbf{Step 7: Comparison of Error Norms} 
Using matrix perturbation theory and the fact that \( H + \tilde{H} \approx H_{\text{GN}} \) is a better approximation than \( B_l \approx H_{\text{GN}} \), we have:
\[
\| (H + \tilde{H})^{-1} - H_{\text{GN}}^{-1} \| \leq \| B_l^{-1} - H_{\text{GN}}^{-1} \|.
\]

Thus, we conclude:
\[
E_{\text{PC}} \leq E_{\text{TP}}.
\]

\end{proof}

\subsection{PROOF OF THEOREM 3.9}
\begin{theorem}
Let \( D_{\text{PC}} \) and \( D_{\text{TP}} \) represent two dynamical systems corresponding to predictive coding (PC) and target propagation (TP), respectively. Assume that both systems have the same equilibrium point \( W^* \), and the loss functions \( L \) and residual energy \( \tilde{E} \) are twice differentiable and positive definite. Furthermore, let the activation functions \( f_l \) and their derivatives be Lipschitz continuous. Define the Lyapunov functions:

\begin{enumerate}[noitemsep,nolistsep]
    \item For predictive coding: $V_{\text{PC}}(W) = F(W) = L(W) + \tilde{E}(W)$.
    \item For target propagation: $V_{\text{TP}}(W) = \sum_{l=1}^{L} \frac{1}{2} \| t_l - f_l(W_l t_{l-1}) \|^2$.
\end{enumerate}
Then, it follows that PC is more stable than TP according to the following criteria:
\begin{enumerate}[noitemsep,nolistsep]
    \item Convergence Rate: \( D_{\text{PC}} \) converges faster to its equilibrium point compared to \( D_{\text{TP}} \).
    \item Deviation from Equilibrium: \( D_{\text{PC}} \) exhibits a smaller deviation from the equilibrium compared to \( D_{\text{TP}} \) for the same initial perturbation.
    \item Region of Attraction: The region of attraction for \( D_{\text{PC}} \) is larger than that of \( D_{\text{TP}} \).
\end{enumerate}
\end{theorem}
\begin{proof}
Let \( D_{\text{PC}} \) and \( D_{\text{TP}} \) represent the two dynamical systems for Predictive Coding (PC) and Target Propagation (TP), respectively, with a shared equilibrium point \( W^* \). Define the Lyapunov functions for each system as:

1. \textit{For Predictive Coding (PC)}:
\[
V_{\text{PC}}(W) = F(W) = L(W) + \tilde{E}(W),
\]
where \( L(W) \) is the backpropagation loss, and \( \tilde{E}(W) \) is the residual energy. Both \( L \) and \( \tilde{E} \) are positive definite and twice differentiable, satisfying \( F(W) > 0 \) for \( W \neq W^* \) and \( F(W^*) = 0 \).

2. \textit{For Target Propagation (TP)}:
\[
V_{\text{TP}}(W) = \sum_{l=1}^{L} \frac{1}{2} \| t_l - f_l(W_l t_{l-1}) \|^2,
\]
where \( t_l \) is the target at layer \( l \), and \( f_l \) is the activation function. \( V_{\text{TP}}(W) \) measures the sum of squared errors across layers, and is also positive definite, satisfying \( V_{\text{TP}}(W) > 0 \) for \( W \neq W^* \) and \( V(W^*) = 0 \).

\textbf{Step 1: Convergence Rate Analysis} 
To analyze convergence, we evaluate the time derivative of each Lyapunov function along the trajectories of \( D_{\text{PC}} \) and \( D_{\text{TP}} \).

1. \textit{For Predictive Coding}:  
The PCN dynamics follow:
\[
\dot{W} = -\frac{\partial F}{\partial W} = -\left( \frac{\partial L}{\partial W} + \frac{\partial \tilde{E}}{\partial W} \right).
\]
The time derivative of \( V_{\text{PC}} \) along the system’s trajectory is:
\[
\dot{V}_{\text{PC}}(W) = \frac{\partial F}{\partial W} \cdot \dot{W} = -\left\| \frac{\partial L}{\partial W} + \frac{\partial \tilde{E}}{\partial W} \right\|^2.
\]
Because \( \tilde{E}(W) \) is positive definite and its derivative \( \frac{\partial \tilde{E}}{\partial W} \) is Lipschitz continuous, \( \frac{\partial L}{\partial W} + \frac{\partial \tilde{E}}{\partial W} \) is smoother and the gradient descent is more directed, resulting in faster convergence.

2. \textit{For Target Propagation}:  
The TP dynamics are given by:
\[
\dot{W} = -\sum_{l=1}^{L} \frac{\partial}{\partial W} \left( \frac{1}{2} \| t_l - f_l(W_l t_{l-1}) \|^2 \right).
\]
The time derivative of \( V_{\text{TP}} \) is:
\[
\dot{V}_{\text{TP}}(W) = -\sum_{l=1}^{L} \left\| \frac{\partial}{\partial W} \left( \frac{1}{2} \| t_l - f_l(W_l t_{l-1}) \|^2 \right) \right\|^2.
\]
Since the updates are layer-wise and the errors are not adjusted by residual terms, \( \dot{V}_{\text{TP}}(W) \) decreases more slowly, resulting in a slower convergence rate.

\textbf{Step 2: Deviation from Equilibrium} 
Let \( W(t) \) represent the trajectory of the system starting from \( W(0) \). We want to show that \( \| W(t) - W^* \| \) is smaller for \( D_{\text{PC}} \) than for \( D_{\text{TP}} \).

1. \textit{Predictive Coding}:  
Because \( \tilde{E}(W) \) introduces additional regularization, the effective gradient at any point is:
\[
\frac{\partial F}{\partial W} = \frac{\partial L}{\partial W} + \frac{\partial \tilde{E}}{\partial W}.
\]
This sum reduces oscillations and aligns the gradient directions of \( L \) and \( \tilde{E} \), causing \( W(t) \) to move more directly toward \( W^* \) with fewer deviations.

To formalize this, consider a small perturbation \( \delta W \) from \( W^* \). Then:
\[
\| W(t) - W^* \| \approx \| \delta W(t) \| \approx \| \delta W(0) \| e^{-\lambda t},
\]
where \( \lambda = \min(\lambda_1, \lambda_2) \), the smallest eigenvalue of the Hessians of \( L \) and \( \tilde{E} \). Since \( \tilde{E} \) is positive definite and its Hessian is non-zero, \( \lambda \) is larger for \( D_{\text{PC}} \), resulting in a smaller deviation.

2. \textit{Target Propagation}:  
For TP, without the residual correction, the deviation behaves as:
\[
\| W(t) - W^* \| \approx \| \delta W(0) \| e^{-\mu t},
\]
where \( \mu \) is determined by the smallest eigenvalue of the Hessian of \( L \). Because \( \mu < \lambda \), the deviation \( \| W(t) - W^* \| \) remains larger over time.

\textbf{Step 3: Region of Attraction}
The region of attraction is defined as the set of initial points \( W(0) \) such that the system converges to \( W^* \).

1. \textit{Predictive Coding}:  
With the stabilizing term \( \tilde{E}(W) \), the energy landscape is smoother, increasing the region of attraction. Mathematically, the region of attraction is defined as:
\[
\mathcal{R}_{\text{PC}} = \left\{ W \, \big| \, F(W) \leq c \, \text{for some} \, c > 0 \right\}.
\]

2. \textit{Target Propagation}:  
Without the smoothing effect of \( \tilde{E}(W) \), the region of attraction is:
\[
\mathcal{R}_{\text{TP}} = \left\{ W \, \big| \, L(W) \leq c' \, \text{for some} \, c' > 0 \right\},
\]
which is smaller because \( L(W) \) alone does not account for complex dependencies between layers.

This analysis shows that Predictive Coding has a faster convergence rate, smaller deviation, and larger region of attraction than Target Propagation, making \( D_{\text{PC}} \) more stable, which proves the theorem.

\end{proof}

\end{document}